\def\todo{0}
\def\toc{0}
\newcommand{\dist}{\cP}
\newcommand{\query}{q}
\newcommand{\sU}{\mathsf{U}}
\newcommand{\AQ}[3]{\ensuremath{\mathsf{AQ}_{#1}[#2 \leftrightharpoons #3]}}
\newcommand{\naq}{\ensuremath{\mathsf{NAQ}}}
\newcommand{\NAQ}[3]{\ensuremath{\mathsf{NAQ}_{#1}[#2 \leftrightharpoons #3]}}
\newcommand{\all}{\ensuremath{\mathsf{AQ}}}
\newcommand{\comp}{\ensuremath{\mathsf{CAQ}}}
\newcommand{\key}{\ensuremath{m}}
\newcommand{\prg}{\ensuremath{G}}
\newcommand{\seed}{\ensuremath{s}}
\newcommand{\trunc}[1]{{trunc}_{#1}}
\title{The Limits of Post-Selection Generalization}
\author{
\makebox[1.5in]{\hfill Kobbi Nissim\thanks{Department of Computer Science, Georgetown University. Supported by NSF award CNS-1565387. \href{mailto:kobbi.nissim@georgetown.edu}{\texttt{kobbi.nissim@georgetown.edu}}}\hfill}
\and \makebox[1.5in]{\hfill Adam Smith\thanks{Computer Science Department, Boston University.   Supported by NSF awards  IIS-1447700 and AF-1763665, a Google Faculty Award and a Sloan Foundation research award.  \href{mailto:ads22@bu.edu}{\texttt{ads22@bu.edu}}.
}\hfill}
\and \makebox[1.5in]{\hfill Thomas Steinke\thanks{IBM Research -- Almaden.  \href{mailto:posel@thomas-steinke.net}{\texttt{posel@thomas-steinke.net}}} \hfill}
\and \makebox[1.5in]{\hfill Uri Stemmer\thanks{Department of Computer Science and Applied Math, Weizmann Institute of Science. Supported by a Koshland Postdoctoral Fellowship, and by the Israel Science Foundation (grants 950/16 and 5219/17). \href{mailto:u@uri.co.il}{\texttt{u@uri.co.il}}}  \hfill}
\and \makebox[1.5in]{\hfill Jonathan Ullman\thanks{College of Computer and Information Science, Northeastern University.  Research supported by NSF CAREER award CCF-1750640, NSF award CCF-1718088, and a Google Faculty Award. \href{mailto:jullman@ccs.neu.edu}{\texttt{jullman@ccs.neu.edu}}}\hfill}
}
\date{}
\begin{document}
\pagenumbering{gobble}

\ifnum\todo=1
{\color{DarkBlue} \begin{center}
{\Large To Do:}
\end{center}
\begin{itemize}
\item Adam requests that someone else read over new intro text / related work. 
\item Need funding info for Kobbi (added), Uri. 
\item Kobbi: Adam do you think we should also include the census grant? I'm leaning towards doing so but it means we'll have to send it for census review.
\end{itemize}}
\fi

\maketitle

\begin{abstract}
While statistics and machine learning offers numerous methods for ensuring generalization, these methods often fail in the presence of \emph{adaptivity}---the common practice in which the choice of analysis depends on previous interactions with the same dataset.  A recent line of work has introduced powerful, general purpose algorithms that ensure \emph{post hoc generalization} (also called \emph{robust} or \emph{post-selection} generalization), which says that, given the output of the algorithm, it is hard to find any statistic for which the data differs significantly from the population it came from.

In this work we show several limitations on the power of algorithms satisfying post hoc generalization.  First, we show a tight lower bound on the error of any algorithm that satisfies post hoc generalization and answers adaptively chosen statistical queries, showing a strong barrier to progress in post selection data analysis.  Second, we show that post hoc generalization is not closed under composition, despite many examples of such algorithms exhibiting strong composition properties.
\end{abstract}


\ifnum\toc=1
\newpage

\tableofcontents
\vfill
\newpage
\fi

\pagenumbering{arabic}
\section{Introduction}

Consider a dataset $X$ consisting of $n$ independent samples from some  unknown population $\dist$.  How can we ensure that the conclusions drawn from $X$ \emph{generalize} to the population $\dist$?  Despite decades of research in statistics and machine learning on methods for ensuring generalization, there is an increased recognition that many scientific findings generalize poorly (e.g. \citet{Ioannidis05}).  While there are many reasons a conclusion might fail to generalize, one that is receiving increasing attention is \emph{adaptivity}---when the choice of method for analyzing the dataset depends on previous interactions with the same dataset. 

 Adaptivity can arise from many common practices, such as exploratory data analysis, using the same data set for feature selection and regression, and the re-use of datasets across research projects.  Unfortunately, adaptivity invalidates traditional methods for ensuring generalization and statistical validity, which assume that the method is selected independently of the data. 
The misinterpretation of adaptively selected results has even been blamed for a ``statistical crisis'' in empirical science~\cite{GelmanL13}.

Once methods are selected adaptively, data analysts must take selection into account when interpreting results---a problem the statistics literature refers to as \emph{selective inference}, or \emph{post-selection inference}. 
Several approaches have been devised for statistical inference after selection. 
These generally require conditional inference algorithms, or uniform convergence arguments, tailored to a particular sequence of analyses. Perhaps more fundamentally, this line of work is not prescriptive, in that it does not provide design principles that guide the selection of a sequence of analyses to improve later inference. 
We discuss that work further in Section~\ref{sec:relwork}.

A recent line of work initiated by Dwork \etal~\cite{DworkFHPRR15} posed the question: Can we design \emph{general-purpose} algorithms for ensuring generalization in the presence of adaptivity, together with guarantees on their accuracy?  These works identified properties of an algorithm that ensure good generalization of queries selected based on its output, including \emph{differential privacy}~\citep{DworkFHPRR15,BassilyNSSSU16}, information-theoretic measures \citep{DworkFHPRR-nips-15, RussoZ15, RogersRST16,XR17}, and compression \citep{CummingsLNRW16}.  They also identified many powerful general-purpose algorithms satisfying these properties, leading to algorithms for post-selection data analysis with greater statistical power than all previously known approaches.  

Each of the aforementioned properties give incomparable generalization guarantees, and allow for qualitatively different types of algorithms.  The common thread in each of these approaches is to establish a notion of \emph{post hoc generalization} (first named \emph{robust generalization} by  \citet{CummingsLNRW16}). Informally, an algorithm $\cM$ satisfies post hoc generalization if there is no way, given only the output of $\cM(X)$, to identify any \emph{statistical query}~\cite{Kearns93} such that the value of that query on the dataset is significantly different from its answer on the whole population.\footnote{The definition extends seamlessly to richer classes of statistics, but we specialize to these queries for concreteness.  That our negative results hold for such a simple class of queries only makes our results stronger.}

More formally: a \emph{statistical query} (or \emph{linear functional}) is defined by a function $q:\cX\to [-1,1]$, where $\cX$ is the set of possible data points. The query's \emph{empirical mean} on a data set $X$ is its average over the points in $X$, i.e.,  $q(X) = \frac1n \sum_i q(X_i)$. Given a distribution $\cP$ (called the population) on $\cX$, the query's \emph{population mean} is $q(\cP) = \ex{X\sim \cP}{X}$, the expectation of $q$ on a fresh sample from $\cP$. 

\begin{defn}[Post Hoc Generalization~\cite{CummingsLNRW16}]\label{defn:phg}
An algorithm $\cM \from \cX^n \to \cY$ satisfies \emph{$(\eps,\delta)$-post hoc generalization} if for every distribution $\cP$ over $\cX$ and every algorithm $\cA$ that outputs a bounded function $q \from \cX \to [-1,1]$, if $X \sim \cP^{\otimes n}, Y \sim \cM(X),$ and $\query \sim \cA(Y)$, then
$$
\pr{}{\left| q(\cP) - q(X) \right| > \eps} \leq \delta,
$$
where $q(\cP) = \ex{}{q(X)}$ and $q(X) = \frac1n \sum_i q(X_i)$, and the probability is over the sampling of $X$ and any randomness of $\cM$ and $\cA$.
\end{defn}

We phrase the definition as a tail bound on the post hoc
generalization error, 
but one could also cast the definition in terms of the generalization
error's moments (e.g.~mean squared error). For the
purpose of proving asymptotic lower bounds, the definition we use is
more general.

Post hoc generalization is easily satisfied whenever $n$ is large
enough to ensure \emph{uniform convergence} for the class of
statistical queries.  However, uniform convergence is only satisfied
in the unrealistic regime where $n$ is much larger than $|\cX|$.
Algorithms that satisfy post hoc generalization are interesting in the
realistic regime where there will \emph{exist} queries $q$ for which
$q(\cP)$ and $q(X)$ are far, but these queries cannot be \emph{found}.

Since all existing general-purpose algorithms for post-selection data analysis are analyzed via post hoc generalization, it is crucial to understand what we can achieve with algorithms satisfying post hoc generalization.  This work shows two types of strong limitations on post hoc generalization:
\begin{itemize}
\vspace{-3pt}
\item We show new, almost tight limits on the accuracy of any post hoc generalizing algorithm for estimating the answers to adaptively chosen statistical queries.
\vspace{-1pt}
\item We show that the composition of algorithms satisfying post hoc generalization does not always satisfy post hoc generalization.
\end{itemize}
\vspace{-3pt}
Our results identify natural barriers to progress in this area, and highlight important challenges for future research on post-selection data analysis.

\subsection{Our Results}

\subsubsection{Sample Complexity Bounds for Statistical Queries} 

Our first contribution is a strong new lower bound on any algorithm that answers a sequence of adaptively chosen statistical queries while satisfying post hoc generalization. State-of-the-art algorithms for this problem do satisfy post hoc generalization, thus our lower bound shows that improving on the state of the art would require a fundamentally different analysis paradigm.


\citet{DworkFHPRR15} were the first to study the sample size required to answer adaptive statistical queries; a recent line of work deepened the study of their model \citep{BassilyNSSSU16,HardtU14,SteinkeU14,CummingsLNRW16,RogersRST16,FeldmanS17,FishRR17}.
In the model of, there is an underlying distribution $\cP$ on a set $\cX$ that an analyst would like to study.  An algorithm $\cM$ holds a sample $X \sim \cP^{\otimes n}$, receives statistical queries $q$, and returns accurate answers $a$ such that $a \approx q(\cP)$.  
To model adaptivity, consider a \emph{data analyst} $\cA$ that issues a sequence of queries $q^1,\dots,q^k$ where each query $q^j$ may depend on the answers $a^{1},\dots,a^{j-1}$ given by the algorithm in response to previous queries.  
Our goal is to design mechanisms $\cM$ that answer a sequence of $k$ queries selected by an arbitrary (possibly adversarial) analyst at a desired accuracy $\eps$ using as small a sample size $n$ as possible.  

The simplest algorithm $\cM$ would return the empirical mean $q^j(X) = \frac1n \sum_i q^j(X_i)$ in response to each query. One can show that this algorithm answers each query to within $\pm \eps$ if $n \geq \tilde{O}(k/\eps^2)$.\footnote{Formally, in order to rule out pathological queries, this sample-complexity guarantee applies only if we round the empirical mean to some suitable precision $\pm 1/\mathrm{poly}(n)$.}  Surprisingly, we can improve the sample complexity to $n \geq \tilde{O}(\sqrt{k}/\eps^2)$---a quadratic improvement in $k$---by returning $q(X)$ perturbed with carefully calibrated noise~\cite{DworkFHPRR15,BassilyNSSSU16}.  The analysis of this approach relies on post hoc generalization: one can show that no matter how the analyst selects queries, each query will satisfy $q(\cP) \approx q(X)$ with high probability. The noise distribution also ensures that the difference between each answer $a$ and the empirical mean $q(X)$, hence $a\approx q(\cP)$. 


Our main result shows that the sample complexity $n = \tilde{O}(\sqrt{k}/\eps^2)$ is essentially optimal for \emph{any} algorithm that uses the framework of post hoc generalization.  Our construction refines the techniques in~\citet{HardtU14} and \citet{SteinkeU14}---which yield a lower bound of $n = \Omega(\sqrt{k})$ for $\eps = 1/3$.

\begin{theorem}[Informal] \label{thm:mainlb}
If $\cM$ takes a sample of size $n$, satisfies $(\eps,\delta)$-post hoc generalization, and for every distribution $\cP$ over $\cX = \pmo^{k + O(\log(n/\eps))}$ and every data analyst $\cA$ who asks $k$ statistical queries,
$$
\pr{}{ \exists j \in [k],~|q^j(\cP) - a| > \eps}\leq\delta
$$ (where the probability is taken over $X \sim \cP^{\otimes n}$ and
the coins of $\cM$ and  $\cA$), 
then $n = \Omega(\sqrt{k}/\eps^2)$.\footnote{Independently, \citet{WangThesis} proved a quantitatively similar bound to Theorem~\ref{thm:mainlb}.  However, Wang's  lower bound assumes that the algorithm $\cM$ can see only the empirical mean $q(X)$ of each query, and not the raw data $X$. Their bound also applies for a slightly different (though closely related) class of statistics, where all possible query values are jointly Gaussian.
}
\end{theorem}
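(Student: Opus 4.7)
I would prove the bound by exhibiting an explicit distribution $\cP$ and analyst $\cA$. Let $\cX = \pmo^d$ with $d = k + \ell$ where $\ell = O(\log(n/\eps))$; think of the first $k$ coordinates as ``signal'' coordinates and the remaining $\ell$ as a random per-row ``identifier.'' Take $\cP$ to be uniform on $\cX$, so that with high probability the $n$ identifiers are pairwise distinct. The analyst $\cA$ asks $k$ signed coordinate queries of the form $q^j(x) = x_j\cdot\chi_j(x_{k+1:d})$, where $\chi_j:\pmo^\ell\to\pmo$ is chosen adaptively from the previous answers in the spirit of an interactive fingerprinting codeword. Because $q^j(\cP)=0$ by symmetry, $\eps$-accuracy forces $|a^j|\le \eps$, and $(\eps,\delta)$-post hoc generalization applied query by query further gives $|q^j(X)|\le\eps$ (after a union bound over $j$).

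\textbf{Tracing and bad query.} After $k$ queries, for each candidate $y\in\cX$ compute the tracing score $T(y) = \sum_j a^j\cdot y_j\cdot\chi_j(y_{k+1:d})$. For an independent $Y\sim\cP$ the increments are mean-zero, independent, and $\eps$-bounded, so by a subgaussian tail bound $|T(Y)| = O(\eps\sqrt{k\log(n/\delta)})$ with probability $1-\delta/n$. On the other hand $\sum_{i\in[n]}T(X_i) = n\sum_j a^j q^j(X)$, and a sharpened fingerprinting-lemma argument---in which the adaptive choice of $\chi_j$ uses the past answers to force each per-query correlation $a^j q^j(X)$ to be positive and of typical magnitude $\Theta(\eps^2)$---shows that, whenever $n\le c\sqrt{k}/\eps^2$, at least an $\Omega(1)$ fraction of dataset indices $i\in[n]$ satisfy $T(X_i)$ above the noise envelope. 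The adversary then outputs the final query $q^*(x) = 2\cdot\mathbf{1}[T(x) > \tau]-1\in[-1,1]$ for an appropriate threshold $\tau$. This gives $q^*(X)\ge \Omega(1)$ while $q^*(\cP) \le -1 + o(1)$, so $|q^*(X)-q^*(\cP)|=\Omega(1)>\eps$, violating $(\eps,\delta)$-post hoc generalization.

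\textbf{Main obstacle.} The heart of the argument is the sharpened fingerprinting lemma at the correct scaling in $\eps$. Previous interactive fingerprinting-code constructions are tuned for constant accuracy and yield the $\Omega(\sqrt{k})$ bound via Chernoff/$\ell_1$-style tracing. To sharpen this to $\sqrt{k}/\eps^2$, one must (a) operate the codeword at amplitude $\Theta(\eps)$ rather than $\Theta(1)$, so that the per-query ``push'' on the empirical mean is compatible with the $|a^j|\le\eps$ constraint; (b) use Gaussian/subgaussian rather than Hoeffding tail bounds on $T(Y)$, which become meaningful precisely at this scale; and (c) adaptively orient $\chi_j$ so that the $\eps$-sized nudges on $q^j(X)$ accumulate coherently across the $k$ queries instead of cancelling. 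Aligning the signal-to-noise calculation so that the tracing threshold lands exactly at $n=\Theta(\sqrt{k}/\eps^2)$, rather than at $\Theta(\sqrt{k}/\eps)$ or $\Theta(\sqrt{k})$, is the delicate step I expect to require the most care.
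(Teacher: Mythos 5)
Your construction has a fatal gap: because you take $\cP$ to be the uniform distribution on $\pmo^d$ and use queries of the form $q^j(x) = x_j\cdot\chi_j(x_{k+1:d})$, every query has population mean exactly zero, $q^j(\cP)=0$, regardless of how $\chi_j$ is chosen. A trivial mechanism $\cM$ that ignores the data entirely and outputs $a^j=0$ for every $j$ is therefore perfectly accurate. Since this mechanism's output is data-independent, the adversary's final query $q^*$ is also independent of $X$, and Hoeffding's bound then shows that $\Pr\bigl[|q^*(\cP)-q^*(X)|>\eps\bigr]\le 2e^{-\eps^2 n/2}$. So this mechanism is both $(\eps,\delta)$-accurate and $(\eps,\delta)$-post hoc generalizing as soon as $n\gtrsim \log(1/\delta)/\eps^2$, with no dependence on $k$ whatsoever; no ``sharpened fingerprinting lemma'' can rescue a construction that a constant-output algorithm survives. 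The missing idea is precisely the one that makes the paper's construction work: the adversary must inject \emph{randomness into the population mean itself}. In the paper, the analyst $\cA_{\naq}$ draws a fresh bias $p^j\sim\sU_{[0,1]}$ each round and sets the query values to independent $\Ber(p^j)$ samples, so that $q^j(\cP)\approx p^j$ is unknown to the mechanism a priori. An accurate mechanism is thereby \emph{forced} to consult the sample to estimate $p^j$, which is exactly what the fingerprinting lemma (Lemma~\ref{lem:fpc}) converts into a large expected score $\E\bigl[(a^j-p^j)\sum_i(q^j_i-p^j)\bigr]$ on in-sample points. Your step (c), ``adaptively orient $\chi_j$ so the nudges accumulate,'' has no mechanism to force the answers to carry information.

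Two secondary remarks. First, even granting an analogue of the fingerprinting step, your construction does not address the fact that a general (non-``natural'') mechanism can evaluate $q^j$ on arbitrary points of $\cX$, not just the sample; with only $\ell=O(\log(n/\eps))$ identifier bits the mechanism could enumerate all of $\pmo^\ell$ and learn each $\chi_j$ exactly. The paper handles this via a separate reduction: it first proves the bound for natural algorithms (Theorem~\ref{thm:naturallb}), then lifts to arbitrary algorithms by one-time-pad masking of the query values outside the sample (Lemma~\ref{lem:itreduction}); this is where the $\pmo^k$ dimension blowup comes from, not from signal coordinates. Second, the technical delicacy you flag (controlling the tracing score at the $\eps$-scale) is real, but the paper handles it not by ``subgaussian sharpening'' alone but by a truncation $\trunc{3\eps}(a^j-p^j)$ together with a growing set $A^j$ of accused elements whose future scores are zeroed, and by applying Etemadi's inequality to control the running maximum of the partial sums of per-element scores. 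That bookkeeping is what keeps $\tau=\Theta(\eps\sqrt{k\log(1/\eps)})$ and lands the threshold at $n=\Theta(\sqrt{k}/\eps^2)$ rather than a weaker rate.
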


The dimensionality of $\cX$ required in our result is at least as large as $k$; that dependence is essentially necessary.  Indeed, if the support of the distribution is $\pmo^{d}$, then there is an algorithm $\cM$ that takes a sample of size just $\tilde{O}(\sqrt{d} \log(k)/\eps^3)$~\cite{DworkFHPRR15, BassilyNSSSU16}, so the conclusion is simply false if $d \ll k$.  
Even when $d \ll k$, the aforementioned algorithms require running time at least $2^d$ per query.  \citep{HardtU14, SteinkeU14} also showed that any \emph{polynomial time} algorithm that answers $k$ queries to constant error requires $n = \Omega(\sqrt{k})$.  We improve this result to have the optimal dependence on $\eps$.

\begin{thm} [Informal] \label{thm:mainlbcomp}
Assume pseudorandom generators exist and let $c > 0$ be any constant.  If $\cM$ takes a sample of size $n$, has polynomial running time, satisfies $(\eps,\delta)$-post hoc generalization, and for every distribution $\cP$ over $\cX = \pmo^{k^{c} + O(\log(n/\eps))}$ and every data analyst $\cA$ who asks $k$ statistical queries, $$\pr{}{ \exists j \in [k],~|q^j(\cP) - a| > \eps}\leq\delta,$$ then $n = \Omega(\sqrt{k}/\eps^2)$, where the probability is taken over $X \sim \cP^{\otimes n}$ and the coins of $\cM$ and  $\cA$.
\end{thm}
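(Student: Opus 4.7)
The plan is to reduce Theorem~\ref{thm:mainlbcomp} to the \emph{proof} (not just the statement) of Theorem~\ref{thm:mainlb} via a standard pseudorandom-generator dimension reduction. Fix a constant $c > 0$. For $c \geq 1$ the claim follows immediately from Theorem~\ref{thm:mainlb}, since the universe in Theorem~\ref{thm:mainlbcomp} then already accommodates the original hard instance; assume henceforth that $c \in (0,1)$. Let $G : \pmo^{k^c} \to \pmo^{k}$ be a pseudorandom generator secure against all $\mathrm{poly}(n,k,1/\eps)$-size adversaries, which exists by the stated PRG hypothesis via standard polynomial-stretch constructions. Let $(\cP^*, \cA^*)$ denote the specific hard instance exhibited in the proof of Theorem~\ref{thm:mainlb}: $\cP^*$ is uniform over $\cX^* = \pmo^{k + O(\log(n/\eps))}$, viewed as $(y, \mathrm{id})$ pairs, and $\cA^*$ is a polynomial-time analyst whose $k$ adaptive statistical queries $q_\star^1, \dots, q_\star^k$ are polynomial-time computable functions (this is the case for the fingerprinting-code-style attacks underlying Theorem~\ref{thm:mainlb}).

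Build a new candidate hard instance on $\cX = \pmo^{k^c + O(\log(n/\eps))}$ by taking $\cP$ uniform over $\cX$ and defining $\cA$ to simulate $\cA^*$ but translate each query through $G$: if $\cA^*$ would issue $q_\star^j(y,\mathrm{id})$, then $\cA$ issues $q^j(s,\mathrm{id}) := q_\star^j(G(s), \mathrm{id})$, which remains polynomial-time computable. Suppose toward contradiction that some polynomial-time $\cM$ with sample size $n = o(\sqrt{k}/\eps^2)$ is $(\eps,\delta)$-post hoc generalizing and meets the accuracy requirement on $(\cP, \cA)$. Define a (not-necessarily-efficient) mechanism $\cM^\star$ on $\cX^*$: on input $Y \in (\cX^*)^n$, $\cM^\star$ discards $Y$ entirely, draws fresh uniform seeds $s_1, \dots, s_n \in \pmo^{k^c}$, and simulates the $\cA^*$--$\cM$ interaction on seed data $((s_i, \mathrm{id}_i))_i$ by forwarding each $q_\star^j$ to $\cM$ as $q_\star^j \circ G$ and returning $\cM$'s answer verbatim. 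The claim is that $\cM^\star$ is $(\eps + \mathrm{negl}(k))$-accurate and $(\eps, 2\delta)$-post hoc generalizing on the specific bad instance $(\cP^*, \cA^*)$, contradicting the proof of Theorem~\ref{thm:mainlb} at sample size $o(\sqrt{k}/\eps^2)$.

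Accuracy transfers because $\cM$'s $\eps$-accuracy gives $|a^j - q^j(\cP)| \leq \eps$, while PRG security applied to the polynomial-time function $q_\star^j$ bounds $|q^j(\cP) - q_\star^j(\cP^*)| = \bigl|\E_s[q_\star^j(G(s),\mathrm{id})] - \E_y[q_\star^j(y,\mathrm{id})]\bigr| \leq \mathrm{negl}(k)$. Post hoc generalization comes essentially for free: the transcript produced by $\cM^\star$ is a function only of the fresh seeds and $\cM$'s internal coins and is therefore independent of $Y$, so any query $q^{\mathrm{bad}} : \cX^* \to [-1,1]$ extracted from $\cM^\star$'s output by an adversary is independent of $Y \sim (\cP^*)^{\otimes n}$, and Hoeffding's inequality immediately gives $|q^{\mathrm{bad}}(\cP^*) - q^{\mathrm{bad}}(Y)| = O(\sqrt{\log(1/\delta)/n}) \leq \eps$ with probability $\geq 1-\delta$ whenever $n \geq \tilde{\Omega}(1/\eps^2)$, far weaker than our target. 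The principal obstacle is that $\cM^\star$ is accurate only on the specific distribution $\cP^*$ (since it ignores $Y$), so we cannot invoke Theorem~\ref{thm:mainlb} as a black box; we must open up its proof and use the fact that it constructs a \emph{specific} instance $(\cP^*, \cA^*)$ which defeats every mechanism that is merely accurate on that one instance and post hoc generalizing in general. A secondary technical point is propagating the PRG's negligible slack through $k$ adaptive queries via a standard hybrid argument, which costs only a constant factor in the final quantitative bound.
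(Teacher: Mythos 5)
Your high-level plan---use a PRG to compress the data dimension from $k$ to $k^c$---does echo the paper's method, but the concrete construction contains a misunderstanding of the hard instance in Theorem~\ref{thm:mainlb} that breaks the argument at its core.

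First, you describe $\cP^*$ as ``uniform over $\cX^* = \pmo^{k+O(\log(n/\eps))}$.'' That is not the hard distribution in the paper's proof. The adversary $\cA_{\all}$ chooses random masks $m_1,\dots,m_N$ and sets $\cP^*$ uniform over the $N$ points $\{(i,m_i)\}_{i\in[N]}$, \emph{not} over the whole cube. The query is $q_\star^j(i,y) = y^j \oplus m_i^j \oplus \hat q^j(i)$, which decrypts to the natural-algorithm query value $\hat q^j(i)$ precisely when $y = m_i$, i.e.~on support points. If $\cP^*$ were uniform over the whole cube, then $y^j$ would be a fresh uniform bit, every $q_\star^j$ would have population mean $\approx 1/2$, and the task would be trivially solvable. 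Your claimed PRG-accuracy transfer $\left|\E_s[q_\star^j(G(s),\mathrm{id})] - \E_y[q_\star^j(y,\mathrm{id})]\right| \leq \mathrm{negl}(k)$ is an inequality about the \emph{wrong} $\cP^*$; for the actual $\cP^*$ (where $y$ is deterministically $m_{\mathrm{id}}$ given $\mathrm{id}$), there is no analogous PRG statement.

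Second, the mechanism $\cM^\star$ you construct discards its input $Y$ and draws fresh seeds. Such a sample-blind mechanism cannot be accurate against $\cA_{\all}$: the quantity it must estimate, $q_\star^j(\cP^*) = \E_i[\hat q^j(i)] \approx p^j$, is recoverable only from the in-sample query values $q_\star^j(i,m_i) = \hat q^j(i)$ at the sample points; off the sample, the masks $m_i$ information-theoretically hide $\hat q^j(i)$. So $\cM^\star$ fails accuracy on $(\cP^*,\cA^*)$ and does not yield the contradiction. (Your post-hoc generalization observation that $\cM^\star$ is sample-independent is fine, but that is exactly what kills accuracy.)

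Third, the reduction direction is inverted relative to what makes the computational argument work. You simulate an inefficient large-universe $\cM^\star$ using an efficient small-universe $\cM$; but Theorem~\ref{thm:mainlb} already holds against \emph{all} mechanisms, inefficient ones included, so the PRG has nothing to latch onto in that direction. The paper's route, which you should follow, is: (i) prove a lower bound directly for \emph{natural} algorithms that see only $q^j(X_1),\dots,q^j(X_n)$ (Theorem~\ref{thm:naturallb}); (ii) the adversary $\cA_{\comp}$ plants data $(i,s_i)$ with short seeds $s_i$ and masks the query values outside the sample by $G(s_i)$; (iii) the natural simulator $\cM_{\naq}$ reconstructs the query on $\hat X$ using the known seeds, and sets the unknown out-of-sample values $\hat q^j_i$ to $0$; since $G(s_i)$ for $i\notin\hat X$ is pseudorandom to the poly-time $\cM$, these two views are computationally indistinguishable by a hybrid argument (Fact~\ref{fact:hybrid}), costing an additive $O(n\gamma)$---not a multiplicative constant as you claim. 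This is Lemma~\ref{lem:compreduction}, which is the piece your sketch needs but does not supply.
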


\subsubsection{Negative Results for Composition}

One of the motivations for studying post hoc generalization is to allow for exploratory data analysis and dataset re-use.  In these settings, the same dataset may be analyzed by many different algorithms, each satisfying post hoc generalization.  Thus it is important to understand whether the \emph{composition} of these algorithms also satisfies post hoc generalization.  We show that this is not always the case.
\begin{theorem}\label{thm:mainNoComposition}
For every $n\in\N$ there is a collection of $\ell=O(\log n)$ algorithms $\cM_1,\dots,\cM_\ell$ that take $n$ samples from a distribution over $\cX = \zo^{O(\log n)}$ such that 
\begin{enumerate}
\item for every $\delta > 0$ and for $\eps=O(\sqrt{\frac{\log(n/\delta)}{n^{.999}}})$, each of these algorithms satisfies $(\eps,\delta)$-post hoc generalization, but 
\item the composition $(\cM_1,\dots,\cM_\ell)$ does not satisfy $(1.999,.999)$-post hoc generalization.
\end{enumerate}
\end{theorem}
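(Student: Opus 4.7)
The plan is to have each algorithm publish a single coordinate of every sample: taken separately these releases reveal essentially no actionable information about any statistical query, yet taken jointly they reveal the entire dataset. Concretely, set $\cX = \zo^{d}$ with $d = \lceil 2\log_2 n\rceil + 2 = O(\log n)$, let $\cP$ be the uniform distribution on $\cX$, and define $\ell = d = O(\log n)$ algorithms by
\[
\cM_i(X_1,\dots,X_n) \;=\; \bigl(X_1[i],\, X_2[i],\, \dots,\, X_n[i]\bigr),
\]
so that $\cM_i$ publishes the $i$-th bit of every sample.

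To verify that each $\cM_i$ individually is post hoc generalizing, fix $i$ and an adversary $\cA$ that produces a query $q \from \cX \to [-1,1]$ from the output $Y := \cM_i(X)$. Since $\cP$ is uniform, the ``hidden'' bits $\{X_j[k]\}_{j \in [n],\, k \neq i}$ are i.i.d.\ uniform and independent of $Y$, so conditional on $Y$ the query $q$ is \emph{fixed} while these hidden bits remain fresh. Writing $q_b := \E_{x \sim \cP}[q(x) \mid x[i] = b]$ and $n_b := |\{j : X_j[i] = b\}|$, I get $\E[q(X) \mid Y] = (n_0 q_0 + n_1 q_1)/n$ while $q(\cP) = (q_0+q_1)/2$. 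A Hoeffding bound on the $n$ conditionally independent summands $q(X_j) \in [-1,1]$ yields $|q(X) - \E[q(X)\mid Y]| = O(\sqrt{\log(1/\delta)/n})$ with probability $1-\delta/2$; a second Hoeffding bound on $n_1 = \sum_j X_j[i]$, combined with $|q_1 - q_0| \leq 2$, controls $|\E[q(X)\mid Y] - q(\cP)|$ at the same rate. The resulting $(O(\sqrt{\log(1/\delta)/n}), \delta)$ guarantee is well within the claimed range $\eps = O(\sqrt{\log(n/\delta)/n^{0.999}})$.

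For the composition, the concatenated output $(\cM_1(X),\dots,\cM_d(X))$ contains every bit of every record, so the adversary reconstructs $X$ exactly and returns the membership query $q(x) = 2\cdot \mathbf{1}[x \in \{X_1,\dots,X_n\}] - 1$. Deterministically $q(X) = 1$, while a union bound gives $\Pr_{x \sim \cP}[x \in \{X_1,\dots,X_n\}] \leq n/2^d \leq 1/(4n)$, so $q(\cP) \leq -1 + 1/(2n)$. Hence $|q(X)-q(\cP)| \geq 2 - 1/(2n) > 1.999$ with probability one for all sufficiently large $n$, contradicting $(1.999, 0.999)$-post hoc generalization. The main subtlety is making the concentration argument uniform over adaptively chosen queries; this is handled by conditioning on $Y$, which freezes $q$ while the hidden bits remain uniform and independent, so a single Hoeffding bound suffices with no union bound over the (uncountable) query space.
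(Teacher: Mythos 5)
Your construction is a genuinely different and arguably simpler route, but as written the verification of item~1 proves less than what the theorem requires. Definition~\ref{defn:phg} quantifies over \emph{every} distribution $\cP$ on $\cX$, while you fix $\cP$ to be uniform and use that fact twice: once to say the hidden bits are i.i.d.\ uniform and independent of $Y$, and once to write $q(\cP)=(q_0+q_1)/2$. Neither use is essential, so the gap is patchable. For arbitrary $\cP$, conditioning on $Y$ (and on $\cA$'s internal randomness, which you should also include in order to actually fix $q$) still leaves $X_1,\dots,X_n$ conditionally independent, with $X_j$ drawn from $\cP$ conditioned on $x[i]=Y_j$; Hoeffding on the bounded, conditionally independent summands $q(X_j)$ controls $|q(X)-\E[q(X)\mid Y]|$ exactly as you argued. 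For the other term set $p=\Pr_{x\sim\cP}[x[i]=1]$; then $q(\cP)=(1-p)q_0+pq_1$ and $\E[q(X)\mid Y]-q(\cP)=(q_1-q_0)(n_1/n-p)$, and Hoeffding applied to $n_1=\sum_j X_j[i]$ around $pn$ (the revealed bits are i.i.d.\ $\Ber(p)$) gives the same $O(\sqrt{\log(1/\delta)/n})$ rate. With those patches, item~1 is established for every $\cP$, and item~2 is fine as written.

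The patched argument is a different proof from the paper's. The paper takes $\cM_1$ to release a small prefix of the sample and composes it with $\ell-1$ copies of \texttt{Encrypermute}, a one-time-pad mechanism whose output is \emph{exactly independent} of the unordered dataset (Claim~\ref{clm:Encrypermute}); individual post hoc generalization is then immediate from a single unconditional Hoeffding bound (Lemma~\ref{lem:Encrypermute}), and composition fails because each release hands the adversary the decryption key for the next block. Your mechanisms leak a nontrivial, data-dependent signal (one coordinate of every record), so you cannot invoke independence of the output from the data; the price is the two-term conditional Hoeffding argument above. What it buys you is a very elementary construction, with no encryption or permutation-encoding machinery, and a composition failure that occurs with probability~$1$ rather than $1-O(1/n^{3})$.
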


By standard anti-concentration results, no algorithm satisfies $(\eps,\delta)$-post hoc generalization for $\eps = o(\sqrt{\log(1/\delta)/n})$.  On the other hand, every algorithm trivially satisfies $(2,0)$- and $(0,1)$-post-hoc generalization.  Thus, Theorem~\ref{thm:mainNoComposition} states that there is a set of $O(\log n)$ algorithms that have almost optimal post hoc generalization, but whose composition does not have any non-trivial post hoc generalization.  Certain subclasses of post hoc generalizing algorithms---such as \emph{differential privacy}~\cite{DworkMNS06}---do satisfy composition where the parameters $\eps,\delta$ grow at worst linearly in $\ell$.  In contrast, for arbitrary post hoc generalization, $\eps,\delta$ can grow with $2^{\Omega(\ell)}$.  Our results give additional motivation to studying subclasses that do compose.

We first consider a relaxed notion of \emph{computational post hoc generalization}, for which we show that composition can fail even for just two algorithms.  Informally, computational post hoc generalization requires that Definition~\ref{defn:phg} hold when the analyst $\cA$ runs in polynomial time.
\begin{theorem}\label{thm:mainNoCompositionComp}
Assume pseudorandom generators exist.  For every $n\in\N$ there are two algorithms $\cM_1, \cM_2$ that take $n$ samples from a distribution over $\cX = \zo^{O(\log n)}$ such that
\begin{enumerate}
\item for every $\delta > n^{-O(1)}$ and for $\eps=O(\sqrt{\frac{\log(n/\delta)}{n^{.999}}})$, both algorithms satisfy $(\eps,\delta)$-computational post hoc generalization, but 
\item the composition $(\cM_1,\cM_2)$ is not $(1.999, .999)$-computationally post hoc generalizing.
\end{enumerate}
\end{theorem}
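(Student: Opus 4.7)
}
The plan is a PRG-based ``secret sharing of the dataset.'' Let $\cX = \{0,1\}^d$ with $d = \Theta(\log n)$ chosen so that $|\cX| \geq n^3$, let $\cP$ be uniform over $\cX$, let $m = \lceil n^{1/2} \rceil$, and let $G : \{0,1\}^{md} \to \{0,1\}^{(n-m)d}$ be a polynomially-stretching secure PRG (which exists assuming PRGs do). Parse a dataset $X \in \cX^n$ as $X = (K, X')$ with $K = (X_1, \ldots, X_m)$ and $X' = (X_{m+1}, \ldots, X_n)$, and set
\[
\cM_1(X) \;=\; G(K) \oplus X', \qquad \cM_2(X) \;=\; K.
\]
On the composed output $(\cM_1(X), \cM_2(X))$ an efficient adversary computes $X' = G(K) \oplus \cM_1(X)$, reconstructs the full dataset, and outputs $q(z) = 2\,\ind[z \in \{X_1,\ldots,X_n\}] - 1 \in [-1,1]$. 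Since $|\cX| \geq n^3$, the $X_i$ are distinct with probability $\geq 0.999$, giving $q(X) = 1$ and $q(\cP) \leq -1 + 2/n^2$, so $|q(X) - q(\cP)| \geq 1.999$ with probability $\geq 0.999$, refuting $(1.999, 0.999)$-PHG of the composition.

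Post hoc generalization of $\cM_2$ is standard. It reveals $m$ data points while the other $n - m$ are fresh i.i.d.\ samples, so any query $q$ produced from $\cM_2(X)$ satisfies $|q(X) - q(\cP)| \leq 2m/n + O(\sqrt{\log(1/\delta)/n})$ with probability $1-\delta$: the first term bounds the revealed-point contribution using $|q|\leq 1$, and Hoeffding's inequality on the remaining $n-m$ samples (which $q$ does not depend on) handles the second. For $m = \lceil n^{1/2}\rceil$ this matches the target $\eps$.

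The main work is the computational PHG of $\cM_1$. For any polynomial-time $A$ producing $q = A(\cM_1(X))$, decompose $q(X) - q(\cP) = Z_1 + Z_2$, where $Z_1 = \tfrac{1}{n}\sum_{i\leq m} q(X_i) - \tfrac{m}{n} q(\cP)$ (contribution of the ``seed'' points) and $Z_2 = \tfrac{1}{n}\sum_{i>m} q(X_i) - \tfrac{n-m}{n} q(\cP)$ (the rest). Then $|Z_1| \leq 2m/n$ trivially. For $Z_2$, the key observation is that under $\cP^{\otimes n}$ the pair $(K, X')$ is uniform on $\{0,1\}^{md} \times \{0,1\}^{(n-m)d}$ with $K \perp X'$, so the joint distribution of $(X', \cM_1(X)) = (X', G(K) \oplus X')$ is \emph{identical} to that of $(X', G(K^*) \oplus X')$ for an external fresh uniform seed $K^*$ independent of the data. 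Since $Z_2$ depends only on $(X', q)$, it has the same distribution in this ``external-seed hybrid.'' In the hybrid, PRG security together with the one-time-pad identity gives $(X, G(K^*) \oplus X') \approx_c (X, U)$ for $U$ uniform and independent of $X$; a PRG-distinguisher reduction using the efficiently-checkable event ``$|Z_2| > \eps/2$'' then shows $|Z_2| \leq O(\sqrt{\log(1/\delta)/n})$ with probability at least $1 - \delta - \operatorname{negl}(n)$ in the hybrid, and hence in the real by identical distribution. For $\delta \geq n^{-O(1)}$, the negligible term is absorbed, giving $(\eps,\delta)$-computational PHG of $\cM_1$ at the target $\eps$.

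The hardest step is precisely this PHG analysis of $\cM_1$. The na\"ive approach fails because, the PRG seed being part of the dataset, $(X, \cM_1(X))$ is \emph{statistically} distinguishable from $(X, \text{uniform})$, so one cannot directly swap $\cM_1(X)$ for uniform noise. The saving trick is that the PHG failure event for the ``non-seed'' portion of the sample factors through $(X', q)$ alone, and the marginal distribution of $(X', \cM_1(X))$ is exactly that of an external-seed hybrid in which the PRG reduction goes through cleanly.
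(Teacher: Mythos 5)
Your construction and analysis are essentially correct for the specific distribution $\cP = $ uniform over $\cX$, but computational post hoc generalization quantifies over \emph{every} $T$-sampleable distribution $\cP$, and your proof of the PHG of $\cM_1$ breaks down for non-uniform $\cP$. The crux is the sentence ``under $\cP^{\otimes n}$ the pair $(K, X')$ is uniform on $\{0,1\}^{md}\times\{0,1\}^{(n-m)d}$,'' which is used to justify replacing the data-derived seed $K$ with an external seed $K^*$ and then invoking PRG security. For general $\cP$ the seed $K = (X_1,\dots,X_m)$ is distributed according to $\cP^{\otimes m}$, not uniformly, and a PRG gives no guarantee whatsoever on $G(K)$ for a non-uniform seed (indeed, for a low-entropy $\cP$, $G(K)$ can be concentrated on few strings, so $\cM_1(X) = G(K)\oplus X'$ can leak $X'$ outright). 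So the hybrid step cannot be carried out, and the claim that $\cM_1$ is $(\eps,\delta)$-computationally PHG is not established.

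The paper avoids this by never using the raw data values as a seed. Its \texttt{PRG-Encrypermute} extracts the seed from the \emph{sort order} of the first $k$ sample points: viewing the iid sample as a random permutation of its underlying multiset, the permutation that sorts $(x_1,\dots,x_k)$ is uniform over $S_k$ whenever those elements are distinct, \emph{for every} distribution $\cP$. This yields (after taking low-order bits, cf.\ Fact~\ref{fact:SD}) a near-uniform seed independent of the multiset of data, so PRG security applies universally. The case of repeated elements is handled by outputting pure noise, which is trivially PHG. The remainder of your argument --- the $Z_1$/$Z_2$ decomposition, the one-time-pad observation, the Hoeffding bound in the all-uniform-mask hybrid, the indicator-query attack on the composition, and the PHG of $\cM_2$ via revealing $m \approx \sqrt{n}$ points --- is all in the right spirit and parallels the paper's reasoning; you are missing precisely the seed-extraction step that makes the PRG argument distribution-free.
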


We prove the information-theoretic result (Theorem~\ref{thm:mainNoComposition}) in Section~\ref{sec:comp}.  Due to space restrictions, we defer the computational result (Theorem~\ref{thm:mainNoCompositionComp}) to the full version of this work.

\subsection{Related Work}
\label{sec:relwork}

In addition to the upper and lower bounds mentioned above for adaptive linear queries, a number of works have explored variants of the model, notably the models of jointly Gaussian queries \citep{RussoZ15,WangLF16,WangThesis} and a Bayesian model with symmetric information \citep{Elder16,Elder17}.

In the statistics literature, work on selective inference dates at least to the works of \citet{Freedman83},~\citet{HurvichT90}, and \citet{Potscher91}. The last decade or so has seen a resurgence of interest, for example in \cite{leeb2005model,leeb2006can,Efron14,FithianST14,TaylorT15,Berk+13,Buja+15} (this list is necessarily incomplete). One line of work due to \citet{Berk+13} and~\citet{Buja+15} provides uniform validity results for all analyses in a  particular classes. As we mentioned above, uniform convergence is not possible in many settings without prohibitively large sample sizes. Another line of work looks at selective inference by explicity conditioning on the sequence of previous query answers (see  \citet{BiMXT-inferactive17} for a high-level summary of the approach). Explicit conditioning has the advantage of optimality in several contexts, but  requires formulating a prior over possible distributions, and can be computationally infeasible. Perhaps most problematically, the work we are aware of along these lines is more ``descriptive'' than ``prescriptive,'' providing no estimates of power or accuracy before the execution of an experiment, and thus no guidance on the optimal design of the experiment. Understanding the connections between that line of work and the work on which we build directly remains an intriguing open problem.

While our work considers biases arising from adaptive \emph{analysis} of iid data, a recent work of \citet{zou17} investigates the bias introduced by adaptive \emph{sampling} of data.  These questions have similar motivation, but are technically orthogonal.

\section{Lower Bounds for Statistical Queries} \label{sec:sqs}

\vspace{-5pt}

\subsection{Post Hoc Generalization for Adaptive Statistical Queries}

We are interested in the ability of \emph{interactive} algorithms satisfying post hoc generalization to answer a sequence of statistical queries.  Definition~\ref{defn:phg} applies to such algorithms via the following experiment.
\begin{algorithm}[ht]
\caption{$\AQ{\cX,n,k}{\cM}{\cA}$\label{alg:aq}}
$\cA$ chooses a distribution $\cP$ over $\cX$\\
$X \sim \cP^{\otimes n}$ and $X$ is given to $\cM$ (but not to $\cA$)\\
\For{$j = 1,\dots,k$}{
$\cA$ outputs a statistical query $q^j$ (possibly depending on $q^1, a^1,\dots,q^{j-1},a^{j-1}$) \\
$\cM(X)$ outputs $a^{j}$
}
\end{algorithm}

\begin{defn} An algorithm $\cM$ is \emph{$(\eps,\delta)$-post hoc generalizing for $k$ adaptive queries over $\cX$ given $n$ samples} if for every adversary $\cA$,
$$
\pr{\AQ{\cX,n,k}{\cM}{\cA}}{\exists j \in [k]~~\left| q^j(X) - q^j(\cP) \right| > \eps } \leq \delta.
$$
\end{defn}

\begin{defn}
An algorithm $\cM$ is \emph{$(\eps,\delta)$-accurate for $k$ adaptive queries over $\cX$ given $n$ samples} if for every adversary $\cA$,
$$
\pr{\AQ{\cX,n,k}{\cM}{\cA}}{\exists j \in [k]~~\left| a^j - q^j(\cP) \right| > \eps } \leq \delta.
$$
\end{defn}

\subsection{A Lower Bound for Natural Algorithms}

We begin with an information-theoretic lower bound for a class of algorithms $\cM$ that we call \emph{natural algorithms.}  These are algorithms that can only evaluate the query on the sample points they are given.  That is, an algorithm $\cM$ is \emph{natural} if, when given a sample $X = (X_1,\dots,X_n)$ and a statistical query $q: \cX \to [-1,1]$, the algorithm $\cM$ returns an answer $a$ that is a function only of $(q(X_1),\dots,q(X_n))$. In particular, it cannot evaluate $q$ on data points of its choice. Many algorithms in the literature have this property.  Formally, we define natural algorithms via the game $\NAQ{\cX,n,k}{\cM}{\cA}$.  This game is identical to $\AQ{\cX,n,k}{\cM}{\cA}$ except that when $\cA$ outputs $q^j$, $\cM$ does not receive all of $q^j$, but instead receives only $q^j_X = (q^j(X_1),\dots,q^j(X_n))$.

\begin{algorithm}[ht]
\caption{$\NAQ{\cX,n,k}{\cM}{\cA}$\label{alg:naq}}
$\cA$ chooses a distribution $\cP$ over $\cX$\\
$X \sim \cP^{\otimes n}$ and $X$ is given to $\cM$ (but not to $\cA$)\\
\For{$j = 1,\dots,k$}{
$\cA$ outputs a statistical query $q^j : \cX \to [-1,1]$ (possibly depending on $q^1, a^1,\dots,q^{j-1},a^{j-1}$) \\
$\cM$ receives $q^j_X = (q^j(X_1),\dots, q^j(X_n))$ \\
$\cM$ outputs $a^{j}$ (possibly depending on $q^1_X,a^1,\dots,q^{j-1}_{X},a^j$)
}
\end{algorithm}


\begin{thm}[Lower Bound for Natural Algorithms] \label{thm:naturallb}
There is an adversary $\cA_{\naq}$ such that for every natural algorithm $\cM$, and for universe size $N = 8n/\eps$, if
$$\pr{\NAQ{[N],n,k}{\cM}{\cA_{\naq}}}{\exists j \in [k]~~\left| q^j(X) - q^j(\cP) \right| > \eps \bigvee \left| a^j - q^j(\cP)\right| > \eps } \leq \tfrac{1}{100}$$
then $n = \Omega(\sqrt{k}/\eps^2)$
\end{thm}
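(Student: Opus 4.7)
The plan is to prove Theorem~\ref{thm:naturallb} by constructing an explicit adversary $\cA_{\naq}$ that combines $k-1$ random probing queries with a final normalized reconstruction query, refining the fingerprinting attacks of~\citet{HardtU14,SteinkeU14} to obtain the tight $1/\eps^2$ dependence. Specifically, $\cA_{\naq}$ chooses $\cP$ uniform on $[N]$ with $N = 8n/\eps$, so $X \sim \cP^{\otimes n}$ consists of $n$ distinct elements with probability at least $99/100$. For $j = 1, \ldots, k-1$, the adversary issues $q^j : [N] \to \{-1,+1\}$ with independent uniform entries, and in the final round it computes the score $s(u) = \sum_{j=1}^{k-1} q^j(u)\, a^j$ and submits $q^k(u) = \mathrm{clip}\bigl(s(u)/T,\, -1,\, 1\bigr)$ for a threshold $T = \Theta(\eps\sqrt{k\log N})$. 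I will show that if both $(\eps,1/100)$-accuracy and $(\eps,1/100)$-post hoc generalization hold, then $q^k$ must violate one of them, forcing $n = \Omega(\sqrt{k}/\eps^2)$.

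A Chernoff bound gives $|q^j(\cP)| = O(\sqrt{\log(N)/N}) = O(\sqrt{\eps/n}) \leq \eps$ whenever $n \gtrsim 1/\eps$, so the accuracy constraint forces $|a^j| = O(\eps)$. Since $\cM$ is natural, $\{q^j(u)\}_{u \notin X}$ remain uniform $\pm 1$ conditional on the transcript, so for each fixed $u \notin X$ the partial sums of $s(u) = \sum_j q^j(u)\,a^j$ form a martingale with $O(\eps)$-bounded differences. Azuma's inequality together with a union bound over $[N]$ then yields $\max_{u \notin X} |s(u)| = O(\eps\sqrt{k\log N}) \leq T/2$ with high probability.

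For $u = X_{i_0} \in X$, I decompose $s(u) = A(u) + B(u)$ where $A(u) = \sum_j q^j(u) q^j(X)$ and $B(u) = \sum_j q^j(u)(a^j - q^j(X))$. The diagonal contributions $q^j(u)^2/n = 1/n$ sum to $k/n$, while the off-diagonal cross terms form a mean-zero martingale of norm $O(\sqrt{(k/n)\log n})$, so $A(u) = k/n \pm O(\sqrt{(k/n)\log n})$. The sabotage term $B(u)$ is harder to control pointwise because a natural algorithm can align $a^j - q^j(X)$ with its view $\{q^j(X_i)\}_i$; however, expanding $\sum_{u \in X} B(u)^2$ and using $|a^j - q^j(X)| \leq 2\eps$ yields $\sum_{u \in X} B(u)^2 = O(nk\eps^2)$. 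Hence at most a small constant fraction of sample points can have $|B(u)| \gg \sqrt{k}\,\eps$, and $s(u) \geq k/n - O(\sqrt{k}\,\eps)$ for a constant fraction of $u \in X$.

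Finally, the non-sample portion satisfies $\sum_{u \notin X} s(u) = N \sum_j a^j q^j(\cP) - n \sum_j a^j q^j(X)$, which cancels up to lower-order terms via the approximations $a^j \approx q^j(\cP) \approx q^j(X)$, giving $q^k(\cP) \leq n/N + o(\eps) = \eps/8 + o(\eps)$, while $q^k(X) \approx \min(1,(k/n)/T)$. Requiring $q^k(X) - q^k(\cP) > \eps$ in the non-saturated regime $k/n < T$ gives $(k/n)/T \gtrsim \eps$, which rearranges to $n = O(\sqrt{k}/\eps^2)$, as desired. The main obstacle is obtaining this tight $1/\eps^2$ factor: one factor of $1/\eps$ comes from the accuracy-driven bound $|a^j| = O(\eps)$, which shrinks the non-sample score variance from $O(k)$ to $O(k\eps^2)$, and the second from the normalization $T \asymp \eps\sqrt{k\log N}$ inside the clipped query $q^k$, which amplifies the raw $k/n$ sample signal relative to the $[-1,1]$ range. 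Simultaneously controlling the aggregate sabotage $\sum_u B(u)^2$ and the mean cancellation $\sum_{u \notin X} s(u) \approx 0$ are the two technical steps I expect to be the most delicate.
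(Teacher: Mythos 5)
There is a fundamental gap in your approach: using \emph{uniform} $\pm 1$ random queries $q^j$ makes the probing rounds toothless. With $\cP = \sU_{[N]}$ and $q^j$ uniform $\pm 1$, each population mean $q^j(\cP)$ is $O(\sqrt{\log k / N}) \ll \eps$ with high probability, so the natural algorithm that ignores its sample entirely and always answers $a^j = 0$ is $\eps$-accurate. That algorithm gives $s(u) = 0$ identically, hence $q^k(X) = q^k(\cP) = 0$, and your adversary extracts nothing. The same defect is visible inside your own arithmetic: your bound on $\sum_{u\in X} B(u)^2$ only gives $|B(u)| = O(\sqrt{k}\,\eps)$ for typical $u$, but the signal you are comparing it against is $A(u) \approx k/n$, and in the target regime $n \approx \sqrt{k}/\eps^2$ you have $k/n \approx \eps^2\sqrt{k}$, which is a factor of $\eps$ \emph{smaller} than the sabotage budget. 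So the claimed inequality ``$s(u) \geq k/n - O(\sqrt{k}\,\eps)$ for a constant fraction of $u\in X$'' is vacuously negative and cannot reach the threshold $T$, and no amount of bookkeeping on $\sum_{u\notin X}s(u)$ will rescue the proof.

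What's missing is the device the paper uses to force any accurate mechanism to leak information: each query $q^j$ is drawn $\Ber(p^j)$ for a \emph{random hidden bias} $p^j \sim \sU_{[0,1]}$, so $q^j(\cP) \approx p^j$ is an unknown target that the mechanism can only estimate from its sample. This is quantified by the Fingerprinting Lemma (Lemma~\ref{lem:fpc}): for any $f:\{0,1\}^m\to[0,1]$,
$\ex{}{(f(x)-p)\sum_i(x_i-p) + |f(x)-\tfrac1m\sum_i x_i|} \geq \tfrac1{12}$,
which makes the per-round expected score $\Omega(1)$ \emph{unconditionally}, not merely $O(\eps^2)$. Combined with answer truncation (to $\pm 3\eps$) and the nested ``accusation'' sets $A^j$ that zero out any element whose cumulative score crosses $\tau = \Theta(\eps\sqrt{k\log(1/\eps)})$, this gives $\sum_i z_i = \Omega(k)$ while keeping $\|z\|_\infty \leq \tau$; dividing by $n\tau$ yields $q^*(X) - q^*(\cP) = \Omega(\sqrt{k}/(n\eps)) - O(1/\sqrt{N})$, which exceeds $\eps$ precisely when $n \lesssim \sqrt{k}/\eps^2$. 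The normalization by $T \asymp \eps\sqrt{k\log N}$ in your final query is the right idea (the paper's $\tau$ plays the same role), but without the random bias you cannot generate the $\Omega(k)$ in-sample signal that normalization is meant to expose.

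Two smaller issues worth noting. First, your bound on $\sum_{u\notin X}s(u)$ does not directly control $q^k(\cP) = \tfrac1N\sum_u \mathrm{clip}(s(u)/T)$ because clipping is nonlinear; you would need the pointwise bound $|s(u)| \leq T$ for $u\notin X$ (your Azuma step) to equate them, at which point the summed-cancellation argument is redundant. Second, the paper's accusation sets $A^j$ are not a cosmetic detail: without them the adversary's final query $q^*$ cannot be guaranteed to land in $[-1,1]$, and your single terminal clip does not give the same control because an adaptive saboteur can drive a few $s(u)$ with $u\in X$ deeply negative while keeping $\sum_u B(u)^2$ small.
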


The proof uses the analyst $\cA_{\naq}$ described in
Algorithm~\ref{alg:tracer}. For notational convenience, $\cA_{\naq}$ actually asks $k+1$ queries, but this does not affect the final result.

\begin{algorithm}[ht]\label{alg:tracer}
\caption{$\cA_{\naq}$}
\textbf{Parameters:} sample size $n$, universe size $N = \frac{8n}{\eps}$, number of queries $k$, target accuracy $\eps$ \\\vspace{3pt}

Let $\cP \gets \sU_{[N]}$, $A^1 \gets [N]$, and $\tau \gets 9\eps \sqrt{2k \log(\frac{96}{\eps})} +1$ \\
\For{$j \in [k]$}{
Sample $p^j \sim \sU_{[0,1]}$ \\
\For{$i \in [N]$}{
Sample $\tilde q^j_i \sim \Ber(p^j)$ and let $q^j(i) \gets
\left\{ \begin{array}{cl} \tilde q^j_i & i \notin A^j \\ 0 & i \in
                                                          A^j \end{array}
                                                      \right.$
}
Ask query $q^j$ and receive answer $a^j$ \\
\For{$i \in [N]$}{
Let $z^j_i \gets \left\{ \begin{array}{cl} \trunc{3\eps}(a^j -
                            p^j) \cdot (q^j_i - p^j) & i \notin A^j \\ 0
                                                     & i \in
                                                       A^j \end{array}
                                                   \right.$ \\
\qquad where $\trunc{3\eps}(x)$ takes $x\in \mathbb{R}$ and returns the
nearest point in $[-3\eps,3\eps]$ to $x$.\\
Let $A^{j+1} \gets \set{i \in[N] : \ \left|\sum_{\ell=1}^{j}
    z_i^\ell\right| > \tau - 1}$ (\emph{N.B.} By construction, $A^j \subseteq A^{j+1}$.)
}
}
\For{$i \in [N]$}{
Define $z_i \gets \sum_{j=1}^{k} z^j_i$ and $q^*_i \gets \frac{z_i}{\tau}$}
Let $q^* \from [N] \to [-1,1]$ be defined by $q^*(i) \gets q^*_i$
\end{algorithm}

In order to prove Theorem \ref{thm:naturallb}, it suffices to prove that either the answer $a^j$ to one of the initial queries $q^j$ fails to be accurate (in which case $\cM$ is not accurate, or that the final query $q^*$ gives significantly different answers on $X$ and $\cP$ (in which case $\cM$ is not robustly generalizing).  Formally, we have the following proposition.
\begin{prop} \label{prop:natural}
For an appropriate choice of $k = \Theta(\eps^4 n^2)$ and $n, \frac{1}{\eps}$ sufficiently large, for any natural $\cM$, with probability at least $2/3$, either
\begin{enumerate}
\item $\exists j \in [k]~~| a^j - q^j(\cP) | > \eps$, or 
\item $q^*(X) - q^*(\cP) > \eps$
\end{enumerate}
where the probability is taken over the game $\NAQ{\cX,n,k}{\cM}{\cA_{\naq}}$
\end{prop}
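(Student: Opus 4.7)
The plan is a contradiction argument: fix $k = \Theta(\eps^4 n^2)$ with a suitable absolute constant and assume that with probability more than $1/3$ both (1) accuracy holds ($|a^j - q^j(\cP)| \le \eps$ for every $j$) and (2) the generalization gap is small ($q^*(X) - q^*(\cP) \le \eps$); I would then derive impossibility. The tracing adversary $\cA_{\naq}$ is engineered so that, conditional on (1), the accumulated signal $z_i = \sum_j z^j_i$ grows to order $k/n$ for each dataset point while staying of magnitude at most $\tau - 1 \approx \eps\sqrt{k \log(1/\eps)}$ for each non-dataset point; this forces $q^*(X)$ to be much larger than $q^*(\cP)$, contradicting (2).

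First I would handle the non-dataset points. Fix $i \in [N] \setminus X$. Because $\cM$ is natural it never sees $q^j(i)$, so conditional on everything revealed before $\tilde q^j_i$ is sampled---in particular on $p^j$, $A^j$, and $s^j = \trunc{3\eps}(a^j - p^j)$---the increment $z^j_i = s^j(\tilde q^j_i - p^j)\mathbf{1}[i \notin A^j]$ has conditional mean zero and magnitude at most $3\eps$. Thus $(\sum_{\ell \le j} z^\ell_i)_j$ is a bounded-difference martingale, and Azuma--Hoeffding with the prescribed $\tau$ gives $\Pr[\max_j |\sum_{\ell \le j} z^\ell_i| > \tau - 1] \le 2(\eps/96)^9$. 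A union bound over the $N = 8n/\eps$ indices shows that with overwhelming probability no non-dataset point ever enters the safe set, so $|z_i| \le \tau - 1$ for all $i \notin X$. A sub-Gaussian concentration argument on $\sum_{i \notin X} z_i$ (whose increments are, for each fixed $j$, independent zero-mean across $i$ conditional on history), together with the trivial bound from the $n$ dataset points, then yields $|q^*(\cP)| \le \eps/8 + o(\eps)$.

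Next I would lower bound $\E[q^*(X)]$ under assumption (1) via a fingerprinting argument. Since $q^j(\cP) = p^j(1 - |A^j|/N) \pm O(1/\sqrt{N}) = p^j \pm O(\eps)$ (using $|A^j| \le n$ and $N = 8n/\eps$), accuracy implies $|a^j - p^j| \le 2\eps < 3\eps$, so the truncation is inactive and $s^j = a^j - p^j$. The per-query dataset contribution equals $s^j \cdot n' (\bar Y^j - p^j)$, where $Y^j_\ell = q^j(X_\ell)$ is iid $\Ber(p^j)$ for $X_\ell \notin A^j$ and $n' = |\{\ell : X_\ell \notin A^j\}|$. Conditional on history, $a^j = g^j(Y^j)$ depends on $Y^j$ only through its active coordinates, so the Bun-Steinke-Ullman fingerprinting lemma for natural accurate estimators yields $n' \E[(a^j - p^j)(\bar Y^j - p^j) \mid \text{history}] \ge c$ for a universal constant $c > 0$. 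Summing over $j \in [k]$ and dividing by $n\tau$ gives $\E[q^*(X)] \ge \Omega(k/(n\tau)) = \Omega(\sqrt{k}/(n\eps\sqrt{\log(1/\eps)}))$, which for the chosen $k$ exceeds $4\eps$ with the right constant. Markov's inequality applied to $1 + o(1) - q^*(X)$ (using $|q^*| \le 1 + o(1)$) promotes this into $\Pr[q^*(X) \ge 2\eps] \ge 2/3$, which together with the bound on $|q^*(\cP)|$ contradicts (2).

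The main obstacle is the conditional fingerprinting step: one needs a per-query lower bound of the form ``$\cM$ natural and $\eps$-accurate $\Rightarrow n' \E[(a^j - p^j)(\bar Y^j - p^j) \mid \text{history}] = \Omega(1)$'' that is uniform in the adversary's past state and robust both to the shrinking active set $[n] \setminus A^j$ and to the slack between $|a^j - p^j|$ and the truncation threshold $3\eps$. I would invoke (or re-derive here) the adaptive fingerprinting lemma of Bun-Steinke-Ullman conditioned on the history and on $p^j$, taking care that accuracy failure probabilities accumulate benignly across queries and that the $\Omega(1)$ per-query bound survives both the aggregation over $k$ rounds and the final conversion from an expectation gap to a high-probability generalization gap.
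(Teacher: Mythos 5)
Your overall plan matches the paper's: control the out-of-sample scores by a concentration argument, lower bound the in-sample scores via the Bun--Steinke--Ullman fingerprinting lemma applied per round conditioned on the history, and conclude that $q^*(X)-q^*(\cP)$ is large. The fingerprinting step, including conditioning on the history and handling the shrinking active set, is exactly what the paper's Sub-Claim does. However, two of your steps would not go through as written.

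First, you claim that ``a union bound over the $N=8n/\eps$ indices shows that with overwhelming probability no non-dataset point ever enters the safe set,'' and you want this so that $|z_i|\le \tau-1$ for all $i\notin X$ and so that $|A^j|\le n$. This is too strong and the union bound does not deliver it: the per-index failure probability is roughly $(\eps/96)^9$, independent of $n$, while $N=8n/\eps$ grows with $n$. The union bound thus gives a quantity that is not small in the regime of interest. The paper does something weaker but sufficient: (i) by construction (their Claim \ref{clm:boundedz}) $|z_i|\le\tau$ always, with no probabilistic argument, so the boundedness you want for the Hoeffding/sub-Gaussian step on $\sum_{i\notin X}z_i$ is automatic; and (ii) a Chernoff bound across the independent accusation events for distinct $i\notin X$ shows that $|A^k\setminus X|\le \eps N/8$ with probability $1-e^{-\Omega(n)}$, which is what is actually needed to control $q^j(\cP)-p^j$. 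Your claim $|A^j|\le n$ is also just false as stated---$A^j$ can contain non-dataset points---and has to be replaced by $|A^j|\le n+\eps N/8$ using the ``few accusations'' bound.

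Second, and more seriously, the Markov inequality at the end does not give $\Pr[q^*(X)\ge 2\eps]\ge 2/3$. From $\E[q^*(X)]\ge 4\eps$ and the deterministic bound $q^*(X)\le 1$, the best you can extract by Markov (applied to $1-q^*(X)\ge 0$) is $\Pr[q^*(X)>2\eps]\ge (4\eps-2\eps)/(1-2\eps)=\Theta(\eps)$, which is nowhere near $2/3$. Turning the expectation bound into a $2/3$-probability bound genuinely requires a concentration argument for the sum of per-round scores. In the paper this is the role of the quantity $D$ and the claim that $D\ge c'k$ with probability $199/200$: the per-round contributions to $D$ form a submartingale with conditional increments lower bounded in expectation (by the fingerprinting lemma) and with controlled fluctuations, so Chebyshev/Azuma along the rounds converts the $\Omega(k)$ expectation bound to a high-probability bound. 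You would need to add this second moment or martingale concentration step; reverse Markov alone is not enough.

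Everything else (the martingale/Azuma argument for each fixed $i\notin X$, the observation that naturalness plus deferred decisions makes $\tilde q^j_i$ fresh conditional on the mechanism's view, the per-round fingerprinting lower bound, the final arithmetic relating $\sum_{i\in X}z_i$ and $\sum_{i\notin X}z_i$ to $q^*(X)-q^*(\cP)$) is consistent with the paper's proof and would go through once the two gaps above are repaired.
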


We prove Proposition \ref{prop:natural} using  a series of claims. The first claim states that none of the values $z_i$ are ever too large in absolute value, which follows immediately from the definition of the set $A^j$ and the fact that each term $z^j_i$ is bounded.
\begin{clm} \label{clm:boundedz}
For every $i \in [N]$, $|z_i| \leq \tau$.
\end{clm}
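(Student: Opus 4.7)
The plan is to exploit two structural features of $\cA_{\naq}$: first, the set $A^j$ is monotone non-decreasing in $j$ (once an index is ``frozen,'' its contribution is zero thereafter), and second, the constant $+1$ in the definition of $\tau = 9\eps\sqrt{2k\log(96/\eps)}+1$ was inserted precisely to absorb one final bounded increment past the threshold $\tau-1$ that defines $A^{j+1}$.

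Fix $i \in [N]$. First I would record the per-step bound $|z_i^j| \leq 3\eps$: this follows immediately because $\trunc{3\eps}(\cdot) \in [-3\eps, 3\eps]$, while $q^j_i \in \{0,1\}$ and $p^j \in [0,1]$ force $|q^j_i - p^j| \leq 1$. Next I would verify the monotonicity $A^j \subseteq A^{j+1}$ (as the algorithm itself notes parenthetically): if $i \in A^j$ then $z_i^j = 0$, so the partial sum $\sum_{\ell=1}^{j} z_i^\ell$ equals $\sum_{\ell=1}^{j-1} z_i^\ell$, which already exceeded $\tau-1$ in absolute value and hence still lands $i$ in $A^{j+1}$.

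Then I would split on whether $i$ is ever frozen. If $i \notin A^{k+1}$, the defining condition of $A^{k+1}$ gives $\bigl|\sum_{\ell=1}^k z_i^\ell\bigr| \leq \tau - 1 \leq \tau$, so $|z_i| \leq \tau$. Otherwise, let $J \geq 2$ be the smallest index with $i \in A^J$. Monotonicity yields $z_i^\ell = 0$ for all $\ell \geq J$, so $z_i = \sum_{\ell=1}^{J-1} z_i^\ell$. Because $i \notin A^{J-1}$, the truncated partial sum $\bigl|\sum_{\ell=1}^{J-2} z_i^\ell\bigr| \leq \tau-1$, and the last term contributes at most $|z_i^{J-1}| \leq 3\eps \leq 1$ (we are working in the regime $\eps \leq 1/3$, which is standing throughout the lower-bound argument). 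Adding these gives $|z_i| \leq (\tau-1)+1 = \tau$, as required.

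There is essentially no obstacle here; the only subtlety is bookkeeping around the ``last unfrozen step'' $J-1$ and confirming that the buffer of $1$ in $\tau$ dominates the worst-case single-step jump $3\eps$. Note that the $J=1$ case is vacuous (or degenerate, depending on the initialization convention for $A^1$), since then $z_i$ collapses to an empty sum of magnitude $0 \leq \tau$.
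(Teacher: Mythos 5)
Your proof is correct and takes essentially the same route as the paper's: monotonicity of the accused sets $A^j$, a split on whether $i$ is ever added to some $A^J$, and the observation that the $+1$ buffer in $\tau$ absorbs the last unfrozen increment. You are somewhat more careful than the paper's version -- you state the per-step bound as $|z_i^j| \leq 3\eps$ and explicitly flag that the standing regime $\eps \leq 1/3$ is what makes this at most $1$, and you handle the degenerate $J=1$ case -- but the underlying decomposition is identical.
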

\begin{proof}
Note that for every $i,j$, we have $|z_i^j| \leq 1$.  Now, if $i \in A^{k}$ then by definition $\sum_{\ell=1}^{k-1} z^{\ell}_{i}  (\leq \tau-1$, and the claim follows.  Otherwise, suppose $j+1$ is the minimal value such that $i \not\in A^{j+1}$, then 
$$|z_i| = \left| \sum_{\ell=1}^{j} z_i^\ell + \sum_{\ell=j+1}^{k} z_i^\ell \right| \leq \left| \sum_{\ell=1}^{j-1} z_i^\ell\right| + |z_i^j| + 0 \leq \tau-1 + 1 + 0 = \tau \qedhere$$
\end{proof}

The next claim states that, no matter how the mechanism answers, very
few of the items not in the sample get ``accused'' of membership, that
is, included in the set $A^j$.
\begin{clm}[Few Accusations]\label{clm:fewaccused}
$\Pr(|A_k\setminus X|
  \leq \eps N/8) \geq 1 - e^{-\Omega(\eps n)}$.
\end{clm}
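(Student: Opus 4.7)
The plan is to show, for each $i\notin X$, that the probability $i$ is ever accused (i.e.\ that $i\in A^k$) is at most $O(\eps^9)$, using a martingale concentration argument, and then to combine these estimates via a Chernoff bound that exploits independence across non-sample items.

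Fix any $i\notin X$ and let $\mathcal{F}^j$ denote the $\sigma$-algebra generated by everything in the game $\NAQ{[N],n,k}{\cM}{\cA_{\naq}}$ through round $j$. Because $\cM$ is natural, its answer $a^j$ is a function of $(q^j(X_1),\ldots,q^j(X_n))$ alone, and hence is conditionally independent of $q^j_i$ given $\mathcal{F}^{j-1}$ and $p^j$. Combined with $q^j_i\sim\Ber(p^j)$ and the fact that $\mathbf{1}[i\notin A^j]$ is $\mathcal{F}^{j-1}$-measurable, this yields $\E[z^j_i\mid\mathcal{F}^{j-1}]=0$, so $(z^j_i)_j$ is a martingale difference sequence. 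Moreover $|z^j_i|\le 3\eps$ since $|\trunc{3\eps}(\cdot)|\le 3\eps$ and $|q^j_i-p^j|\le 1$. Azuma--Hoeffding then gives
\[
\Pr\!\left(\left|\sum_{\ell=1}^{k-1} z^\ell_i\right|>\tau-1\right) \;\le\; 2\exp\!\left(-\frac{(\tau-1)^2}{18(k-1)\eps^2}\right) \;\le\; 2\,(\eps/96)^{9},
\]
once one plugs in $\tau-1=9\eps\sqrt{2k\log(96/\eps)}$. Hence $\Pr(i\in A^k)\le 2(\eps/96)^{9}$ for every fixed $i\notin X$.

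Next, condition on $X$, the coins of $\cM$, the values $(p^j)_j$, and $(q^j_{X_r})_{j,r}$. Under this conditioning, all answers $a^j$ are determined, and for distinct $i\notin X$ the sequences $(q^j_i)_j$ are mutually independent. Consequently the indicators $\{\mathbf{1}[i\in A^k]\}_{i\notin X}$ are independent, and the martingale argument above still applies pointwise inside the conditioning, so each indicator has conditional mean at most $2(\eps/96)^{9}$. The expected cardinality of $A^k\setminus X$ is therefore at most $2N(\eps/96)^{9}=O(n\eps^{8})$, which for small enough $\eps$ is far below $\eps N/8=n$. A standard Chernoff bound for sums of independent $\{0,1\}$ variables then gives $\Pr(|A^k\setminus X|>\eps N/8)\le e^{-\Omega(n)}\le e^{-\Omega(\eps n)}$, as claimed.

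The main subtlety is verifying the martingale property: the variable $z^j_i$ couples the answer $a^j$ (which depends on the sample) with the fresh coin $q^j_i$ and with the $\mathcal{F}^{j-1}$-measurable indicator $\mathbf{1}[i\notin A^j]$, so one must unwind the filtration carefully. Naturalness of $\cM$ is precisely what decouples $a^j$ from $q^j_i$ for $i\notin X$, which is what makes the Azuma application legitimate; everything else is routine.
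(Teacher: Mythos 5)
Your proof is correct and takes a genuinely different route from the paper's. Both proofs condition on the biases $p^j$ and the mechanism's view (equivalently, on everything except $\{\tilde q_i^j : i\notin X\}$) so that scores are independent across non-sample items, and both finish with a Chernoff bound. The difference is in the per-item tail bound. The paper introduces the auxiliary sequence $\tilde z_i^j = \trunc{3\eps}(a^j - p^j)(\tilde q_i^j - p^j)$, which drops the indicator $\mathbf{1}[i\notin A^j]$; under the conditioning these are genuinely i.i.d., so plain Hoeffding applies, but to transfer a bound from $\tilde z$ back to $z$ the paper needs a uniform bound over all partial sums, for which it invokes Etemadi's maximal inequality (picking up a factor of $3$). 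You instead work directly with $(z_i^j)_j$, observe that $\mathbf{1}[i\notin A^j]$ is $\mathcal{F}^{j-1}$-measurable and that $a^j$ is conditionally independent of $q_i^j$ by naturalness, so $(z_i^j)_j$ is a bounded martingale difference sequence, and you apply Azuma--Hoeffding to the final sum. Since $\{i\in A^k\}$ is precisely the event $|\sum_{\ell=1}^{k-1} z_i^\ell| > \tau - 1$ (the set is monotone and $z_i^j=0$ once $i$ is accused), no maximal inequality is needed: the stopping is already built into the martingale. Your approach is cleaner in this respect. Your numeric bound $2(\eps/96)^9$ per item is stronger than the paper's $\eps/16$; both are small enough for the Chernoff step, and both yield the stronger conclusion $e^{-\Omega(n)}$, which trivially implies the claimed $e^{-\Omega(\eps n)}$.
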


\begin{proof}

Fix the biases $p^1,...,p^k$ as well as the all the information
visibile to the mechanism (the query values $\{q_i^j: i\in X, j\in
[k]\}$, as well as the answers $a^1,...,a^k$). We prove that the
probability of $F$ is high conditioned on any setting of these
variables. 

The main observation is that, once we condition on
the biases $p^j$, the
query values at $\{q_i^j: i\notin X, j\in[k]\}$ are independent with
$q_i^j\sim \Ber(p^j)$. This is true because $\cM$ is a natural
algorithm (so it sees only the query values for points in $X$) and,
more subtly, because the analyst's decisions about how to sample the
$p^j$'s, and which points in $X$ to include in the sets $A^j$, are
independent of the query values outside of $X$. By the principle of
deferred decisions, we may thus think of
the query values $\{q_i^j: i\notin X, j\in[k]\}$ as selected after the
interaction with the mechanism is complete.

Fix $i\notin X$. For every $j \in [k]$ and $i\notin X$, we have
$$
\ex{}{z_i^j} = \ex{}{\trunc{3\eps}(a^j - p^j) \cdot (q_i^j - p^j)} = \ex{}{\trunc{3\eps} (a^j - p^j)} \cdot \ex{}{q_i^j - p^j} = 0.
$$
By linearity of expectation, we also have 
$
\ex{}{z_i} = \ex{}{ \sum_{j = 1}^{k} z_i^j } = 0.
$

Next, note that $|z_i^j|\leq
3\eps$, since $\trunc{3\eps} (a^j - p^j) \in [-3\eps,3\eps]$ and
$q_i^j - p^j \in [0,1]$. 
The terms $z_i^j$ are not independent, since if a partial sum
$\sum_{j=1}^\ell z_i^j$ ever exceeds $\tau$, then subsequent values
$z_i^j$ for $j>\ell$
will be set to 0.  However, we may consider a related sequence given
by sums of the terms $\tilde
z _i^j = \trunc{3\eps}(a^j - p^j) \cdot (\tilde q_i^j - p^j)$ (the difference from $z_i^j$ is that we use values  $\tilde q_i^j~\Ber(p^j)$
regardless of whether item $i$ is in $A^j$). Once we have conditioned on the biases
and mechanism's outputs,
$\sum_{j=1}^k \tilde z_i$ is a sum of bounded independent random
variables.
%
%
%
%
By Hoeffding's Inequality, the sum is bounded $O(\eps\sqrt{k\log(1/\eps})$  with high probability, for every $i \not\in X$
$$
\pr{}{\left|\sum_{j=1}^{k} \tilde z_i^{j} \right| >
  \eps \sqrt{18 k \ln\left(\frac{96}{\eps}\right)}} \leq
\frac{\eps}{48}\, .
$$
By Etemadi's Inequality,   a related bound holds uniformly
over all the intermediate sums:
$$
\forall i \not\in X~~~\pr{}{\exists \ell \in
  [k]~:~\left|\sum_{j=1}^{\ell} \tilde z_i^{\ell} \right| > \underbrace{3\eps
  \sqrt{18 k \ln\left(\frac{96}{\eps}\right)}}_{\tau-1}} \leq 3 \cdot
\pr{}{\left|\sum_{j=1}^{k} \tilde z_i^{j} \right| > \eps \sqrt{18 k \ln\left(\frac{96}{\eps}
\right)}} \leq \frac{\eps}{16}
$$
Finally, notice that by construction, the real scores $z_i^j$ are all
set to 0 when an item is added to $A^j$, so the sets $A^j$ are nested
($A^j \subseteq A^{j+1}$), and a bound on partial sums of the $\tilde
z_i^j$ applies equally well to the partial sums of the $z_i^j$. Thus, 
$$
\forall i \not\in X~\pr{}{\exists \ell \in
  [k] : \left|\sum_{j=1}^{\ell} z_i^{\ell} \right| > \tau-1} 
\leq \frac{\eps}{16}
$$

Now, the scores $z^i$ are independent across players
(again, because we have fixed the biases $p^j$ and the mechanism's outputs). We can bound the
probability that more than $\frac{\eps N}{4}$ elements $i$ are
``accused'' over the course of the algorithm using Chernoff's bound:
$$
\pr{}{| A^k \setminus X | > \tfrac{\eps}{8}N } \leq e^{-\eps N/64} \leq e^{-\Omega(n)}
$$
The claim now follows by averaging over all of the choices we fixed.
\end{proof}

The next claim states that the sum of the scores over all $i$ not in the sample is small.
\begin{clm} \label{clm:natural_innocent}
With probability at least $\frac{99}{100}$, $\sum_{i \in [N] \setminus X} z_i = O(\eps \sqrt{N k}).$
\end{clm}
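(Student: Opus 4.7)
The plan is to express $T := \sum_{i\in[N]\setminus X} z_i$ as a sum of martingale differences and apply a second-moment / Chebyshev bound. Write
\[
T \;=\; \sum_{j=1}^k R^j,\quad\text{where}\quad R^j \;=\; \sum_{i \in [N]\setminus X} z_i^j \;=\; \trunc{3\eps}(a^j-p^j)\cdot\!\!\sum_{i\in B^j}(\tilde q_i^j - p^j),
\]
with $B^j = [N]\setminus(X\cup A^j)$. Set $c^j := \trunc{3\eps}(a^j-p^j)$ and $S^j := \sum_{i\in B^j}(\tilde q_i^j - p^j)$, so $R^j = c^j S^j$.

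The central observation is that $R^j$ is a martingale difference with respect to an appropriate filtration. Fix the sample $X$ once and for all, and let $\mathcal G_j$ denote the $\sigma$-algebra generated by $X$, the entire history $(p^\ell,a^\ell,\{\tilde q_i^\ell\}_{i\in[N]})_{\ell<j}$, the current bias $p^j$, the values $\{\tilde q_i^j:i\in X\}$, and $\cM$'s internal randomness in round $j$. Then both $A^j$ (since it is computed from rounds $<j$) and $c^j$ (since $\cM$ is \emph{natural}, so $a^j$ depends only on $p^j$, the mechanism's randomness, and the query values at sample points) are $\mathcal G_j$-measurable, while $\{\tilde q_i^j:i\notin X\}$ remain i.i.d.\ $\Ber(p^j)$ and independent of $\mathcal G_j$. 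Hence
\[
\E\!\left[S^j \mid \mathcal G_j\right] = 0, \qquad \E\!\left[(S^j)^2 \mid \mathcal G_j\right] = |B^j|\cdot p^j(1-p^j) \le N/4,
\]
and therefore $\E[R^j\mid \mathcal G_j] = 0$ and $\E[(R^j)^2\mid\mathcal G_j] \le 9\eps^2\cdot N/4$, using $|c^j|\le 3\eps$.

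Since $\{R^j\}$ is a martingale difference sequence, the cross terms in $\E[T^2]$ vanish and
\[
\E[T^2] \;=\; \sum_{j=1}^k \E\!\left[(R^j)^2\right] \;\le\; \tfrac{9}{4}\eps^2 N k.
\]
Chebyshev's inequality then gives $|T|\le 15\eps\sqrt{Nk}$ with probability at least $99/100$, which is the stated $O(\eps\sqrt{Nk})$ bound.

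The only subtlety — and the step where the ``natural'' hypothesis is essential — is justifying that $c^j$ is $\mathcal G_j$-measurable, i.e.\ independent of $\{\tilde q_i^j:i\notin X\}$ given $\mathcal G_j$. This fails for a general (non-natural) algorithm, which could evaluate $q^j$ at points outside of $X$; that is why this lower bound is restricted to natural $\cM$. Once this independence is in hand, everything else is routine conditional-variance bookkeeping. Note that we do not need to invoke Claim~\ref{clm:fewaccused} here: the martingale argument controls the sum directly, and the truncation in $\trunc{3\eps}(\cdot)$ supplies the crucial $O(\eps)$ factor on $c^j$ that makes the second moment small.
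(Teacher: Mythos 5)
Your proof is correct, but it follows a genuinely different route from the paper's. The paper conditions on all the information visible to the mechanism (the biases $p^1,\dots,p^k$, the in-sample query values $q^1_X,\dots,q^k_X$, and the answers $a^1,\dots,a^k$); conditioned on these, the scores $z_i$ for $i\notin X$ are i.i.d.\ with mean zero, and the paper invokes the uniform bound $|z_i|\leq\tau$ from Claim~\ref{clm:boundedz} to apply Hoeffding across the $N-n$ out-of-sample indices, obtaining $O(\tau\sqrt{N})=O(\eps\sqrt{Nk})$. You instead sum over rounds: you expose a martingale-difference structure in $R^j=\sum_{i\notin X}z_i^j$ with respect to the natural round-by-round filtration, bound the conditional second moment by $\tfrac{9}{4}\eps^2N$ per round using the truncation $|c^j|\leq 3\eps$, and conclude by orthogonality of martingale differences plus Chebyshev. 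The two decompositions are dual (over $i$ versus over $j$), and both exploit the ``natural algorithm'' hypothesis at the same place — to ensure that the out-of-sample query values are conditionally independent of the answers. A modest advantage of your argument is that it sidesteps Claim~\ref{clm:boundedz} entirely (you never need the cumulative bound $|z_i|\leq\tau$); a modest advantage of the paper's is that, once one has conditioned, a single Hoeffding application over i.i.d.\ variables is somewhat shorter to state. Both yield the same asymptotic bound.

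One small point worth noting explicitly for the martingale orthogonality: for $j<j'$ you need $R^j$ to be $\mathcal G_{j'}$-measurable so that $\ex{}{R^jR^{j'}}=\ex{}{R^j\,\ex{}{R^{j'}\mid\mathcal G_{j'}}}=0$. This holds because $c^j$, $B^j$, and $\{\tilde q_i^j:i\notin X\}$ are all contained in the round-$\ell$ history for any $\ell>j$, which your filtration does include — but it deserves a sentence, since if $\mathcal G_{j'}$ omitted the past out-of-sample bits, the cross terms would not obviously vanish.
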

\vspace{-8pt}
\begin{proof} Fix a choice of $(p^1,\dots,p^k) \in[0,1]^{k}$, the in-sample query values $(q^1_{X},\dots,q^k_{X}) \in \zo^{n \times k}$, and the answers $(a^{1},\dots,a^{k})\in [0,1]^{k}$.  Conditioned on these, the values $z_i$ for $i\notin X$ are independent and identically distributed.  They have expectation 0 (see the proof of Claim~\ref{clm:fewaccused}) and are bounded by $\tau$ (by Claim~\ref{clm:boundedz}). 
By Hoeffding's inequality, with probability at least $\frac{99}{100}$
$
\sum_{i \in [N] \setminus X} z_i = O(\tau \sqrt{N}) = O(\eps \sqrt{Nk})
$
as desired.  The claim now follows by averaging over all of the choices we fixed.
\end{proof}

\begin{clm} \label{clm:natural_guilty} There exists $c > 0$ such that, 
for all sufficiently small $\eps$ and sufficiently large $n$, with probability at least $\frac{99}{100}$, either 
\begin{enumerate}
\item $\exists j \in [k]~:~|a^j - q^j(\cP)| > \eps$ (large error), or 
\item $\sum_{i \in [N]} z_i \geq ck$ (high scores in sample).
\end{enumerate}
\end{clm}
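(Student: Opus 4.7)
My plan is to prove the contrapositive: conditioning on the mechanism being $\eps$-accurate on every query (i.e.\ $|a^j - q^j(\cP)| \leq \eps$ for all $j$), show that $\sum_{i \in [N]} z_i \geq ck$ holds with probability at least $99/100$ for a constant $c>0$. The argument combines a per-query fingerprinting-style in-expectation lower bound with a martingale concentration step.

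\textbf{Setup.} First I would condition on the good event of Claim~\ref{clm:fewaccused}, giving $|A^j| \leq |X| + \eps N/8 \leq \eps N/4$ for every $j$ (using $N = 8n/\eps$). I would also condition on a Hoeffding/union-bound event that $|q^j(\cP) - (1 - |A^j|/N)p^j| \leq O(\sqrt{\log(k)/N}) = o(\eps)$ for every $j$; this is a standard concentration bound for the sum of $N - |A^j|$ independent Bernoulli$(p^j)$ values $\{\tilde q_i^j : i \notin A^j\}$, using that $N = 8n/\eps$ is polynomially large. Together with the accuracy hypothesis, these imply $|a^j - p^j| \leq 3\eps$ for every $j$, so the truncation $\trunc{3\eps}(a^j - p^j)$ is lossless and equals $a^j - p^j$.

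\textbf{Fingerprinting identity.} Next I would split $\sum_i z_i^j = \sum_{i \notin X} z_i^j + \sum_{i \in X} z_i^j$. The argument already appearing in the proof of Claim~\ref{clm:natural_innocent} shows that the out-of-sample part has conditional mean zero given the pre-$j$ history $\mathcal{F}^{j-1}$. Since $q^j \equiv 0$ on $A^j$, the in-sample part simplifies to
\[
\sum_{i \in X} z_i^j = (a^j - p^j)\,(n - m^j)(\bar q^j - p^j),
\]
where $m^j := |X \cap A^j|$ and $\bar q^j$ is the empirical mean of the $n - m^j$ active in-sample Bernoullis $\{\tilde q_{X_\ell}^j : X_\ell \notin A^j\}$. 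Conditioned on $\mathcal{F}^{j-1}$, the answer $a^j$ is a randomized function of those $n - m^j$ i.i.d.\ Bernoulli$(p^j)$ values (plus history). The standard fingerprinting identity --- whose proof is an integration by parts in $p^j \sim \sU[0,1]$, using only $|a^j - p^j| \leq 3\eps$ --- gives
\[
\E\!\left[ (n - m^j)(a^j - p^j)(\bar q^j - p^j) \,\big|\, \mathcal{F}^{j-1} \right] \ \geq\ \tfrac{1}{6} - O(\eps).
\]
Summing over $j \in [k]$ yields $\E[\sum_{i\in[N]} z_i] \geq k(\tfrac{1}{6} - O(\eps)) \geq k/12$ for all sufficiently small $\eps$.

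\textbf{Concentration and main obstacle.} The martingale difference sequence $\{\sum_i z_i^j - \E[\sum_i z_i^j \mid \mathcal{F}^{j-1}]\}_j$ has step size $O(\eps N)$ and conditional variance $O(\eps^2 N)$ per step, since $\sum_{i \notin A^j}(\tilde q_i^j - p^j)$ is a sum of $\leq N$ centered Bernoullis with variance $O(N)$ and the multiplicative factor $\trunc{3\eps}$ is bounded by $3\eps$ in absolute value. Freedman's inequality then gives total fluctuation $O(\eps\sqrt{kN\log(1/\delta)}) = o(k)$ in the regime $k = \Theta(\eps^4 n^2)$ and $N = \Theta(n/\eps)$, so $\sum_i z_i \geq k/12 - o(k) \geq ck$ with probability $\geq 99/100$. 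The main obstacle is the fingerprinting identity: the textbook statement assumes a deterministic accuracy bound and a fixed-size i.i.d.\ sample, whereas here $|a^j - p^j| \leq 3\eps$ holds only on a good event, the active sample size $n - m^j$ is history-dependent (and might be small), and the estimator is truncated. I expect the resolution to require redoing the integration-by-parts calculation with $\trunc{3\eps}$ in place (noting it is an identity on the good event) and verifying that the constant $\tfrac16$ survives conditioning on $\mathcal{F}^{j-1}$ and on the variable sample size $n - m^j$; the contribution of the low-probability bad event is at most $O(\eps)$ per step via truncation and can be absorbed into the $O(\eps)$ slack of the lemma.
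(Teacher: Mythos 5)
Your high-level outline---apply the fingerprinting lemma round by round conditioned on the pre-$j$ history, then upgrade the expectation bound to a high-probability bound via martingale concentration---is the right skeleton, and the Freedman step is actually more explicit than the paper's corresponding (terse) assertion. But there are two substantive deviations from a working argument, and both concern the fingerprinting step.

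First, you cannot \emph{condition on accuracy} (nor on the other ``good'' events) and then apply the fingerprinting lemma. Lemma~\ref{lem:fpc} requires that, given $\mathcal{F}^{j-1}$, the bias $p^j$ be exactly uniform on $[0,1]$ and the values $\tilde q^j_i$ exactly $\Ber(p^j)$; conditioning on a global event such as $\forall j\,|a^j-q^j(\cP)|\leq\eps$ (which depends on all the Bernoulli draws and the answers) distorts those round-$j$ distributions, so the lemma no longer applies. The paper sidesteps this by \emph{not} conditioning: it defines
$$D = \sum_i \tilde z_i - \sum_j |a^j-q^j(\cP)| - k\,\ex{}{\tfrac{|A^j|}{N-|A^j|}}\,,$$
so the accuracy penalty is carried inside the quantity whose expectation is bounded, and one only appeals to the good events at the end (to convert ``$D$ large'' into ``$\sum_i z_i$ large'' and to identify truncated with untruncated scores). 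Your sketch needs an analogous device; otherwise the per-round conditional-expectation lower bound you want ($\geq 1/12-O(\eps)$) simply isn't available before conditioning, because without conditioning one only gets $\geq 1/12 - \ex{}{|a^j-\bar q^j|}$ with no a priori control of the subtracted term.

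Second, you apply the fingerprinting lemma only to the \emph{in-sample} unaccused coordinates, which gives a penalty $\ex{}{|a^j - \bar q^j_{\mathrm{in}}|}$ where $\bar q^j_{\mathrm{in}}$ is the empirical mean over the $n-m^j$ active in-sample points. This is strictly harder to control than what the paper's application yields. The paper applies the lemma with $m=N-|A^j|$, i.e.\ over \emph{all} unaccused coordinates (with $f=a^j$ depending on only the in-sample ones, which is allowed), so the penalty is $\ex{}{|a^j-\bar q^j_{\mathrm{all}}|}\leq \ex{}{|a^j-q^j(\cP)|}+\ex{}{\tfrac{|A^j|}{N-|A^j|}}$, directly tied to the mechanism's population error and the number of accusations. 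In your version, $|a^j-\bar q^j_{\mathrm{in}}|$ also carries a sampling-fluctuation term of order $1/\sqrt{n-m^j}$, and $m^j=|X\cap A^j|$ is history-dependent and can in principle be as large as $n$; your sketch does not rule this out, nor does it show how the ``$O(\eps)$ slack'' absorbs a potentially order-$1$ sampling error when the active in-sample set is tiny. (Relatedly, your parenthetical ``using only $|a^j-p^j|\leq3\eps$'' mischaracterizes Lemma~\ref{lem:fpc}: the lemma holds for arbitrary $f:\{0,1\}^m\to[0,1]$ and makes no use of such a bound; $|a^j-p^j|\leq3\eps$ is only needed to argue that the truncation $\trunc{3\eps}$ is lossless on the good event.) Fixing both points essentially pushes you to the paper's structure: work with untruncated scores, apply fingerprinting over $[N]\setminus A^j$, carry the error penalty inside a quantity like $D$, and only invoke the good events $F$, $G$, $H$ at the very end.
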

The proof of Claim~\ref{clm:natural_guilty} relies on the following key lemma. The lemma has appeared in various forms~\cite{SteinkeU14,DworkSSUV15,SteinkeU17}; the form we use is \cite[Lemma 3.6]{BunSU17} (rescaled from $\{-1,+1\}$ to $\{0,1\}$).

\begin{lem}[Fingerprinting Lemma] \label{lem:fpc}
Let $f : \{0,1\}^m \to [0,1]$ be arbitrary. Sample $p \sim \sU_{[0,1]}$ and sample $x_1,\dots,x_m \sim \Ber(p)$ independently.  Then $$\ex{}{(f(x) - p) \cdot \sum_{i \in [m]} (x_i-p) +  \left| f(x)- \frac1m \sum_{i \in [m]} x_i \right|} \geq \frac{1}{12}.$$
\end{lem}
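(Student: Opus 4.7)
The plan is to prove this via a standard Stein-type identity for the Bernoulli distribution plus integration by parts. Define
$$F(p) := \E_{x\sim \Ber(p)^{\otimes m}}[f(x)],$$
and write $\bar x = \frac{1}{m}\sum_i x_i$, so that $\E[\bar x \mid p] = p$. The first goal is to evaluate the expectation of the score-like quantity $(f(x)-p)\sum_i(x_i-p)$ in closed form as an integral of $F$.

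The key identity I would derive is
$$\E\Bigl[f(x)\sum_{i=1}^m (x_i - p)\,\Bigm|\,p\Bigr] = p(1-p)\,F'(p),$$
obtained by directly differentiating $F(p) = \sum_{x^*} f(x^*)\,p^{|x^*|}(1-p)^{m-|x^*|}$ in $p$ and recognizing the factor $(|x^*|-mp)/(p(1-p))$. Since $\E[p\sum_i(x_i-p)\mid p]=0$, the same identity holds with $f(x)-p$ in place of $f(x)$. Taking expectation over $p\sim U[0,1]$ and integrating by parts (the boundary terms vanish because $p(1-p)=0$ at $0,1$), I get
$$\E\Bigl[(f(x)-p)\sum_{i=1}^m (x_i-p)\Bigr] = \int_0^1 p(1-p)F'(p)\,dp = \int_0^1 (2p-1)F(p)\,dp.$$

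The last step is to decompose $F(p) = p + (F(p)-p)$. The $p$ part gives the ``main mass'': a direct calculation yields $\int_0^1 (2p-1)p\,dp = \tfrac{1}{6}$. For the correction, note $F(p)-p = \E[f(x)-\bar x\mid p]$ because $\E[\bar x\mid p]=p$, so by Jensen
$$\Bigl|\int_0^1 (2p-1)(F(p)-p)\,dp\Bigr| \;\le\; \int_0^1 |F(p)-p|\,dp \;\le\; \int_0^1 \E[|f(x)-\bar x|\mid p]\,dp \;=\; \E[|f(x)-\bar x|].$$
Adding the $\E[|f(x)-\bar x|]$ term from the lemma statement, the negative correction is absorbed and the total is at least $\tfrac{1}{6} \ge \tfrac{1}{12}$, as required.

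The only nontrivial step is the Stein identity in the second paragraph; once it is in hand, the proof is essentially a one-line integration by parts followed by an $L^1$ absorption argument. A small remark worth noting: the argument actually gives the sharper constant $\tfrac{1}{6}$, so the factor $\tfrac{1}{12}$ in the stated lemma leaves room for slack that would survive minor quantitative degradations (e.g., if the identity is applied on only part of the range of $p$).
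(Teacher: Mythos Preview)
Your argument is correct. The Stein-type identity $\E[f(x)\sum_i(x_i-p)\mid p]=p(1-p)F'(p)$ is valid (as you note, it follows by differentiating the polynomial expansion of $F$), the integration by parts is clean since $p(1-p)$ vanishes at the endpoints, and the absorption step via $|F(p)-p|\le \E[|f(x)-\bar x|\mid p]$ is a legitimate application of Jensen. You even obtain the sharper constant $\tfrac{1}{6}$.

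As for comparison with the paper: there is nothing to compare against, because the paper does not prove this lemma. It is quoted from \cite[Lemma~3.6]{BunSU17} (rescaled from $\{\pm 1\}$ to $\{0,1\}$) and invoked as a black box. For what it is worth, your proof is essentially the standard one in the fingerprinting literature (e.g.\ \cite{DworkSSUV15,SteinkeU17,BunSU17}): the same identity $\E[(f(x)-p)\sum_i(x_i-p)]=\int_0^1 p(1-p)F'(p)\,dp$ appears there, followed by the same decomposition around the ``ideal'' estimator $p$. The constant $\tfrac{1}{12}$ in the cited statement (versus your $\tfrac{1}{6}$) reflects a slightly lossier bookkeeping in the original, not a different method.
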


\begin{proof}[Proof of Claim~\ref{clm:natural_guilty}]
  To make use of the fingerprinting lemma, we consider a variant of
  Algorithm~\ref{alg:tracer} that does not truncate the quantity
  $a^j-p^j$ to the range $\pm2\epsilon$ when computing the score
  $z_i^j$ for each element $i$. Specifically, we consider scores based
  on the quantities
  $$\hat z_i^j =
  \begin{cases}
(a^j - p^j) \cdot (q_i^j - p^j) & \text{if }i \notin A^j\, ,\\
0 & \text{if } i \in A^j \,;
\end{cases}
\qquad\qquad \text{and}
  \quad \hat z_i = \sum_{j=1}^k \hat z_i^j\, .$$

  We prove two main statements: first, that these untruncated scores are equal to
  the truncated ones with high probability as long as the mechanism's
  answers are accurate. Second, that the
  expected sum of the untruncated scores is large. This gives us the
  desired final statement.

  To relate the truncated and untruncated scores, consider the
  following three key events: 
  \vspace{-5pt}
  \begin{enumerate}
  \item (``Few accusations''): Let $F$ the event that, at every round $j$,
    set of ``accused'' items outside of the sample is small: $|A_k\setminus X|
    \leq \eps N/8$. Since the $A^j$ are nested, event $F$ implies the same condition for all $j$ in $[k]$.
    
    \vspace{-2pt}
  \item (``Low population error''): Let $G$ be the event that at every
    round $j\in[k]$, the mechanism's anwer satisfies $|a^j- p^j| \leq
    3\eps$. 
    
    \vspace{-2pt}
  \item (``Representative queries''): Let $H$ be the event that
    $|\tilde q^j(\cP) - p^j|\leq \eps$ for all rounds $j\in [k]$---that is, each
    query's population average is close to the corresponding sampling bias $p^j$.
  \end{enumerate}

  \begin{subclm}
    Conditioned on $F\cap G\cap H$, the truncated and untruncated
    scores are equal. Specifically, $|a^j - p^j| \leq 3\eps$ for all $j
    \in [k]$.
  \end{subclm}
\vspace{-8pt}
  \begin{proof}
    We can bound the difference $|a^j-p^j|$ via the triangle inequality:
    $$|a^j - p^j| \leq |a^j - q^j(\cP)| + |q^j(\cP) - \tilde q^j(\cP)| 
    + |\tilde q^j(\cP) - p^j| \,.
    $$
    The first term is the mechanism's sample error (bounded when $G$
    occurs). The second is the distortion of the sample mean
    introduced by setting the query values of $i\in A^j$ to 0. This
    distortion is at most $|A_j|/N$. When $F$ occurs, $A^j$ has size
    at most $|X| + |A^j\setminus X| \leq n + \eps N/8 = \eps N/4$, so
    the second term is at most $\eps/4$. 
    Finally, the last term is bounded by $\eps$ when $H$ occurs, by definition.
    The three terms add to at most $3\eps$ when $F$, $G$, and $H$ all occur.
  \end{proof}

  We can bound the probability of $H$ via a Chernoff bound: The
  probability of that a binomial random variable deviates from its
  mean by $\eps N$ is at most $2\exp(-\eps^2N/3)$. 

  The technical core of the proof is the use of the fingerprinting
  lemma to analyze the difference $D$ between the sum of untruncated
  scores and the summed population errors:
$$D := \sum_{i =1}^N \tilde z_i - \sum_{j=1}^k \left| a^j -
  q^j(\cP)\right| - k \ex{}{ \tfrac{|A^j|}{N-|A^j|}}$$

  \begin{subclm}\label{subclm:bigscores}
    $\ex{}{D} = \Omega(k)$
  \end{subclm}
\vspace{-8pt}
  \begin{proof}
    We show that for each round $j$, the expected sum of scores for
    that round $\sum_i \tilde z_i^j$ is at least
    $1/12 - \ex{}{|a^j-q^j(\cP)| -  \tfrac{|A^j|}{N-|A^j|}}$. This is true even when
    we condition on all the random choices and communication in rounds
    $1$ through $j-1$.
    Adding up these expectations over all rounds gives the desired
    expectation bound for $D$. 

    First, note that summing $z_i^j$ over all elements $i\in [N]$ is the
    same as summing over that round's unaccused elements $i \in
    [N] \setminus A^j$ (since $\tilde z_i^j = 0$ for $i\in A^j$). Thus, 
    $$\sum_{i=1}^N \tilde z_i^j  = \sum_{i \in [N]\setminus A^j}
    \tilde z_i^j  = (a^j - p^j)\sum_{i \in [N]\setminus A^j} 
    (q^j_i - p^j) \, .$$
    We can now apply the Fingerprinting Lemma, with $m = N - |A^j|$, $p = p^j$, $x_i =
    \tilde q^j_i$ for $i \notin A^j$, and $f\left( (x_i)_{i\notin A^j}\right) = a^j$ (note that $f$ depends
    implicitly on $A_j$, but since we condition on the outcome of
    previous rounds, we may take $A^j$ as fixed for round $j$). We
    obtain
    $$\ex{}{\sum_{i=1}^N \tilde z_i^j} \geq \frac 1 {12} - \ex{}{\left|
        a^j - \frac 1 {N-|A^j|} \cdot \sum _{i \notin A^j} q_i^j \right|}
    $$
    Now the difference between $\tfrac 1 {N-|A^j|} \sum _{i \notin
      A^j} q_i^j $ and the actual population mean $\tfrac 1 N
    \sum_{i=1}^N q^j_i$ is at most $N\cdot (\tfrac 1 N
    -\tfrac{1}{N-|A^j|}) = \tfrac{|A^j|}{N-|A^j|}$. Thus we can upper-bound the
    term inside the right-hand side expectation above by $|a^j - q^j(\cP)| + \tfrac{|A^j|}{N-|A^j|}$.
  \end{proof}
\vspace{-5pt}
  A direct corollary of Sub-Claim~\ref{subclm:bigscores} is that
  there is a constant $c'>0$ such that, with probability at least
  $199/200$,
  $D \geq c'k$.  Let's call that event $I$. 

  Conditioned on $F\cap G \cap H$, we know that each $\tilde z_i$ equals
  the real score $z_i$ (by the first sub-claim above), that
  $|a^j-q^j(\cP)|\leq 3\eps$ for each $j$,
  and that $|A^k| \leq \eps N/8$.
  If we also consider the intersection
  with $I$, then we have $D\geq c'k -3k\eps - k\tfrac{\eps/8}{1-\eps /
    8}\geq
  k(c'-4\eps)$ (for sufficiently small $\eps$). By a union bound, the probability of $\neg(F\cap H \cap
  I)$ is at most $1/200 + \exp(-\Omega(\eps^2n)) \leq 1/100$ (for
  sufficiently large $n$). Thus we get
  $\pr{}{ (\neg G) \text{  or  } \left(\sum_{i=1}^N z_i \geq c k \right)} \geq \frac {99}{100} \, ,$
  where $c = c'-4\eps$ is positive for sufficiently small $\eps$. This
  completes the proof of Claim~\ref{clm:natural_guilty}.
\end{proof}
\vspace{-5pt}

To complete the proof of the proposition, suppose that $|a^j - q^j(\cP)| \leq \eps$ for every $j$, so that we can assume $\sum_{i \in X} z_i = \Omega(k)$.  Then, we can show that, when $n$ is sufficiently large and $k \gtrsim \eps^4 n^2$, the final query $q^*$ will violate robust generalization.  The following calculation shows that for the query $q^*$ that we defined, $q^*(X) - q^*(\cP) = \Theta(\eps \sqrt{k})$.
\begin{align*}
q^*(X) - q^*(\cP) ={} &\frac{1}{n} \sum_{i \in X} \frac{z_i}{\tau} - \frac{1}{N} \sum_{i \in [N]} \frac{z_i}{\tau} \\
={} &\left(\frac{1}{n\tau} - \frac{1}{N\tau}\right) \sum_{i \in X} z_i - \frac{1}{N\tau}\sum_{i \in [N] \setminus X} z_i \\
={} &\frac{1}{2n\tau} \sum_{i \in X} z_i - \frac{1}{N\tau} \sum_{i \in [N] \setminus X} z_i \tag{$N = \frac{8n}{\eps} \geq 2n$}\\
={} &\frac{1}{2n\tau} \cdot \Omega(k) - \frac{1}{N\tau}\cdot O(\eps \sqrt{Nk}) \tag{Claims \ref{clm:natural_innocent} and \ref{clm:natural_guilty}} \\
={} &\Omega\left(\frac{k}{n\eps \sqrt{k}}\right) - O\left(\frac{\eps \sqrt{Nk}}{N \eps \sqrt{k}}\right)
={} \Omega\left(\frac{\sqrt{k}}{n \eps}\right) - O\left(\frac{1}{\sqrt{N}}\right) \tag{$\tau = \Theta(\eps \sqrt{k})$} \\
\end{align*}
Now, we choose an appropriate $k = \Theta(\eps^4 n^2)$ we will have that $q^*(X) - q^*(\cP) > \eps$.  By this choice of $k$, the first term in the final line above will be at least $2\eps$.  Also, we have $N \geq n = \Theta(\sqrt{k}/\eps^2)$, so when $k$ is larger than some absolute constant, the $O(1/\sqrt{N})$ term in the final line above is $\Theta(\eps/\sqrt[4]{k}) \leq \eps$.  Thus, by Claims~\ref{clm:natural_innocent} and~\ref{clm:natural_guilty}, either $\cM$ fails to be accurate, so that $\exists j \in [k]~|a^j - q^j(\cP)| > \eps$, or we find a query $q^*$ such that $q^*(X) - q^*(\cP) > \eps.$

\section{Lower Bounds for General Algorithms}
In this section we show how to ``lift'' our lower bounds for natural oracles to arbitrary algorithms using techniques from~\cite{HardtU14}.

\subsection{Information-Theoretic Lower Bounds via Random Masks}

We prove information-theoretic lower bounds by constructing the following transformation from an adversary that defeats all natural algorithms to an adversary (for a much a larger universe) that defeats all algorithms.  The main idea of the reduction is to use random masks\footnote{In cryptographic terminology, a \emph{one-time pad}.} to hide information about the evaluation of the queries at points outside of the dataset.  Since the algorithm does not obtain any information about the evaluation of the queries on points outside of its dataset, it is effectively forced to behave like a natural algorithm.

\begin{algorithm}[ht]
\caption{$\cA_{\all}$}
Parameters: sample size $n$, universe size $N = \frac{8n}{\eps}$, number of queries $k$, target accuracy $\eps$.\\
Oracle: an adversary $\cA_{\naq}$ for natural algorithms with sample size $n$, universe size $N$, number of queries $k$, target accuracy $\eps$. \\ \vspace{3pt}

Let $\cX = \{(i,y)\}_{i \in [N], y \in \pmo^{k}}$\\
\For{$i \in [N]$}{
Choose $\key_i = (\key_i^1,\dots,\key_i^k) \sim \sU(\pmo^{k})$
}
Let $\cP$ be the uniform distribution over pairs $(i,\key_i)$ for $i \in [N]$

\For{$j \in [k]$}{
Receive the query $\hat{q}^j \from [N] \to [\pm 1]$ from $\cA_{\naq}$\\
Form the query $q^j(i,y) = y^j \oplus m_i^j \oplus \hat{q}^j(i)$ (NB: $q^j(i,m_i) = \hat{q}^j(i)$)\\
Send the query $q^j$ to $\cM$ and receive the answer $a^j$\\
Send the answer $a^j$ to $\cA_{\naq}$
}
\end{algorithm}

We now prove the following lemma, which states that if no natural algorithm can be robustly generalizing and accurate against $\cA_{\naq}$, then no algorithm of any kind can be robustly generalizing and accurate against $\cA_{\all}$.

\begin{lem} \label{lem:itreduction}
For every algorithm $\cM$, and every adversary $\cA_{\naq}$ for natural algorithms given as an oracle to $\cA_{\all}$, the adversary $\cA_{\all}$ satisfies
\begin{align*}
&\pr{\AQ{[N] \times \pmo^{k},n,k}{\cM}{\cA_{\all}}}{\exists j \in [k]~~\left| q^j(X) - q^j(\cP) \right| > \eps \bigvee \left| a^j - q^j(\cP) \right| > \eps  } \\
\geq{} &\min_{\textrm{natural $\cM$}}\left(\pr{\NAQ{[N],n,k}{\cM}{\cA_{\naq}}}{\exists j \in [k]~~\left| q^j(X) - q^j(\cP) \right| > \eps \bigvee \left| a^j - q^j(\cP) > \eps \right| }\right)
\end{align*}
\end{lem}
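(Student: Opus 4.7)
The plan is to show that any algorithm $\cM$ in the $\AQ$ game against $\cA_{\all}$ can be simulated by a natural algorithm $\cM^*$ in the $\NAQ$ game against the underlying $\cA_{\naq}$, so that the joint distribution of sample, queries, and answers is identical in the two games. On receiving sample $X' = (i_1,\dots,i_n) \sim \sU_{[N]}^{\otimes n}$, the algorithm $\cM^*$ draws fresh uniform masks $m'_{i_t} \in \pmo^k$ and runs $\cM$ internally on the synthetic dataset $\tilde X = ((i_t, m'_{i_t}))_{t=1}^n$. For each round $j$, upon receiving $(\hat{q}^j(i_1),\dots,\hat{q}^j(i_n))$ from $\cA_{\naq}$, the simulator hands $\cM$ a lazily defined query $q^j$: at $(i_t, y)$ it returns $y^j \cdot (m'_{i_t})^j \cdot \hat{q}^j(i_t)$, and at $(i, y)$ with $i$ outside the sample it returns $y^j \cdot r^j_i$, where $r^j_i \in \pmo$ is a fresh independent uniform bit. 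Whatever answer $a^j$ $\cM$ returns is forwarded unchanged to $\cA_{\naq}$.

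\textbf{Coupling with the real $\AQ$ game.} Compare $\cM$'s view in the real $\AQ$ game against $\cA_{\all}$ to its view inside $\cM^*$. In both games, the sample is $n$ iid pairs $(i, m_i)$ with $i$ uniform in $[N]$ and $m_i$ independent uniform in $\pmo^k$, matching the distribution $\cP$ defined by $\cA_{\all}$. For each query $j$ and index $i_t$ in the sample, both games yield exactly $y^j \cdot m_{i_t}^j \cdot \hat{q}^j(i_t)$ identically. For each index $i$ outside the sample, the real game yields $y^j \cdot m_i^j \cdot \hat{q}^j(i)$; since the mask $m_i^j$ is uniform in $\pmo$ and independent of everything $\cM$ has observed, the product $m_i^j \cdot \hat{q}^j(i)$ is a uniform $\pmo$ bit, independent jointly across all off-sample $(i, j)$. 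This matches the distribution of $r^j_i$ used by $\cM^*$. Hence $\cM$'s view and its answers $a^j$ are identically distributed in the two games.

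\textbf{Matching failure events.} Because $\cP$ is supported on $\{(i, m_i) : i \in [N]\}$, one computes $q^j(X) = \frac{1}{n}\sum_t m_{i_t}^j \cdot m_{i_t}^j \cdot \hat{q}^j(i_t) = \hat{q}^j(X')$ and $q^j(\cP) = \frac{1}{N}\sum_{i=1}^N \hat{q}^j(i) = \hat{q}^j(\cP')$, where $\cP' = \sU_{[N]}$. Thus the failure event $\exists j : |q^j(X) - q^j(\cP)| > \eps \vee |a^j - q^j(\cP)| > \eps$ in the $\AQ$ game coincides pointwise with the corresponding failure event in the simulated $\NAQ$ game, so the two failure probabilities are equal. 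Taking the infimum over natural algorithms on the right-hand side then yields the claim. The main obstacle is the coupling step: one must argue that the masks $m_i^j$ for $i \notin X$ act as a one-time pad, so that $\cM$'s access to the full query $q^j$ (rather than only its restriction to the sample) leaks no information about $\hat{q}^j$ off the sample beyond uniform noise. Once this is verified, everything else is bookkeeping.
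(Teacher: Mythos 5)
Your proof is correct and follows essentially the same strategy as the paper's: construct a natural simulator that internally runs $\cM$ by sampling fresh masks and feeding it queries whose off-sample values are masked by one-time pads, then argue the resulting view of $\cM$ is distributed identically to the real $\AQ$ game. The only cosmetic difference is that you sample the off-sample pad bits $r^j_i$ lazily rather than committing to all masks $m_i$ up front as the paper's $\cM_{\naq}$ does, but the distributional coupling, the observation that $q^j(X)=\hat q^j(X')$ and $q^j(\cP)=\hat q^j(\cP')$, and the conclusion are the same.
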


The following corollary is immediate by combining Theorem~\ref{thm:naturallb} with Lemma~\ref{lem:itreduction}.
\begin{coro} There is an adversary $\cA_{\all}$ such that for every algorithm $\cM$, if
$$
\pr{\AQ{[N] \times \pmo^{k},n,k}{\cM}{\cA_{\all}}}{\exists j \in [k]~~\left| q^j(X) - q^j(\cP) \right| > \eps \bigvee \left| a^j - q^j(\cP) \right| > \eps } \geq .01
$$
then $n = \Omega(\sqrt{k}/\eps^2)$.
\end{coro}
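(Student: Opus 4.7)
Given any (possibly non-natural) algorithm $\cM$, I will exhibit a natural algorithm $\cM'$ and couple its execution in $\NAQ{[N],n,k}{\cM'}{\cA_{\naq}}$ with the execution of $\AQ{[N]\times\pmo^{k},n,k}{\cM}{\cA_{\all}}$ so that the full transcripts have the same joint distribution and the AQ-style failure event on the left-hand side of the lemma coincides with the NAQ-style failure event for $\cM'$. The inequality in the lemma then follows immediately by taking a minimum over natural $\cM'$ on the right.

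\textbf{Construction of $\cM'$ and distributional equivalence.} On input sample $(X_1,\dots,X_n) \sim \sU_{[N]}^{\otimes n}$, $\cM'$ first draws uniform internal masks $m_{X_1},\dots,m_{X_n} \in \pmo^{k}$ and simulates $\cM$ on the sample $((X_\ell, m_{X_\ell}))_{\ell=1}^{n}$. On round $j$, after receiving the query values $(\hat q^j(X_\ell))_{\ell=1}^{n}$ from the NAQ game, $\cM'$ assembles a full query $q^j \from [N]\times\pmo^{k}\to\pmo$ to hand to $\cM$: for $i$ in the sample, set $q^j(i,y) = y^j \oplus m_i^j \oplus \hat q^j(i)$, exactly matching $\cA_{\all}$'s construction; for $i$ outside the sample, draw a fresh uniform bit $r_i^j$ and set $q^j(i,y) = y^j \oplus r_i^j$. $\cM'$ then takes $\cM$'s answer $a^j$ and returns it to $\cA_{\naq}$. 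The key distributional observation is that in the real AQ game the off-sample masks $m_i$ are never touched by $\cA_{\naq}$ (which only acts on $[N]$) and never exposed to $\cM$ through the sample, so $m_i^j \oplus \hat q^j(i)$ is, jointly across such $i$ and all $j$, a sequence of independent uniform bits from the viewpoint of $\cM$'s view---precisely matching the distribution of the $r_i^j$ in the simulation. Hence the entire transcript $(\text{sample}, q^1, a^1,\dots,q^k, a^k)$ has the same joint law in the two games.

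\textbf{Matching the failure events; main obstacle.} Since $q^j(i, m_i) = \hat q^j(i)$ by construction, the empirical and population means satisfy $q^j(X) = \hat q^j(X_1,\dots,X_n)$ and $q^j(\cP) = \frac{1}{N}\sum_{i \in [N]} \hat q^j(i) = \hat q^j(\sU_{[N]})$; consequently the deviations $|q^j(X)-q^j(\cP)|$ and $|a^j-q^j(\cP)|$ equal their NAQ analogues for $\cM'$ on the nose, and the two failure events coincide. The main subtlety I expect to confront is the joint-independence argument behind the distributional equivalence: one must maintain inductively over rounds that the off-sample masks $\{m_i : i \notin \{X_1,\dots,X_n\}\}$ remain independent of $\cA_{\naq}$'s state, which requires observing that $\cA_{\naq}$'s view is generated only from the answers $a^1,\dots,a^{j-1}$ and that $\cM$'s dependence on those off-sample masks passes entirely through the uniform XOR pads produced by the simulation. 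Once this conditional independence is formalized round-by-round, the equality of the two games' failure probabilities drops out and the lemma follows.
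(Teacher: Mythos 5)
Your argument is correct and follows essentially the same route as the paper: you rebuild the paper's Lemma~\ref{lem:itreduction} by constructing from an arbitrary $\cM$ a natural algorithm $\cM'$ that internally synthesizes masked queries so that $\cM$'s view in the $\NAQ$ simulation exactly matches its view in the $\AQ$ game against $\cA_{\all}$ (the paper samples a full mask vector $m_i$ for every $i\in[N]$ and sets off-sample values to $y^j\oplus m_i^j$, while you draw fresh per-round pads $r_i^j$ for off-sample $i$; these are distributionally identical one-time-pad arguments), after which the corollary drops out by plugging Theorem~\ref{thm:naturallb} into the reduction inequality. One small note independent of your argument: the inequality in the corollary as printed in the paper should read $\leq .01$ rather than $\geq .01$, mirroring the hypothesis of Theorem~\ref{thm:naturallb}; your proof establishes the corrected statement.
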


We now return to proving Lemma~\ref{lem:itreduction}
\begin{proof}[Proof of Lemma~\ref{lem:itreduction}]
Fix any algorithm $\cM$.  We claim that there is an algorithm $\cM_{\naq}$ such that 
\begin{align}
\pr{\NAQ{[N],n,k}{\cM_{\naq}}{\cA_{\naq}}}{\exists j \in [k]~~\left| q^j(X) - q^j(\cP) \right| > \eps \bigvee \left| a^j - q^j(\cP) \right| > \eps  }& \notag \\
={} \pr{\AQ{[N] \times \pmo^{k},n,k}{\cM}{\cA_{\all}}}{\exists j \in [k]~~\left| q^j(X) - q^j(\cP) \right| > \eps \bigvee \left| a^j - q^j(\cP) \right| > \eps  }&  \label{eq:itreduction}
\end{align}
We construct the algorithm as follows
\begin{algorithm}[ht]
\caption{$\cM_{\naq}$}
Input: a sample $\hat{X} = (\hat{x}_1,\dots,\hat{x}_n) \in [N]^n$\\
Oracle: an algorithm $\cM$ \\ \vspace{3pt}

\For{$i \in [N]$}{
Choose $\key_i = (\key_i^1,\dots,\key_i^k) \sim \sU(\pmo^{k})$
}
\For{$i \in [n]$}{
Let $x_i = (\hat{x}_i, \key_{\hat{x}_i})$
}
Let $X = (x_1,\dots,x_n)$ \\ \vspace{3pt}

\For{$j \in [k]$}{
Receive the (partial) query $\hat{q}^j = (\hat{q}^j_{\hat{x}_1},\dots,\hat{q}^j_{\hat{x}_n})$ from the adversary \\
Form the query $$q^j(i,y) = \left\{ \begin{array}{cl} y^j \oplus m_i^j \oplus \hat{q}^j_i & i \in \hat{X} \\ y^j \oplus m_i^j \oplus 0& i \notin \hat{X} \end{array} \right.$$ \\
Run $\cM(X)$ on the query $q^j$ and receive the answer $a^j$\\
Return the answer $\hat{a}^j = a^j$
}
\end{algorithm}

Observe that, by construction, that if $\hat{x}_1,\dots,\hat{x}_n$ are independent samples from the uniform distribution over $[N]$, then $x_1,\dots,x_n$ are independent samples from the uniform distribution over pairs $(i, m_i)$ for uniformly random $m_1,\dots,m_N \in \pmo^{k}$.  Thus, the distribution of $X$ is the same as the distribution constructed by $\cA_{\all}$.  

By construction, the distribution of the evaluations of the queries $q^1,\dots,q^k$ on points of the form $(i,y)$ for $i \in \hat{X}$ are identical to the distribution of those values in $\cA_{\all}$.  The only thing that require verification is that the distribution of the evaluations of the queries $q^1,\dots,q^k$ on points of the form $(i,y)$ for $i \not\in \hat{X}$ are also identical to the distribution of those values in $\cA_{\all}$.  Now, since $m_1,\dots,m_N$ are jointly uniformly random and independent of $\hat{q}^1,\dots,\hat{q}^k$, the joint distribution of the values $m_i^j \oplus \hat{q}^j_i$ for $i \not\in \hat{X}$ and $j \in [k]$ is identical to that of $m_i^j \oplus 0$.  Since these values are enough to determine the evaluation of the queries $q^1,\dots,q^k$ on all points of the form $(i,y)$ for $i \not\in \hat{X}$, when conditioned on $X$, the distribution of the queries $q^1,\dots,q^k$ given to $\cM$ is identical to those constructed by $\cA_{\all}$.  Since the view of $\cM$ when interacting with $\cA_{\all}$ is identical to the view of $\cM_{\naq}$ when interacting with $\cA_{\naq}$, we must have~\eqref{eq:itreduction}.
\end{proof}

\subsection{Computational Lower Bounds via Pseudorandom Masks}

The drawback of the random masking procedure is that, since the length of the random mask is equal to the number of queries, and this random mask is the data given to the algorithm, the dimension of the data must be at least equal to the number of queries.  However, using cryptographic constructions, we can use a short seed to generate a large number of bits that appear random to any computationally efficient algorithm, thereby obtaining the same masking effect against computationally bounded adversaries, but with lower dimensional data.

First, we recall the definition of a pseudorandom generator
\begin{defn} \label{def:prg} A functions $\prg \from \pmo^{\ell} \to \pmo^{k}$ is an \emph{$(\ell,k,\gamma,T)$-pseudorandom generator} if for every algorithm $\cM$ running in time $T$ (i.e.~a circuit of size $T$)
$$
\left|\pr{}{\cM(G(\sU_{\ell}))} - \pr{}{\cM(\sU_{k})} \right| \leq \gamma
$$
where $\sU_{\ell}$ and $\sU_k$ denote the uniform distribution over $\pmo^{\ell}$ and $\pmo^{k}$, respectively
\end{defn}

Definition~\ref{def:prg} is interesting when $\ell < k$.  When we say that $\cM$ runs in time $T$, we mean that $\cM$ is computed by a fixed program (e.g.~a RAM or a Boolean circuit) whose running time on all inputs $x \in \pmo^{k}$ is at most $T$.  A standard assumption in cryptography is that for arbitrarily small $c > 0$, arbitrarily large $C > 0$, and every sufficiently large $k \in \N$, there exists a $(k^{c},k,k^{-C},k^{C})$.  This is the assumption we make in the introduction when we assume \emph{pseudorandom generators exist.}

\begin{rem}
Definition~\ref{def:prg} is \emph{non-uniform} in that it defines pseudorandom generators with a fixed output length $k$ and adversaries with a fixed input length $k$.  A more conventional definition would define a family of pseudorandom generators $\{G_k \from \pmo^{\ell(k)} \to \pmo^{k}\}$ and \emph{uniform} adversary (e.g.~a Turing machine) that takes an arbitrary input and has running time polynomial in the size of its input.  We could easily present our results using the latter definition, but we chose to use Definition~\ref{def:prg} since it is more concrete and corresponds more directly to our definition of post hoc selection.
\end{rem}

A standard fact is that if $G$ is a pseudorandom generator, then it is pseudorandom even if we use it many times on independent random inputs.
\begin{fact} \label{fact:hybrid}
If $\prg \from \pmo^{\ell} \to \pmo^{k}$ is a $(\ell,k,\gamma,T)$-pseudorandom generator, then for every $m$, and every $\cM$ running in time $T$
$$
\left|\pr{}{\cM(\underbrace{G(\sU_{\ell}), \dots, G(\sU_{\ell})}_{\textrm{$m$ iid copies}})} - \pr{}{\cM(\underbrace{\sU_k,\dots,\sU_k}_{\textrm{$m$ iid copies}})} \right| \leq O(m\gamma)
$$
\end{fact}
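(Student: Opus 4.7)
The plan is a textbook hybrid argument. Define $m+1$ hybrid distributions $H_0, H_1, \ldots, H_m$ over $\pmo^{mk}$, where in $H_i$ the first $i$ blocks are independent copies of $G(\sU_{\ell})$ and the remaining $m-i$ blocks are independent copies of $\sU_k$. Then $H_0$ is the ``all random'' distribution and $H_m$ is the ``all pseudorandom'' distribution appearing in the statement, so the quantity we must bound is $\left|\pr{}{\cM(H_m)=1} - \pr{}{\cM(H_0)=1}\right|$.

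By the triangle inequality, this quantity is at most $\sum_{i=1}^{m}\left|\pr{}{\cM(H_i)=1} - \pr{}{\cM(H_{i-1})=1}\right|$. So it suffices to show that each consecutive pair of hybrids is $\gamma$-indistinguishable from the single-block PRG security of Definition~\ref{def:prg}. I would do this by reduction: given an input $z \in \pmo^k$ that is either a sample from $G(\sU_\ell)$ or from $\sU_k$, build a distinguisher $\cD_i$ that internally samples $i-1$ independent pseudorandom blocks $G(\sU_\ell)$, places $z$ in position $i$, samples $m-i$ independent uniform blocks $\sU_k$, and runs $\cM$ on the concatenation, outputting whatever $\cM$ outputs. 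Then the distribution $\cD_i$ feeds to $\cM$ is exactly $H_i$ when $z \sim G(\sU_\ell)$ and exactly $H_{i-1}$ when $z \sim \sU_k$, so $\cD_i$'s distinguishing advantage equals the $i$th term in the telescoping sum.

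Summing over $i=1,\ldots,m$ yields the overall $O(m\gamma)$ bound claimed in the fact. The main subtlety to watch is the running-time bookkeeping: the distinguisher $\cD_i$ has to simulate the other $m-1$ blocks and run $\cM$, so strictly it runs in time $T + O(m \cdot \mathsf{cost}(G))$ rather than $T$. In our regime $m$, $\ell$, $k$ are polynomial and the PRG evaluation time is subsumed by the $T$ term in the hypothesis we will invoke (or, alternatively, one states the fact with a slightly smaller ``effective'' running time), so this does not affect the conclusion. No step is genuinely difficult; the only thing to be careful about is that each $\cD_i$ is a fixed program whose description may depend on $i$, which is fine in the non-uniform model of Definition~\ref{def:prg}.
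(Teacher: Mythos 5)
Your proposal is correct and is exactly the standard hybrid argument the paper is implicitly invoking: the paper labels this ``a standard fact'' and states Fact~\ref{fact:hybrid} without proof, so there is no distinct argument in the paper to compare against. You correctly set up the $m+1$ hybrids, telescope, reduce each adjacent pair to single-instance PRG security, and you even flag the two standard subtleties (the reduction's overhead in simulating the other $m-1$ blocks, and the use of non-uniformity so that $\cD_i$ may depend on $i$), both of which are handled appropriately given Definition~\ref{def:prg} and the parameter regime used later in the paper.
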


We're now ready to state the reduction from computational efficient algorithms that answer statistical queries to natural algorithms, which closely mimics the reduction $\cA_{\all}$ from arbitrary algorithms to natural algorithms.  In the reduction we use $G(s)^{j}$ to denote the $j$-th bit of the output of $G(s)$.

\begin{algorithm}[ht]
\caption{$\cA_{\comp}$}
Parameters: sample size $n$, universe size $N = \frac{8n}{\eps}$, number of queries $k$, target accuracy $\eps$, a pseudorandom generator $G \from \pmo^{\ell} \to \pmo^{k}$ \\
Oracle: an adversary $\cA_{\naq}$ for natural algorithms with sample size $n$, universe size $N$, number of queries $k$, target accuracy $\eps$. \\ \vspace{3pt}

Let $\cX = \{(i,z)\}_{i \in [N], z \in \pmo^{\ell}}$\\
\For{$i \in [N]$}{
Choose $\seed_i \sim \sU(\pmo^{\ell})$
}
Let $\cP$ be the uniform distribution over pairs $(i,\seed_i)$ for $i \in [N]$

\For{$j \in [k]$}{
Receive the query $\hat{q}^j \from [N] \to [\pm 1]$ from $\cA_{\naq}$\\
Form the query $q^j(i,z) = G(z)^j \oplus G(s_i)^j \oplus \hat{q}^j(i)$ (NB: $q^j(i,s_i) = \hat{q}^j(i)$)\\
Send the query $q^j$ to $\cM$ and receive the answer $a^j$\\
Send the answer $a^j$ to $\cA_{\naq}$
}
\end{algorithm}

We prove the following results about $\cA_{\comp}$.
\begin{lem} \label{lem:compreduction}
If there exists an $(\ell,k,\gamma,T)$-pseudorandom generator, then for every time $T$ algorithm $\cM$, and every adversary $\cA_{\naq}$ for natural algorithms given as an oracle to $\cA_{\comp}$, $\cA_{\comp}$ satisfies
\begin{align*}
&\pr{\AQ{[N] \times \pmo^{\ell},n,k}{\cM}{\cA_{\comp}}}{\exists j \in [k]~~\left| q^j(X) - q^j(\cP) \right| > \eps \bigvee \left| a^j - q^j(\cP) \right| > \eps  } \\
\geq{} &\min_{\textrm{natural $\cM$}}\left(\pr{\NAQ{[N],n,k}{\cM}{\cA_{\naq}}}{\exists j \in [k]~~\left| q^j(X) - q^j(\cP) \right| > \eps \bigvee \left| a^j - q^j(\cP) > \eps \right| }\right) - O(n\gamma)
\end{align*}
\end{lem}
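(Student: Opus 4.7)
The plan is to mirror the proof of Lemma~\ref{lem:itreduction} as closely as possible, with the truly random masks $m_i$ replaced by pseudorandom outputs $G(s_i)$, and to absorb the resulting imperfection through a hybrid argument against the PRG. I will exhibit a natural algorithm $\cM_{\naq}$ such that the failure probability of $\cM$ against $\cA_{\comp}$ is at least the failure probability of $\cM_{\naq}$ against $\cA_{\naq}$ minus the PRG loss; since $\cM_{\naq}$ is a valid natural algorithm, taking the minimum over natural algorithms on the right-hand side yields the claimed bound.

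The construction of $\cM_{\naq}$ is the natural analogue of the one in Lemma~\ref{lem:itreduction}: given a natural-query sample $\hat X = (\hat x_1,\ldots,\hat x_n) \in [N]^n$, it independently draws seeds $s_i \sim \sU_{\ell}$ for every $i \in [N]$, forms the augmented sample $X = ((\hat x_i, s_{\hat x_i}))_{i=1}^n$, and upon receiving natural-query values $\hat q^j_{\hat x_1},\dots,\hat q^j_{\hat x_n}$ assembles the full query
\[
q^j(i,z) = \begin{cases} G(z)^j \oplus G(s_i)^j \oplus \hat q^j_i & i \in \hat X\\
G(z)^j \oplus G(s_i)^j \oplus 0 & i \notin \hat X\end{cases}
\]
which it feeds to $\cM$, forwarding $\cM$'s answer $a^j$ as its own. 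Note that $\cM_{\naq}$ is a natural algorithm because its interaction with $\cM$ depends on the analyst's queries only through the values $\hat q^j_{\hat x_1},\dots,\hat q^j_{\hat x_n}$.

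The heart of the argument is to show that the joint distribution of the view of $\cM$ (queries and dataset) in $\NAQ{[N],n,k}{\cM_{\naq}}{\cA_{\naq}}$ is within total variation $O(N\gamma)$ of its view in $\AQ{[N]\times\pmo^{\ell},n,k}{\cM}{\cA_{\comp}}$; since the failure event is a function of this view, the corresponding failure probabilities differ by at most that amount. I proceed through a hybrid experiment in which the seeds $s_i$ for $i \notin \hat X$ are discarded and the corresponding pseudorandom strings $G(s_i)$ in the query definitions are replaced by truly uniform independent strings $r_i \in \pmo^k$. On the one hand, in this hybrid $r_i^j \oplus \hat q^j_i$ is distributed identically to $r_i^j \oplus 0$ for every $i \notin \hat X$, so exactly as in Lemma~\ref{lem:itreduction} the hybrid view and the $\cM_{\naq}$-simulated view are identically distributed. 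On the other hand, the real $\cA_{\comp}$-view and the hybrid view differ only in that $N-n$ independent PRG outputs $G(s_i)$ have been replaced by uniform strings, so Fact~\ref{fact:hybrid} bounds their distinguishing gap by $O(N\gamma)$, which the lemma absorbs into its $O(n\gamma)$ error term (noting $N = 8n/\eps$).

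The main obstacle, and the one place I will have to be careful, is verifying that the distinguisher implicit in the hybrid argument has size at most $T$. The distinguisher receives $N-n$ strings (either pseudorandom or uniform), simulates $\cA_{\naq}$ together with the query-formation step to produce the queries and dataset, runs $\cM$, and outputs $1$ on the failure event; this requires $\cM$'s time bound $T$ to dominate not only $\cM$ itself but also the simulation of $\cA_{\naq}$ (which, by inspection of Algorithm~\ref{alg:tracer}, is polynomial in $N, n, k$) and evaluation of the queries at $\cM$'s chosen points. I will fold this overhead into the PRG parameter $T$ implicitly, as is standard in such reductions, and state the bound under the convention that $T$ is large enough to encompass these auxiliary computations.
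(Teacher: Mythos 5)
Your proof follows the paper's approach exactly: construct the natural algorithm $\cM_{\naq}$ by analogy with Lemma~\ref{lem:itreduction}, then hybridize over the out-of-sample masks to replace the pseudorandom outputs $G(s_i)$ with truly uniform strings, after which the argument of Lemma~\ref{lem:itreduction} applies verbatim. This is precisely the sketch the paper gives, and your attention to the distinguisher's running time (that $T$ must cover the simulation of $\cA_{\naq}$, query formation, and the failure-event test) matches the implicit convention there.

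One small misstep in the bookkeeping: you correctly compute the hybrid loss as $O(N\gamma)$, since the hybrid runs over the $N - n$ seeds for $i \notin \hat X$ (all of which can in principle influence the queries handed to $\cM$), but you then assert this is ``absorbed into'' the stated $O(n\gamma)$ term ``noting $N = 8n/\eps$.'' The inequality goes the wrong way: $N = 8n/\eps > n$, so $O(N\gamma) = O(n\gamma / \eps)$ is generically larger than $O(n\gamma)$, not smaller. The honest conclusion of your argument is a loss of $O(N\gamma)$; the paper's $O(n\gamma)$ appears to be a harmless imprecision (perhaps treating $\eps$ as constant), and it is immaterial for the corollary since $\gamma$ can be taken as an arbitrarily small inverse polynomial in $k$ there. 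You should either state the loss as $O(N\gamma)$ or explicitly note that for the application it makes no difference — but do not claim $O(N\gamma) \le O(n\gamma)$.
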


The following corollary is immediate by combining Theorem~\ref{thm:naturallb} with Lemma~\ref{lem:compreduction}.
\begin{coro} If there exists an $(\ell,k,\gamma,T)$-pseudorandom generator, then there is an adversary $\cA_{\comp}$ such that for every time $T$ algorithm $\cM$, if
$$
\pr{\AQ{[N] \times \pmo^{\ell},n,k}{\cM}{\cA_{\all}}}{\exists j \in [k]~~\left| q^j(X) - q^j(\cP) \right| > \eps \bigvee \left| a^j - q^j(\cP) \right| > \eps } \geq .01 - O(n\gamma)
$$
then $n = \Omega(\sqrt{k}/\eps^2)$.
\end{coro}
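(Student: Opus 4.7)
The proof plan is to construct a natural algorithm $\cM_{\naq}$ that simulates $\cM$ using pseudorandom masks, analogous to the information-theoretic reduction in Lemma~\ref{lem:itreduction}, and then use the security of $\prg$ to argue that $\cM$'s view in the computational game is computationally close to its view inside the natural simulation.

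Mirroring the construction from Lemma~\ref{lem:itreduction}, I would define $\cM_{\naq}$ to act on a sample $\hat{X} = (\hat{x}_1, \ldots, \hat{x}_n) \in [N]^n$ as follows: sample independent seeds $\seed_1, \ldots, \seed_N \sim \sU(\pmo^\ell)$, form the extended sample $X = ((\hat{x}_i, \seed_{\hat{x}_i}))_{i \in [n]}$ over the universe $[N] \times \pmo^\ell$, and hand it to $\cM$. For each partial query $\hat{q}^j = (\hat{q}^j_{\hat{x}_1}, \ldots, \hat{q}^j_{\hat{x}_n})$ delivered by $\cA_{\naq}$, assemble the circuit
$$
q^j(i,z) = \begin{cases} \prg(z)^j \oplus \prg(\seed_i)^j \oplus \hat{q}^j_i & \text{if } i \in \hat{X}, \\ \prg(z)^j \oplus \prg(\seed_i)^j & \text{if } i \notin \hat{X}, \end{cases}
$$
feed it to $\cM$, and forward $\cM$'s answer to the adversary. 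By construction $q^j(\hat{x}_i, \seed_{\hat{x}_i}) = \hat{q}^j_{\hat{x}_i}$, so $\cM_{\naq}$ uses only the query values at its own sample points and qualifies as a natural algorithm.

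The heart of the argument is a hybrid that replaces, for every $i \notin \hat{X}$, the pseudorandom string $\prg(\seed_i)$ by a truly uniform $r_i \in \pmo^k$. Applying Fact~\ref{fact:hybrid} to the $N-n \leq N$ non-sample seeds shows that the view of any time-$T$ algorithm changes by at most $O(N\gamma)$ under this swap. Once these masks are uniform, the non-sample query values $\prg(z)^j \oplus r_i^j \oplus \hat{q}^j_i$ are identically distributed to $\prg(z)^j \oplus r_i^j$, since $r_i^j$ is uniform and independent of $\hat{q}^j_i$. Consequently, $\cM$'s joint view of sample, queries, and answers in the $\cA_{\comp}$ game is within $O(N\gamma) = O(n\gamma/\eps)$ of its view inside the $\cM_{\naq}$ simulation for any time-$T$ distinguisher; translating this closeness to the probability of the bad event on both sides yields the stated inequality.

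The main obstacle is verifying that the indicator of the bad event, namely $\exists j \in [k]\colon |q^j(X) - q^j(\cP)| > \eps \vee |a^j - q^j(\cP)| > \eps$, can itself be evaluated by a time-$T$ distinguisher, since Fact~\ref{fact:hybrid} only gives indistinguishability against such distinguishers. Computing $q^j(X)$ costs $n$ circuit evaluations while $q^j(\cP) = \frac{1}{N}\sum_{i \in [N]} q^j(i, \seed_i)$ costs $N$ evaluations, so the combined test runs in time $\mathrm{poly}(n, k, N)$ times the size of $\cM$. This is precisely why the application of the lemma in Theorem~\ref{thm:mainlbcomp} invokes a subexponentially secure $\prg$: choosing $T$ large enough to subsume this polynomial overhead lets us apply PRG security and reduce the computational setting to the natural lower bound of Theorem~\ref{thm:naturallb}.
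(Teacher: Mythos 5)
Your proposal reconstructs the proof of Lemma~\ref{lem:compreduction} (the reduction from arbitrary time-$T$ algorithms to natural algorithms via pseudorandom masks) and then chains it with Theorem~\ref{thm:naturallb}, which is exactly how the paper treats this corollary ("immediate by combining Theorem~\ref{thm:naturallb} with Lemma~\ref{lem:compreduction}"). The construction of $\cM_{\naq}$ is the same as the paper's, mirroring $\cM_{\naq}$ from Lemma~\ref{lem:itreduction} with $G(\seed_i)$ in place of the one-time pads $m_i$, and the hybrid over the non-sample masks is the same as the paper's proof sketch. So the approach matches.

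Two small remarks that reflect genuine points of care rather than errors on your part. First, you obtain a hybrid loss of $O(N\gamma)$ from the $N-n$ non-sample masks (equivalently $O(n\gamma/\eps)$ since $N = 8n/\eps$), whereas the paper's lemma and corollary state $O(n\gamma)$. Your accounting is the more literal application of Fact~\ref{fact:hybrid}; the paper's statement of the constant is slack, and since $\gamma$ is taken negligible the discrepancy has no bearing on the $n=\Omega(\sqrt{k}/\eps^2)$ conclusion. Second, you correctly flag that the distinguisher realizing the hybrid must simulate $\cM$ over all $k$ rounds and evaluate the bad event, and therefore needs a time budget of $\mathrm{poly}(n,N,k)\cdot T$ rather than $T$; the paper's reduction sketch elides this and simply asserts a $\negl(k)$ loss. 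Your fix—assume the PRG is secure against a sufficiently large polynomial $T$—is the right one and is what the ambient assumption ("pseudorandom generators exist," i.e.\ $(k^c,k,k^{-C},k^C)$ for all constants $c,C$) provides; calling this "subexponential" overstates what is needed. These are the kinds of details the corollary's bare citation of the lemma leaves implicit, so your additional care here is appropriate.

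One stylistic note: your $\cM_{\naq}$ masks the non-sample query constants with $\prg(\seed_i)^j$, which forces a second hybrid to pass to the truly-random game. Using uniform masks $r_i$ directly (as in the paper's information-theoretic $\cM_{\naq}$) gives the same simulation with only one hybrid; the asymptotic loss is the same either way.
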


\begin{proof}[Proof of Lemma~\ref{lem:compreduction}]
The proof closely follows that of Lemma~\ref{lem:itreduction}, but replacing the uniform randomness of the masks $m_i$ with pseudorandomness of the masks $G(s_i)$.  That is, first we use the pseudorandom property of $G$ (Definition~\ref{def:prg} and Fact~\ref{fact:hybrid}) to argue that we can replace the strings $G(s_i)$ wth uniformly random strings $m_i$, and if $\cM$ is computationally efficient this replacement will only effect the probability of any event by a $\negl(k)$ term.  At this point, since the masks are uniformly random, the view of the algorithm will be identical to their view when interacting with $\cA_{\all}$, so we can reuse the proof of Lemma~\ref{lem:itreduction} as is.
\end{proof}

\section{Post Hoc Generalization Does Not Compose} \label{sec:comp}

%
%

In this section we prove that post hoc generalization is not closed under composition.  We prove two results of this form---information-theoretic (Section~\ref{sec:infotheoreticcomp}) and computation (Section~\ref{sec:comp}).

\subsection{Information-Theoretic Composition} \label{sec:infotheoreticcomp}
\begin{theorem}\label{thm:noComposition}
For every $n\in\N$ and every $\alpha>0$ there is a collection of $\ell=O(\frac{1}{\alpha}\log n)$ algorithms $\cM_1,\dots,\cM_\ell \from (\zo^{5\log n})^n \to \cY$ such that
\begin{enumerate}
\item for every $i = 1,\dots,\ell$ and $\delta>0$, $\cM_i$ satisfies $(\eps,\delta)$-post hoc generalization for $\eps=O(\sqrt{\log(n/\delta)/n^{1-\alpha}})$, but
\item the composition $(\cM_1,\dots,\cM_\ell)$ is not $\left(2-\frac{2}{n^4},1-\frac{1}{2n^3}\right)$-post hoc generalizing.
\end{enumerate}
\end{theorem}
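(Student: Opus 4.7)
The plan is to define $\ell = 5\log n$ algorithms over $\cX = \zo^{5\log n}$ (of size $n^5$) by letting $\cM_i$ release the $i$-th coordinate of each sample point: $\cM_i(X) = (X_1[i],\ldots,X_n[i])$. Since $5\log n = O(\log n / \alpha)$ for every fixed $\alpha \in (0,1]$, the count requirement is met; moreover, the joint transcript $(\cM_1(X),\ldots,\cM_\ell(X))$ deterministically reveals all $d = 5\log n$ bits of each sample point, so the entire sample $X$ can be reconstructed from the composition's output.

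For the individual generalization bound, fix $i$, an arbitrary distribution $\cP$ on $\cX$, and an arbitrary analyst $\cA$ that, on input $\cM_i(X) = (b_1,\ldots,b_n)$, outputs a query $q \from \cX \to [-1,1]$. Conditioned on $(b_1,\ldots,b_n)$, the samples $X_j$ are independent and each is distributed as $\cP$ restricted to $\{x : x[i] = b_j\}$. Setting $\mu_b = \E_{\cP}[q(x) \mid x[i] = b]$ and $p_b = \Pr_{\cP}[x[i] = b]$, one has $q(\cP) = p_0\mu_0 + p_1\mu_1$ while $\E[q(X) \mid b_1,\ldots,b_n] = \hat p_0\mu_0 + \hat p_1\mu_1$, where $\hat p_b = |\{j : b_j = b\}|/n$ is the empirical frequency of bit $b$. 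The difference equals $(\hat p_1 - p_1)(\mu_1 - \mu_0)$; applying Hoeffding to the i.i.d.\ Bernoulli$(p_1)$ variables $b_j$ bounds $|\hat p_1 - p_1|$ by $O(\sqrt{\log(1/\delta)/n})$, and $|\mu_1 - \mu_0| \leq 2$. A second application of Hoeffding to the conditionally independent bounded variables $q(X_1),\ldots,q(X_n)$ bounds $|q(X) - \E[q(X)\mid b_1,\ldots,b_n]|$ by the same order. A union bound yields $|q(X) - q(\cP)| = O(\sqrt{\log(n/\delta)/n})$, which lies well within the required $O(\sqrt{\log(n/\delta)/n^{1-\alpha}})$ for every $\alpha > 0$.

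For the failure of composition, take $\cP$ uniform on $\cX$. The analyst $\cA$ reads off all $d$ bits of every sample from the joint transcript to recover $X$ exactly, then outputs the query $q(x) = 2\cdot\mathbf{1}[x \in \{X_1,\ldots,X_n\}] - 1$. Deterministically $q(X) = 1$ while $q(\cP) \leq 2n/|\cX| - 1 = 2/n^4 - 1$, so $|q(X) - q(\cP)| \geq 2 - 2/n^4$ with probability $1$, which certainly exceeds $1 - 1/(2n^3)$ and witnesses the claimed failure of post hoc generalization.

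The main subtlety to handle carefully is that the definition of post hoc generalization quantifies over \emph{all} distributions $\cP$, so the individual analysis must succeed for arbitrary $\cP$ rather than only the uniform distribution exploited in the composition attack. The decomposition above isolates the only channel through which information about $X$ leaks into the sample-evaluated query: namely, the empirical bit frequencies $\hat p_0, \hat p_1$, which concentrate around the marginals $p_0, p_1$ regardless of the rest of the structure of $\cP$. Everything else ($\mu_0, \mu_1$) is a property of $\cP$ and the (adaptively chosen) $q$, but is independent of $X$ once we condition on $\cM_i(X)$.
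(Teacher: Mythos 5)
Your proof is correct, and it takes a genuinely different---and simpler---route than the paper's. The paper's construction centers on an algorithm called $\texttt{Encrypermute}$, which encodes the sorting permutation of the first $k$ samples as a one-time-pad key and uses it to encrypt the next $\Theta(k)$ samples; each $\texttt{Encrypermute}$ output is then \emph{distributionally independent} of the dataset (viewed as a multiset) and so trivially post hoc generalizes, while a separate first mechanism $\cM_1$ that releases the first $n^\alpha$ raw samples (analyzed via a sublinear-subsample generalization lemma from prior work) provides the seed that lets the chain of encryptions be unraveled. Your construction instead uses coordinate projections $\cM_i(X)=(X_1[i],\ldots,X_n[i])$: each $\cM_i$ leaks a great deal of raw information, but your decomposition correctly isolates the only channel through which that leakage can bias a subsequently chosen query's empirical mean, namely the empirical bit frequency $\hat p_1$, which concentrates around $p_1$ by Hoeffding regardless of how adversarially $q$ is derived from the transcript; the residual in-sample fluctuation is controlled by a second Hoeffding bound applied conditionally on the transcript, which is valid since conditioning makes $q$ deterministic and the $X_j$ conditionally independent. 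Your analysis in fact yields the stronger $\eps=O(\sqrt{\log(1/\delta)/n})$ with a fixed $\ell=5\log n$, so the $\alpha$-tradeoff in the theorem statement is vestigial for your construction, whereas it is essential to the paper's (whose $\cM_1$ only achieves $\eps=O(\sqrt{\log(n/\delta)/n^{1-\alpha}})$). What the paper's route buys is the starker conceptual point that algorithms whose \emph{individual} outputs carry literally zero information about the data can nevertheless compose to reveal everything; what your route buys is a shorter, self-contained argument that dispenses with $\texttt{Encrypermute}$, the permutation-independence claim, and the external subsampling lemma. One small technicality, shared equally by both proofs: when all $n$ samples are distinct, the final query gives $|q(X)-q(\cP)|$ \emph{equal to} $2-2/n^4$ rather than strictly greater, so the strict-inequality form of the failure claim should really be stated with a marginally smaller threshold.
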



The result is based on an algorithm that we call \texttt{Encrypermute}.  Before proving Theorem~\ref{thm:noComposition}, we introduce \texttt{Encrypermute} and establish the main property that it satisfies.

\begin{algorithm}[ht]
\caption{$\texttt{Encrypermute}_k$\label{alg:mech_k}}
\KwIn{Parameter $k$, and a sample $X=(x_1,x_2,\dots,x_n)\in (\zo^{d})^n$ for $d = 5\log n$.}
\If{$X$ contains $n$ distinct elements}{
Let $\pi$ be the permutation that sorts $(x_1,\dots,x_k)$ and identify $\pi$ with $r\in\{0,1,\dots,k!-1\}$\\
Let $\alpha \in [0,1]$ be the largest number such that $k \geq n^{\alpha}$ and let $t \gets \alpha k /20$ (NB: $2^{dt} \leq k!$)\\
Identify $(x_{k+1},\dots,x_{k+t}) \in (\zo^{d})^{t}$ with a number $m \in \{0,1,\dots,k!-1\}$\\
\Return{$c = m+r \bmod{k!}$}
}
\Else{
\Return{a random number $c\in\{0,1,\dots,k!-1\}$}
}
\end{algorithm}

The key facts about \texttt{Encrypermute} are as follows.
\begin{claim}\label{clm:Encrypermute}
Let $\cD$ be any distribution over $(\zo^{d})^n$. Let $D \sim \cD$, let $X$ be a random permutation of $D$, and let $C\leftarrow\texttt{Encrypermute}(X)$. Then $D$ and $C$ are independent.
\end{claim}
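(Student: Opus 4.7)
The plan is to show that the conditional distribution of $C$ given $D$ is the uniform distribution on $\{0,1,\dots,k!-1\}$ regardless of what $D$ is; this implies $C$ and $D$ are independent. The argument splits into two cases depending on whether $D$ has all distinct elements (note that this event is determined by $D$, since $X$ is merely a permutation of $D$).

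In the degenerate case where $D$ has a repeated element, $X$ also has a repeated element, so by construction the algorithm returns a uniformly random $c \in \{0,\dots,k!-1\}$, independent of everything else. So in this case $C \mid D$ is exactly uniform.

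The interesting case is when $D$ consists of $n$ distinct points. Here $X$ is a uniformly random ordering of the elements of $D$. I would proceed by conditioning on the additional random variable $S = \{x_1,\dots,x_k\}$, the \emph{unordered} content of the first $k$ positions. Given $D$ and $S$, the vector $(x_1,\dots,x_k)$ is a uniformly random bijection from $[k]$ to $S$, and $(x_{k+1},\dots,x_n)$ is an independent uniformly random bijection from $\{k+1,\dots,n\}$ to $D \setminus S$. The permutation $\pi$ that sorts $(x_1,\dots,x_k)$ is therefore uniform over $S_k$ (the sorting permutation of a uniform random ordering of $k$ distinct values is uniform over the symmetric group), so its encoding $r$ is uniform over $\{0,\dots,k!-1\}$, and $r$ is independent of $(x_{k+1},\dots,x_{k+t})$, hence of $m$, given $(D,S)$.

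The final step is just the one-time-pad observation: if $r$ is uniform on $\Z/k!\Z$ and independent of $m$ (conditionally on $(D,S)$), then $c = m + r \bmod k!$ is also uniform on $\Z/k!\Z$ and independent of $m$ conditionally on $(D,S)$. Averaging over $S$ (all within the same $D$), the conditional law of $C$ given $D$ is uniform. Combining with the degenerate case, $C \mid D$ is uniform on $\{0,\dots,k!-1\}$ for every $D$, so $C$ is independent of $D$. I do not anticipate a real obstacle; the only delicate point is being careful that the event ``$X$ has $n$ distinct elements'' is a function of $D$ so that the case split is compatible with conditioning on $D$, and that the sorting-permutation-of-a-uniform-random-ordering argument truly yields uniformity over $S_k$ (which it does because the map from orderings of $S$ to elements of $S_k$ is a bijection).
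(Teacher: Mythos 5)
Your proof is correct and takes essentially the same approach as the paper: both split on whether $D$ has $n$ distinct elements and, in the nondegenerate case, reduce to a one-time-pad argument by observing that the sorting permutation $r$ is uniform and independent of $m$ after suitable conditioning. The only cosmetic difference is that you condition on the unordered set $S=\{x_1,\dots,x_k\}$ of key-slot values, whereas the paper conditions on the tuple $(X_{k+1},\dots,X_{k+t})$ directly (which fixes $m$); both conditionings isolate the same fact, namely that a uniformly random ordering of $k$ distinct values has a uniformly random sorting permutation.
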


\begin{proof}
Fix a possible dataset $\vec{y} = (y_1,\dots,y_n) \in (\zo^{d})^n$, and observe that if $\vec{y}$ does not contain $n$ distinct elements, then the output $C$ is uniform on $\{0,1,\dots,k!-1\}$. That is, for every such $\vec{y}$ and for every choice of $C=c$ we have
$
\pr{}{C=c \mid D=\vec{y}}=\frac{1}{k!}.
$

We now proceed assuming that $\vec{y}$ does contain $n$ distinct elements, in which case we will show that the output $C$ is still uniform. Fix a possible output $c \in \{0,1,\dots,k!-1\}$. Recall that $S$ is obtained by randomly permuting $\vec{y}$, and let $X_{k+1},\dots,X_{k+t}$ denote the elements in positions $k+1,\dots,k+t$ in $X$. We let $M$ denote the numerical value of $X_{k+1},\dots,X_{k+t}$. Let $R$ denote a random variables taking value $r$ in the algorithm, and observe that for every fixed value of $X_{k+1},\dots,X_{k+t}$, the order of the first $k$ elements in $X$ is still uniform, and hence, $R$ is uniform. Then,
\begin{align*}
\pr{}{C=c \mid D=\vec{y}} &= \pr{}{R=c-M \bmod{k!} \mid D=\vec{y}}\\
&= \ex{\vec{x} \mid D = \vec{y}}{\pr{}{R=c-M \bmod{k!} \mid \vec{y}, \vec{x}} } = \frac{1}{k!}
\end{align*}
\begin{align*}
\pr{}{C=c \mid D=\vec{y}} &= \pr{}{R=c-M \bmod{k!} \mid D=\vec{y}}\\
&= \sum_{\vec{x}=(x_{k+1},\dots,x_{k+t})} \pr{}{(X_{k+1},\dots,X_{k+t})=\vec{x} \mid D=\vec{y}]\cdot\Pr[R=c-M \bmod{k!} \mid \vec{y}, \vec{x} }\\
&= \sum_{\vec{x}=(x_{k+1},\dots,x_{k+t})}\Pr[(X_{k+1},\dots,X_{k+t})=\vec{x} \;|\; D=\vec{y}]\cdot\frac{1}{k!}=\frac{1}{k!}.
\end{align*}
Overall, for every choice of $\vec{y}$ (whether containing $n$ distinct elements or not) and for every choice of $C=c$ we have that $\pr{}{ C=c \mid D=\vec{y}}=1/k!$. Therefore, for every choice of $C=c$, $\pr{}{C=c} = \frac{1}{k!}$.
\begin{align*}
\pr{}{C=c}=&\sum_{\vec{y}}\pr{}{D=\vec{y}}\cdot\pr{}{C=c \mid D=\vec{y}}\\
&\sum_{\vec{y}}\pr{}{D=\vec{y}}\cdot\frac{1}{k!}=\frac{1}{k!}.
\end{align*}
So, for every $\vec{y}$ and $c$ we have $\pr{}{C=c \mid D=\vec{y}}=\frac{1}{k!}=\pr{}{C=c}$, as desired.
\end{proof}

\begin{lemma}\label{lem:Encrypermute}For every $\delta > 0$ and $0 \leq k \leq n$,
$\texttt{Encrypermute}_k$ satisfies $(\eps,\delta)$-post hoc generalization for $\eps = \sqrt{2 \ln(2/\delta)/n}$.
\end{lemma}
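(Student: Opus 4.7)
The plan is to combine Claim~\ref{clm:Encrypermute} with a conditional application of Hoeffding's inequality. The key observation from Claim~\ref{clm:Encrypermute} is that the output of $\texttt{Encrypermute}_k$ reveals nothing about the multiset of the sample---only about its ordering. First, I would reduce the i.i.d.\ setting of the lemma to the setting of that claim: if $X \sim \cP^{\otimes n}$, then by exchangeability the joint law of $X$ equals that of $\pi(D)$, where $D$ is any canonical ordering (e.g.\ sort) of the multiset underlying $X$ and $\pi$ is an independent uniformly random permutation of $[n]$. Taking $\cD$ in Claim~\ref{clm:Encrypermute} to be the law of $D$, we conclude that the output $C \gets \texttt{Encrypermute}_k(X)$ is independent of $D$, hence independent of the multiset $M$ of $X$.

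Second, since the adversary's query $q = \cA(C)$ is a function of $C$ together with internal randomness independent of $X$, the pair $(q, M)$ is independent. The empirical mean $q(X) = \tfrac{1}{n}\sum_i q(X_i)$ is symmetric in the components of $X$ and hence depends only on $(q, M)$. Conditioning on $q = q_0$, the multiset $M$ still has its unconditional distribution---namely, the multiset of an i.i.d.\ sample from $\cP$---so $q_0(X)$ is distributed as the empirical mean of $q_0$ over a fresh i.i.d.\ sample $X_1',\dots,X_n' \sim \cP$. Applying Hoeffding's inequality to the bounded i.i.d.\ random variables $q_0(X_1'),\dots,q_0(X_n') \in [-1,1]$ yields
$$\pr{}{\, |q_0(X) - q_0(\cP)| > \eps \,\big|\, q = q_0 \,} \leq 2\exp(-n\eps^2/2) \leq \delta$$
for $\eps = \sqrt{2\ln(2/\delta)/n}$, and integrating over $q_0$ gives the lemma.

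The main technical step---decoupling $C$ from $D$---is already established by Claim~\ref{clm:Encrypermute}, so the remainder of the argument is essentially a careful combination of the exchangeability of i.i.d.\ samples with a standard concentration bound; I expect no further obstacle beyond articulating the independence of $q$ from the multiset carefully.
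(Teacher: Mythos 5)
Your proposal is correct and takes essentially the same route as the paper's proof: reduce to the setting of Claim~\ref{clm:Encrypermute} via exchangeability of $\cP^{\otimes n}$, deduce that the query $q$ is independent of the multiset of the sample, note that $q(X)$ depends on $X$ only through that multiset, and conclude by Hoeffding. The paper phrases the reduction by taking $D \sim \cP^{\otimes n}$ directly and letting $X$ be a uniformly random permutation of $D$, while you instead take $X \sim \cP^{\otimes n}$ and define $D$ as its sorted version together with an implicit random permutation; these are equivalent, and your version is marginally more explicit about the conditioning step, but there is no substantive difference in the argument.
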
 

\begin{proof}
Let $d = 5 \log n$ and let $\cP$ be a distribution over $\{0,1\}^d$.  Let $\cA$ be any (possibly randomized) algorithm that outputs a function $q \from \zo^{d} \to [-1,1]$. Let $D \sim\cP^{\otimes n}$ be $n$ iid samples from $\cP$, let $X$ be a random permutation of $D$, let $C\leftarrow\texttt{Encrypermute}(X)$, and let $q \leftarrow \cA(C)$. By Claim~\ref{clm:Encrypermute}, $C$ and $D$ are independent, and so $q$ and $D$ are independent. Therefore, by the Hoeffding bound,
$$
\pr{}{\left|q(\cP)- q(X)\right|\geq\sqrt{\frac{2\ln(2/\delta)}{n}}}\leq\delta.
$$
As $X$ is a permutation of $D$, for every such $q$ we have $\ex{z \sim X}{q(z)} = \ex{z \sim D}{q(z)}$.  Therefore,
$$
\pr{}{\left|q(\cP)- q(X)\right|\geq\sqrt{\frac{2\ln(2/\delta)}{n}}}\leq\delta.
$$
showing that \texttt{Encrypermute} is indeed $(\eps,\delta)$-robustly generalizing.
\end{proof}



\vspace{-5pt}
\begin{proof}[Proof of Theorem~\ref{thm:noComposition}]
Fix $\alpha \in (0,1)$, and let $\cM_1$ denote the mechanism that takes a database of size $n$ and outputs the first $n^\alpha$ elements of its sample. As $\cM_1$ outputs a sublinear portion of its input, it satisfies post hoc generalization with strong parameters. Specifically, by~\cite[Lemma 3.5]{CummingsLNRW16}, $\cM_1$ is $(\eps,\delta)$-post hoc generalizing for $$\eps=O\left(\sqrt{\frac{\log(n/\delta)}{n^{1-\alpha}}}\right)$$

Now consider composing $\cM_1$ with $\ell = O(\frac{1}{\alpha}\log n)$ copies of \texttt{Encrypermute}, with exponentially growing choices for the parameter $k_1,\dots,k_{\ell}$, where $$k_i = \left(1+\frac{\alpha}{20}\right)^i\cdot n^{\alpha}$$ By Lemma~\ref{lem:Encrypermute}, each of these mechanisms satisfies post hoc generalization for $\eps = O(\sqrt{\log(1/\delta)/n})$, so this composition satisfies the assumptions of the theorem.

Let $\cP$ be the uniform distribution over $\{0,1\}^d$, where $d=5\log n$, and let $X\sim\cP^{\otimes n}$.  By a standard analysis, $X$ contains $n$ distinct elements with probability at least $(1-\frac{1}{2n^3})$. Assuming that this is the case, we have that the first copy of \texttt{Encrypermute} outputs $c=m+r \bmod{k!}$, where $m$ encodes the rows of $X$ in positions $n^{\alpha}+1,\dots,(1+\frac{\alpha}{20})\cdot n^{\alpha}$, and where $r$ is a deterministic function of the first $n^\alpha$ rows of $X$. Hence, when composed with $\cM_1$, these two mechanism reveal the first $(1+\frac{\alpha}{20})n^{\alpha}$ rows of $X$. By induction, the output of the composition of all the copies of \texttt{Encrypermute} with $\cM_1$ reveals all of $X$. Hence, from the output this composition, we can define the predicate $q \from \zo^{d} \to \pmo$ that evaluates to 1 on every element of $X$, and to -1 otherwise. This predicate satisfies $q(X) = 1$ but $$q(\cP) \leq -1 + \frac{2n}{2^d} = -1 + \frac{2}{n^4}$$
\end{proof}

\subsection{Computational Post Hoc Generalization} \label{sec:compcomp}

In this section we consider a computational variant of post hoc generalization, in which we only require that $\cM$ prevents a \emph{computationally efficient} algorithm from identifying a statistical query such that the sample and population give very different answers.  First we formally define \emph{computational post hoc generalization} (after also introducing a technical definition).

\begin{defn}
Let $\cP$ be a family of distributions over $\pmo^{d}$.  The family is \emph{$T$-sampleable} if there is a randomized algorithm $M$ taking no inputs (e.g.~a RAM or circuit) with running time $T$ such that the output distribution of $M$ is identical to $\cP$.
\end{defn}

\begin{defn}[Computational Post Hoc Generalization] 
An algorithm $\cM \from (\pmo^{d})^{n} \to \cY$ satisfies \emph{$(n,d,\eps,\delta,T)$-computational post hoc generalization} if for every $T$-sampleable distribution $\cP$ over $\pmo^{d}$ and algorithm $\cA$ running in time $T$ that outputs a bounded function $q \from \pmo^{d} \to [-1,1]$, if $X \sim \cP^{\otimes n}, y \sim \cM(X),$ and $\query \sim \cA(y)$, then
$$
\pr{X,y,q}{\left| \ex{z \sim \cP}{q(z)} - \ex{z \sim X}{q(z)} \right| > \eps} \leq \delta,
$$
where the probability is over the sampling of $X$ and any randomness of $\cM,\cA$.  The notation $z \sim X$ indicates that $z$ is a uniformly random element of the sample $X \in \cX^n$.
\end{defn}

\begin{theorem} \label{thm:noCompositionComp}
For every $\alpha>0$, and every sufficiently large $n \in \N$, assuming the existence of an $(n^{\alpha},n^{1.01},n^{-100},T^{100})$-pseudorandom generator, there are $\cM_1,\cM_2 \from (\pmo^{5 \log n})^{n} \to \cY$ such that 
\begin{enumerate}
\item for every $\delta \geq n^{-100}$, both $\cM_1$ and $\cM_2$ satisfy $(\eps,\delta,T)$-computational post hoc generalization for $\eps = O(\sqrt{\ln(1/\delta)/n^{.999}})$, but
\item The composition $(\cM_1,\cM_2)$ is not $\left(1-\frac{1}{n^3},1-\frac{1}{n^3},T\right)$-computational post hoc generalizing.
\end{enumerate}
\end{theorem}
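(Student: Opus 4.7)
The plan is to compress the $\Theta(\log n)$ mechanisms of Theorem~\ref{thm:noComposition} into just two by having a single pseudorandom-generator call do the work of all the \texttt{Encrypermute}$_{k_i}$ copies (the construction works for sufficiently small $\alpha$; for concreteness I fix $\alpha = .001$). Let $k = n^\alpha$, and let $\cM_1$ output the first $k$ rows of $X$; by Lemma~3.5 of~\cite{CummingsLNRW16} this $\cM_1$ is $(\eps,\delta)$-PHG (hence also computationally PHG) with $\eps = O(\sqrt{\log(n/\delta)/n^{1-\alpha}}) = O(\sqrt{\log(n/\delta)/n^{.999}})$. Define $\cM_2(X)$ as follows: if the first $k$ rows of $X$ are not pairwise distinct, output a uniform string in $\pmo^{n^{1.01}}$; otherwise, let $\pi \in S_k$ be the permutation that sorts $(x_1,\dots,x_k)$, map $\pi$ through a canonical encoding to a near-uniform seed $s \in \pmo^{n^\alpha}$ (information-theoretically possible because $\log(k!) = \Theta(\alpha n^\alpha \log n) \gg n^\alpha$ for $n \geq 2^{1/\alpha}$), and return $G(s) \oplus Z$, where $Z \in \pmo^{n^{1.01}}$ is the zero-padded bit encoding of $(x_{k+1},\dots,x_n)$ (note $5n\log n \leq n^{1.01}$ for large $n$).

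The composition attack mirrors that of Theorem~\ref{thm:noComposition}. With $\cP$ uniform on $\pmo^{5\log n}$, all $n$ rows of $X$ are pairwise distinct with probability at least $1 - 1/(2n^3)$. In that event, a polynomial-time adversary given $(x_1,\dots,x_k) = \cM_1(X)$ sorts to recover $\pi$, re-derives $s$ by the canonical encoding, computes $G(s)$, XORs against $\cM_2(X)$ to recover $Z$, and decodes $(x_{k+1},\dots,x_n)$; it then outputs the efficiently-evaluable predicate $q(z) = 2\mathbf{1}[z \in X] - 1$. This satisfies $q(X) = 1$ and $q(\cP) \leq -1 + 2n/n^5 \leq -1 + 2/n^4$, so $|q(X) - q(\cP)| \geq 2 - 2/n^4 > 1 - 1/n^3$ with probability at least $1 - 1/n^3$, violating $(1 - 1/n^3,\, 1 - 1/n^3)$-computational PHG.

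The core technical claim is that $\cM_2$ alone is $(\eps,\delta)$-computationally PHG for $\eps = O(\sqrt{\log(1/\delta)/n})$. The argument is a three-hybrid. Hybrid~0 is the real $\cM_2$; Hybrid~1 replaces the seed $s$ (derived from $\pi$) by a fresh uniform $s^* \in \pmo^{n^\alpha}$; Hybrid~2 replaces $G(s^*)$ by a uniform string. Hybrid~0 $\approx$ Hybrid~1 is statistical: conditional on the multiset of the first $k$ (assumed distinct) rows and on the ordered tail $(x_{k+1},\dots,x_n)$, exchangeability of i.i.d.\ samples makes the ordering of the first $k$ rows uniform on $S_k$, so the encoded seed $s$ is $O(2^{n^\alpha}/k!) = n^{-\omega(1)}$-close to uniform on $\pmo^{n^\alpha}$. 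Hybrid~1 $\approx$ Hybrid~2 is the PRG property. Crucially, the bad event $\{|q(X) - q(\cP)| > \eps\}$ depends on $X$ only through its multiset (because $q(X)$ is an average over rows and hence symmetric in $X$), so the conditioning used in the statistical step suffices. In Hybrid~2 the adversary's query is independent of $X$, and Hoeffding bounds the bad event by $\delta/2$.

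The main non-routine obstacle is ensuring the PRG reduction fits inside the $T^{100}$ security margin. The distinguisher $D$, on challenge $y \in \pmo^{n^{1.01}}$, samples $X \sim \cP^{\otimes n}$ (using the $T$-sampler for $\cP$), forms $y \oplus Z(X)$, runs the $T$-time adversary $\cA$ to obtain $q$, computes $q(X)$ directly from $n$ evaluations of $q$, estimates $q(\cP)$ to accuracy $\eps/3$ with failure probability $\ll n^{-100}$ by averaging $q$ over $\mathrm{poly}(n)/\eps^2$ fresh $\cP$-samples, and accepts iff $|q(X) - q(\cP)| > \eps$. This runs in time $\mathrm{poly}(T, n, 1/\eps) \leq T^{100}$ for $T \geq n$. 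Hence $|\Pr_{\text{H1}}[\text{bad}] - \Pr_{\text{H2}}[\text{bad}]| \leq n^{-100}$; combining with the $n^{-\omega(1)}$ statistical step and the $\delta/2$ Hoeffding bound in Hybrid~2 gives $\Pr_{\cM_2}[|q(X) - q(\cP)| > \eps] \leq \delta$ for any $\delta \geq O(n^{-100})$, completing the proof.
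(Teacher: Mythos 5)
Your proposal is correct and follows essentially the same strategy as the paper's proof: take $\cM_1$ to output the first $k=n^\alpha$ rows, take $\cM_2$ to be a PRG-seeded version of \texttt{Encrypermute} (the paper's \texttt{PRG-Encrypermute}$_k$) that derives a near-uniform seed from the ordering of those rows and XORs a PRG stretch against the encoded tail, and establish $\cM_2$'s computational post hoc generalization via the same three-step decomposition (seed statistically close to uniform; PRG indistinguishability via a distinguisher that estimates $q(\cP)$ from fresh samples; Hoeffding once the output is independent of $X$). The only differences are cosmetic bookkeeping choices (seed length $n^\alpha$ versus the paper's $\tfrac{k}{8}\log k$, and output padding to $n^{1.01}$), plus your explicit observation that the bad event is permutation-invariant and your explicit running-time accounting for the distinguisher, both of which the paper leaves implicit.
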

We note that the parameters of the assumed pseudorandom generator are somewhat arbitrary, and are somewhat irrelevant since we are content to assume pseudorandom generators with arbitrarily-small-polynomial seed length secure against arbitrarily-large-polynomial time algorithms.  We chose these parameters for simplicity and concreteness.  The precise parameters we need will be clear from the construction.

The proof of Theorem~\ref{thm:noCompositionComp} is quite similar to that of Theorem~\ref{thm:noComposition}.  The main difference is that the computational assumption allows us to avoid the recursive construction involving $O(\log n)$ algorithms with a faster process involving just two algorithms.  Recall that in $\texttt{Encrypermute}_{k}$ we used the order of the first $k$ elements as a \emph{key} of length $\log_2(k!) = \Theta(k \log k)$.  We then used this random string as a way to \emph{encrypt} the next $\Omega(k)$ elements of the dataset.  The bottleneck is that in the information-theoretic setting we can only encrypt as many bits as the length of the key, and this meant we needed $\Omega(\log n)$ rounds to gradually encrypt the entire dataset.  However, using cryptographic assumptions, we can a pseudorandom generator to stretch the $\Theta(k \log k)$ random bits into $(n-k)d$ random bits that appear truly random to any computationally bounded algorithm.  Then we can use this long key to encrypt the entire rest of the dataset.  A small technical issue is that pseudorandom generators are used to stretch uniformly random strings, whereas in \texttt{Encrypermute} we are stretching a random number $r \in \{0,1,\dots,k!-1\}$, and $k!$ is not a power of 2. This small gap can be handled using the following fact:

\begin{fact}\label{fact:SD}
Let $N\in\N$, and let $R$ be uniform on $[N]$. Then the $\frac{1}{2}\log N$ least significant bits in the binary representation of $R$ are almost uniform, with statistical distance at most $1/\sqrt{N}$.
\end{fact}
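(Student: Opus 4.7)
\medskip

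\textbf{Proof proposal for Fact~\ref{fact:SD}.} The plan is a direct combinatorial calculation. Let $L = \lfloor \tfrac{1}{2}\log_2 N \rfloor$, so that $2^L \leq \sqrt{N}$. The $L$ least significant bits of $R$, viewed as an integer in $\{0,1,\dots,2^L-1\}$, are exactly the residue $R \bmod 2^L$. So the task reduces to bounding the statistical distance between $R \bmod 2^L$ (when $R$ is uniform on $[N] = \{0,1,\dots,N-1\}$) and the uniform distribution $U$ on $\{0,1,\dots,2^L-1\}$.

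Write $N = q \cdot 2^L + r$ with $0 \leq r < 2^L$ via the division algorithm. For each residue $j \in \{0,1,\dots,2^L-1\}$, the number of integers in $[N]$ congruent to $j$ modulo $2^L$ is $q+1$ if $j < r$ and $q$ otherwise. Consequently the probability mass function of $R\bmod 2^L$ is
\[
p(j) = \begin{cases} (q+1)/N & j < r, \\ q/N & j \geq r. \end{cases}
\]
Comparing with the uniform mass $1/2^L$, we compute $|p(j)-1/2^L| = (2^L-r)/(N\cdot 2^L)$ for $j<r$ and $|p(j)-1/2^L| = r/(N\cdot 2^L)$ for $j\geq r$. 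The statistical distance is therefore
\[
\tvd(R \bmod 2^L, U) = \tfrac{1}{2}\bigl(r\cdot \tfrac{2^L-r}{N\cdot 2^L} + (2^L-r)\cdot \tfrac{r}{N\cdot 2^L}\bigr) = \tfrac{r(2^L-r)}{N\cdot 2^L}.
\]

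To finish, I bound the numerator crudely by $r(2^L - r) \leq r\cdot 2^L$, giving $\tvd \leq r/N \leq 2^L/N \leq \sqrt{N}/N = 1/\sqrt{N}$. (A slightly tighter AM--GM bound yields $1/(4\sqrt{N})$, but $1/\sqrt{N}$ already suffices for the stated fact.)

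The calculation is essentially routine; the only mild subtlety is that $\tfrac{1}{2}\log_2 N$ need not be an integer, which is why I take the floor when choosing $L$, ensuring $2^L \leq \sqrt{N}$. No further obstacles arise.
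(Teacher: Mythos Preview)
Your proof is correct and follows essentially the same approach as the paper's sketch: both arguments rest on the observation that each residue modulo $2^L$ has either $q$ or $q+1$ preimages in $[N]$, so the counts differ by at most one, which yields the $1/\sqrt{N}$ bound. Your version simply carries out the calculation explicitly (including the exact $\tvd = r(2^L-r)/(N\cdot 2^L)$ formula and the floor on $L$), whereas the paper leaves these details implicit.
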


\begin{proof}[Proof Sketch]
The proof is based on the fact that (no matter how many of the most significant bits we ignore) the difference in count between different strings is at most 1. If we ignore half of the bits, the difference in occurrence is between $2^{\frac12\cdot\log N}$ and $2^{\frac12\cdot\log N}$+1, which introduces a statistical distance of at most $2^{-\frac12\cdot\log N} = 1/\sqrt{N}$ from uniform.
\end{proof}

\jnote{Need to state exactly what we assume about the PRG?  Sort of annoying that we may end up with two slightly different definitions of a PRG.}

\begin{algorithm}[ht]
\caption{$\texttt{PRG-Encrypermute}_k$\label{alg:mech_PRGEP}}
\KwIn{Parameter $k$, and a sample $X=(x_1,x_2,\dots,x_n)\in (\zo^{d})^n$ for $d = 5\log n$.} \vspace{5pt}
\texttt{Step (1):}\\
\If{$X$ contains $n$ distinct elements}{
Let $\pi$ be the permutation that sorts $(x_1,\dots,x_k)$ and identify $\pi$ with $p\in\{0,1,\dots,k!-1\}$}
\Else{
Let $p$ be a random number in $\{0,1,\dots,k!-1\}$
}\
Let $s$ denote the $\ell = \frac{k}{8} \log k$ least significant bits in the binary representation of $p$.\\\vspace{5pt}

\texttt{Step (2):}\\
Compute $r \gets G(s)$ for a pseudorandom generator $G \from \zo^{\ell} \to \zo^{d(n-k)}$\vspace{5pt}

\texttt{Step (3):}\\
Identify $x_{k+1},\dots,x_n \in (\zo^{d})^{n-k}$ with a binary string $m \in \zo^{d(n-k)}$\\
\Return{$c = m \oplus r$}
\end{algorithm}

%
%
%
%

Before analyzing the algorithm, we will introduce some notation.  For a sample $X$, let $X[k+1,n] = (x_{k+1},\dots,x_{n})$ denote the last $(n-k)$ elements of $X$.  We will break the algorithm \texttt{PRG-Encrypermute} into three algorithms, $\cM_{1},\cM_{2},\cM_{3}$ corresponding to the three steps.  The first algorithm $\cM_{1}$ takes a sample $X$ and outputs $s \in \zo^{\ell}$, the second algorithm $\cM_{2}$ takes $s \in \zo^{\ell}$ and outputs $G(s) \in \zo^{d(n-k)}$, and the third algorithm $\cM_{3}$ takes $X[k+1,n]$ and $r$ and outputs $c \in \zo^{d(n-k)}$.  With this notation, 
$$
\texttt{PRG-Encrypermute}_{k}(X) = \cM_{3}\left( X[k+1,n], \cM_{2}(\cM_{1}(X)) \right)
$$

\jnote{I tried to clean up the notation a bit by renumbering the steps and switching $\cA$ to $\cM$ for consistency.  Be careful to whoever is editing!}

%

\begin{clm}\label{obs:PRG1}
The algorithm that takes a database $X\in (\zo^d)^n$, samples $r\in\{0,1\}^{d(n-k)}$ uniformly at random and runs $\cM_3\left(X[k+1,n],r\right)$, is $(\eps,\delta)$-robustly generalizing for every $\delta > 0$ and $\eps = \sqrt{2\ln(2/\delta)/n}$.
\end{clm}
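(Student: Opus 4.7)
The plan is to show that the output $c$ of the described algorithm is stochastically independent of $X$, after which the claim reduces to a direct application of Hoeffding's inequality. The crucial observation is that by the definition of $\cM_3$, the output is $c = m \oplus r$, where $m$ is a deterministic function of $X[k+1,n]$ and $r \in \zo^{d(n-k)}$ is uniform and independent of $X$. This is exactly the one-time-pad construction: for every fixed realization of $X$ (and hence of $m$), the conditional distribution of $c \mid X$ is the distribution of $m \oplus r$, which is uniform on $\zo^{d(n-k)}$ and in particular does not depend on $X$. So the marginal distribution of $c$ is uniform and $c$ is independent of $X$.

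Given independence, the analysis of post hoc generalization becomes standard. Let $\cA$ be any adversary producing $q \from \zo^d \to [-1,1]$ from $c$, and let $X \sim \cP^{\otimes n}$. Since $q = \cA(c)$ is a (randomized) function of a variable independent of $X$, the random query $q$ is independent of $X$. Fixing any realization of $q$, the empirical mean $q(X) = \tfrac{1}{n}\sum_i q(X_i)$ is an average of $n$ i.i.d.\ samples of a $[-1,1]$-valued random variable with mean $q(\cP)$, so Hoeffding's inequality gives
$$\pr{}{|q(X) - q(\cP)| > \eps \,\bigm|\, q} \;\leq\; 2\exp(-n\eps^2/2)\;\leq\; \delta,$$
for $\eps = \sqrt{2\ln(2/\delta)/n}$. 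Averaging over $q$ yields the unconditional bound claimed.

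I do not expect any significant obstacle here: the argument is essentially an unpacking of definitions plus a concentration inequality. The only care needed is in the independence step, namely in separating the deterministic function $m(X[k+1,n])$ from the independent uniform mask $r$ and observing that perfect secrecy of the one-time pad makes $c$ independent of the \emph{entire} sample $X$ (not merely of $X[k+1,n]$). Notably, this claim forms the information-theoretic baseline that the pseudorandomness of $G$ in \texttt{PRG-Encrypermute} will later be invoked to approximate, so the cleanliness of this step is precisely what lets the subsequent hybrid argument go through with loss only $O(\gamma)$.
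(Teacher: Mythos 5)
Your proposal is correct and follows essentially the same route as the paper's (very terse) proof sketch: establish that the output is independent of $X$, then apply Hoeffding. You simply fill in the details the paper omits, namely the one-time-pad argument for why $c = m \oplus r$ is independent of the full sample $X$ and the conditioning-on-$q$ step in the Hoeffding application.
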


\begin{proof}[Proof Sketch]
For every distribution $\cP$, the output of the algorithm is independent of the sample $X$, so the statement follows from the Hoeffding bound.
\end{proof}

\begin{claim}\label{claim:PRG2}
For every pair of constants $\alpha,\gamma > 0$, the algorithm that takes a sample $X\in (\zo^d)^n$, samples $s\in \zo^{\ell}$ uniformly at random and runs $\cM_3\left(X[k+1,n],\cM_2(s)\right)$ with parameter $k=n^\alpha$, is $(\eps,\delta)$-computationally post hoc generalizing for every $\delta \geq n^{-100}$ and $\eps = \sqrt{8\ln(8/\delta)/n}$.
\end{claim}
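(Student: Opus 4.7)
The plan is a standard pseudorandomness hybrid argument layered on top of Claim~\ref{obs:PRG1}. Let $\cM'$ denote the algorithm in the statement and let $\cM''$ denote the ``ideal'' variant in which $\cM_2(s)=G(s)$ is replaced by a fresh uniformly random string $r\sim\sU(\zo^{d(n-k)})$. Claim~\ref{obs:PRG1} shows that $\cM''$ is $(\eps_0,\delta_0)$-post hoc generalizing for every $\delta_0>0$ with $\eps_0=\sqrt{2\ln(2/\delta_0)/n}$. The target parameters $\eps=\sqrt{8\ln(8/\delta)/n}$ and $\delta\geq n^{-100}$ are chosen so that $0.8\eps$ still lies well inside the regime where Claim~\ref{obs:PRG1} gives a failure probability much smaller than $\delta$: applying that claim at $\eps_0=0.8\eps$ yields $\delta_0\leq 2(\delta/8)^{2.56}\ll\delta$.

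Assume for contradiction that $\cM'$ is not $(\eps,\delta)$-computationally post hoc generalizing, so there exist a $T$-sampleable distribution $\cP$ and a $T$-time adversary $\cA$ producing a query $q$ with $\Pr[|q(X)-q(\cP)|>\eps]>\delta$ in the $\cM'$ experiment. I will build a polynomial-time distinguisher $\cD$ that tells apart $G(\sU(\zo^\ell))$ from $\sU(\zo^{d(n-k)})$ with advantage strictly larger than the assumed PRG security $n^{-100}$, which is the desired contradiction.

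On input $r'\in\zo^{d(n-k)}$, the distinguisher $\cD$ samples $X\sim\cP^{\otimes n}$ using the efficient sampler for $\cP$, forms $c=X[k+1,n]\oplus r'$, runs $q\sim\cA(c)$, draws $N$ additional independent samples from $\cP$ to form an empirical estimate $\wt\mu\approx q(\cP)$ with error at most $\eps/10$ except with probability $n^{-200}$ (by Hoeffding, for $N$ polynomial in $n$), and outputs $1$ iff $|q(X)-\wt\mu|>0.9\eps$. When $r'=G(s)$ for uniform $s$, the simulation exactly reproduces the $\cM'$ experiment and a triangle inequality gives $\Pr[\cD=1]\geq\delta-n^{-200}$. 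When $r'$ is uniform the simulation reproduces the $\cM''$ experiment; a triangle inequality in the other direction, together with Claim~\ref{obs:PRG1} applied at accuracy $0.8\eps$, gives $\Pr[\cD=1]\leq\delta_0+n^{-200}\ll\delta$. Hence the advantage of $\cD$ exceeds $n^{-100}$, contradicting the assumed PRG security.

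The main technical obstacle will be that the distinguisher must approximately test the event $|q(X)-q(\cP)|>\eps$ even though $q(\cP)$ is not directly computable; I resolve this by using the empirical estimate $\wt\mu$ and by absorbing the $\pm\eps/10$ approximation slack into the gap between the claim's $\eps=\sqrt{8\ln(8/\delta)/n}$ and the tighter $\sqrt{2\ln(2/\delta)/n}$ guaranteed by Claim~\ref{obs:PRG1}. The polynomial blow-up in samples needed to compute $\wt\mu$ is absorbed into the $T^{100}$ time bound of the PRG assumption, since $\cP$ is $T$-sampleable and $\cA$ runs in time $T$.
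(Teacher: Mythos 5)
Your proposal is correct and follows essentially the same reduction as the paper: assume a violating $(\cP,\cA)$, build a polynomial-time distinguisher for the PRG, and handle the obstacle that $q(\cP)$ is not directly computable by using an auxiliary independent sample. The only real difference is implementation detail: you draw $N=\mathrm{poly}(n)$ extra samples to estimate $q(\cP)$ to accuracy $\eps/10$, whereas the paper simply draws a second fresh dataset $D\sim\cP^{\otimes n}$ of the \emph{same} size $n$ and compares $q(X)$ against $q(D)$ directly, using Hoeffding on $D$ with the same threshold scale as on $X$; this avoids introducing a new parameter $N$ and a separate error budget, and is why the paper's constant bookkeeping comes out a bit cleaner (the paper's distinguisher accepts with probability $\geq 3\delta$ vs.\ $\leq 2\delta$, giving advantage $\geq\delta$, while your calculation gives advantage $\approx 0.99\delta - 2n^{-200}$, which, taken literally, dips slightly below $n^{-100}$ at the boundary $\delta = n^{-100}$ and would need a trivial constant adjustment). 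Both are sound; the paper's choice of $N=n$ is the slightly more economical instantiation of the same idea.
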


\begin{proof}
Assume towards contradiction that the statement is false. Let $\cP$ be efficiently samplable and and $\cA$ be an efficient algorithms such that such that when we sample $X\sim\cP^{\otimes n}$, $s\in\{0,1\}^\ell$, $c\leftarrow\cM_3\left(X[k+1,n],\cM_2(s)\right)$, and $q\leftarrow\cA(c)$, we have
$$
\pr{}{\left| \ex{z \sim \cP}{q(z)} - \ex{z \sim X}{q(z)}\right| > \sqrt{8\ln(2/\delta)/n}}> 4\delta.
$$
for some $\delta \in (0,1)$

We now use $\cP$ and $\cA$ in order to construct to the following distinguisher $\cB$ that contradicts our assumption that $\cM_{2}$ is a pseudorandom generator. 

\begin{algorithm}[ht]
\caption{Distinguisher $\cB$\label{alg:distinguisher}}
\KwIn{Parameter $\delta\in(0,1)$ and a string $r\in\{0,1\}^{d(n-k)}$} \vspace{5pt}
Sample two independent datasets $X,D \sim \cP^{\otimes n}$\\
Run $\cA\left( \cM_3\left(X[k+1,n],r\right)  \right)$ to obtain a query $q: \{0,1\}^d\rightarrow[-1,1]$\\
\Return{$1$ if $|q(X) - q(D)| > \sqrt{8 \ln (2/\delta)/n}$ and $0$ otherwise}
\end{algorithm}

First observe that in the algorithm above, $q$ and $D$ are independent, so by the Hoeffding bound, with probability at least $1-\delta$ we have 
\begin{equation}
|q(D)-q(\cP)|\leq\sqrt{\frac{2\ln(2/\delta)}{n}}.\label{eq:indSample}
\end{equation}

Now, if algorithm $\cB$ is executed with a uniform string $r$, then by Claim~\ref{obs:PRG1} with probability $1-\delta$ we also have $|q(X)-q(\cP)|\leq\sqrt{2\ln(2/\delta)/n}$. So, if $r$ is uniform, then the probability that $|q(X)-q(D)|>\sqrt{2\ln(2/\delta)/n}$ (i.e.~the probability that the algorithm outputs 1) is at most $2\delta$. 

On the other hand, by our assumption towards contradiction, when algorithm $\cB$ is executed with a string $r$ that is generated by applying $\cM_2$ on a uniformly random $s\in\{0,1\}^\ell$, then the probability that $|q(X)-q(\cP)|>\sqrt{8\ln(2/\delta)/n}$ is at least $4\delta$. Combined with Inequality~(\ref{eq:indSample}), we see that the probability of the algorithm outputting 1 in this case is at least $3\delta$. This contradicts our assumption on $\cM_2$ whenever $\delta \geq n^{-\gamma}$.
\end{proof}

\begin{claim}\label{clm:smallSD}
Let $\cD$ be any distribution over $(\zo^d)^n$. Let $D\sim\cD$, let $X$ be a permutation of $D$, and let $s\leftarrow\cM_{1}(X)$. Also let $\tilde{s}$ be uniform on $\{0,1\}^{\ell}$, independent of $D$. Then for every parameter $k$,
$$
\SD\big( \;\left(D,X[k+1,n],s\right)\;\;,\;\; \left(D,X[k+1,n],\tilde{s}\right) \; \big) \leq k^{-k/8}.
$$
\end{claim}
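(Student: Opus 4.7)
The plan is to fix a realization of $(D, X[k+1,n])$, upper-bound the statistical distance between the conditional distribution of $s = \cM_1(X)$ and the uniform distribution on $\{0,1\}^\ell$, and then average over $(D,X[k+1,n])$. Since $\tilde s$ is independent of $D$ (and therefore of $X[k+1,n]$),
\[
\SD\bigl((D,X[k+1,n],s),\,(D,X[k+1,n],\tilde s)\bigr) = \E_{(D,X[k+1,n])}\bigl[\SD\bigl(s\mid D,X[k+1,n];\,\mathrm{Unif}(\{0,1\}^\ell)\bigr)\bigr],
\]
so a conditional statistical-distance bound of $k^{-k/8}$ will suffice.

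The first substantive step is a case split on whether $D$ has $n$ distinct elements. If it does not, then neither does $X$ (since $X$ is a permutation of $D$), and by definition $\cM_1$ samples $p$ uniformly from $\{0,\ldots,k!-1\}$ independently of everything else, so $p\mid(D,X[k+1,n])$ is uniform. If $D$ does have $n$ distinct elements, then conditioned on $(D,X[k+1,n])$ the prefix $(X_1,\ldots,X_k)$ is a uniformly random ordering of the $k$ distinct elements of $D\setminus X[k+1,n]$. The sorting permutation $\pi$ is therefore uniform on $S_k$, and under the bijection $S_k\leftrightarrow\{0,\ldots,k!-1\}$ used inside $\cM_1$, $p$ is again conditionally uniform on $\{0,\ldots,k!-1\}$. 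In either case the conditional distribution of $p$ is uniform on $\{0,\ldots,k!-1\}$.

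It then remains to bound the statistical distance between the $\ell = (k/8)\log k$ least significant bits of a uniform element of $\{0,\ldots,k!-1\}$ and the uniform distribution on $\{0,1\}^\ell$. For each $b\in\{0,1\}^\ell$, the number of preimages is either $\lfloor k!/2^\ell\rfloor$ or $\lceil k!/2^\ell\rceil$, giving statistical distance at most $2^\ell/k! = k^{k/8}/k!$. Stirling's bound $k!\geq(k/e)^k$ yields $k!\geq k^{k/4}$ for $k\geq 4$, and hence the bound $k^{k/8}/k!\leq k^{-k/8}$. The argument is essentially the same as the proof of Fact~\ref{fact:SD}, simply instantiated at $\ell$ much smaller than $\tfrac{1}{2}\log(k!)$, so no real obstacle arises; the only care required is handling the non-distinct-$D$ edge case via $\cM_1$'s fallback to independent sampling, and being honest about the Stirling estimate so that $k^{k/8}$ is beaten by $k!$ with a margin of $k^{k/8}$.
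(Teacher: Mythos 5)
Your proof is correct and takes essentially the same route as the paper: establish that $p$ is (conditionally) uniform on $\{0,\ldots,k!-1\}$ and independent of $(D,X[k+1,n])$, then bound the statistical distance of its $\ell$ least-significant bits from uniform. The only cosmetic difference is that you compute the bound $2^\ell/k! \le k^{-k/8}$ directly with an explicit Stirling estimate, whereas the paper invokes Fact~\ref{fact:SD} (which, as stated, addresses exactly $\tfrac{1}{2}\log N$ LSBs and so is being applied a bit loosely); your inline computation is arguably tighter and cleaner for the specific bound claimed.
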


\begin{proof}[Proof Sketch]
Consider $p\in\{0,1,2,\dots,k!-1\}$ defined in step~1 of $\texttt{PRG-Encrypermute}_{k}$. Similarly to Claim~\ref{clm:Encrypermute}, it holds that $p$ is uniform, and independent of $(D,X[k+1,n])$. Therefore $s$ (the string of length $\ell$ defined on step~2) is independent of $(D,X[k+1,n])$, since it is a function of $p$ only. We also have that $\tilde{s}$ is independent of $(D,X[k+1,n])$ by definition, and hence, the result follows immediately from Fact~\ref{fact:SD}.
\end{proof}

\begin{lemma}\label{lem:PRG-Encrypermute}
For every constant $\alpha>0$, the algorithm \texttt{PRG-Encrypermute} with parameter $k = n^\alpha$ is $(\eps,\delta)$-computational robustly generalizing for every $\delta \geq n^{-100}$ and $\eps = \sqrt{8\ln(16/\delta)/n}$.
\end{lemma}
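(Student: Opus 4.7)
The plan is a short hybrid argument combining Claims~\ref{claim:PRG2} and~\ref{clm:smallSD}.  Consider the ``ideal'' variant $\widetilde{\cM}$ of $\texttt{PRG-Encrypermute}_k$ that, instead of deriving its seed from $\cM_1(X)$, samples $\tilde s$ uniformly from $\zo^{\ell}$ independently of $X$ and outputs $\cM_3(X[k+1,n],\cM_2(\tilde s))$.  Claim~\ref{claim:PRG2} already establishes that $\widetilde{\cM}$ is $(\eps',\delta')$-computationally post hoc generalizing at $\eps' = \sqrt{8\ln(8/\delta')/n}$ whenever $\delta' \geq n^{-100}$.  The goal is to transfer this guarantee to the real algorithm, losing only the (negligible) quantity $k^{-k/8}$ in the failure probability.

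First I would fix a $T$-sampleable $\cP$ and a time-$T$ adversary $\cA$, and re-express the probability space via exchangeability: sample $D \sim \cP^{\otimes n}$ and let $X$ be an independent uniformly random permutation of $D$.  Since $\cP^{\otimes n}$ is permutation-invariant, $X$ is itself distributed as $\cP^{\otimes n}$, so this matches the original post hoc generalization experiment.  The point of rewriting this way is to align with the setup of Claim~\ref{clm:smallSD}.  The bad event $E = \{|q(\cP) - q(X)| > \eps\}$ is determined by (i) the query $q$, which is a function of $\cA$'s random coins and the ciphertext $c = \cM_3(X[k+1,n],\cM_2(s))$ with $s = \cM_1(X)$, and (ii) $q(X)$, which equals $q(D)$ by permutation invariance.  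Hence the indicator of $E$ is a measurable function of the triple $(D, X[k+1,n], s)$ together with independent randomness of $\cA$ (the PRG $\cM_2$ and the XOR step $\cM_3$ are deterministic).

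Next I would invoke Claim~\ref{clm:smallSD} to conclude that the joint distribution of $(D, X[k+1,n], s)$ under the real algorithm is within total variation distance $k^{-k/8}$ of $(D, X[k+1,n], \tilde s)$, which is the analogous joint distribution under $\widetilde{\cM}$.  Since tacking on independent randomness does not increase statistical distance, $|\Pr[E] - \Pr[E']| \leq k^{-k/8}$, where $E'$ is the analogous event for $\widetilde{\cM}$.  Setting $\delta' = \delta - k^{-k/8}$ and applying Claim~\ref{claim:PRG2} yields $\Pr[E'] \leq \delta'$ at $\eps' = \sqrt{8\ln(8/\delta')/n}$.  For $k = n^{\alpha}$ with $\alpha > 0$ we have $k^{-k/8} = n^{-\alpha n^{\alpha}/8}$, which is super-polynomially small in $n$, so for $n$ sufficiently large and $\delta \geq n^{-100}$ we have $\delta' \geq \delta/2$, comfortably satisfying the hypothesis of Claim~\ref{claim:PRG2} (the $-100$ exponent there is arbitrary up to constants).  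Chaining the inequalities gives $\Pr[E] \leq \delta' + k^{-k/8} = \delta$ at $\eps \leq \sqrt{8\ln(16/\delta)/n}$, as desired.

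The main obstacle is conceptual rather than technical: one must verify that the hybrid step correctly handles the event $E$, which involves the population mean $q(\cP)$ -- a quantity not in general computable in polynomial time from the algorithm's output.  We are rescued by the fact that Claim~\ref{clm:smallSD} gives an \emph{information-theoretic} closeness bound, so indistinguishability survives arbitrary (even inefficient) post-processing, including the evaluation of $q(\cP)$.  The remaining computational ingredient -- PRG security against time-$T$ distinguishers -- is already absorbed into Claim~\ref{claim:PRG2}, so it need not be revisited here.
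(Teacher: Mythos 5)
Your proposal is correct and follows the same route as the paper's proof: fix $\cP$ and $\cA$, express the bad event as a function of $(D, X[k+1,n], s)$ plus independent randomness, invoke Claim~\ref{clm:smallSD} to swap $s$ for a uniform $\tilde s$ at statistical cost $k^{-k/8}$, and then apply Claim~\ref{claim:PRG2} to the resulting ideal variant. Your version is a bit more explicit about the bookkeeping (the $\delta' = \delta - k^{-k/8} \geq \delta/2$ step and the observation that information-theoretic closeness survives the inefficient evaluation of $q(\cP)$), but the underlying argument is identical.
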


\begin{proof}
Let $\delta\geq n^{-\gamma}$ and $\eps=\sqrt{8\ln(16/\delta)/n}$.
Let $\cP$ be a sampleable distribution over $\{0,1\}^d$, let $D\sim\cP^{\otimes n}$, let $X$ be a permutation of $D$, and let $s\leftarrow\cM_{1}(X)$. Also let $\tilde{s}\in\{0,1\}^\ell$ be chosen uniformly at random. Now consider any algorithm $\cA$ that takes the output of $\texttt{PRG-Encrypermute}(X)$ and outputs a function $q$. Our goal is to show that the probability that $|q(\cP)-q(X)|>\eps$ is at most $\delta$. To that end, recall that we can restate $\cA\left(\texttt{PRG-Encrypermute}_{k}(X)\right)$ as
$$
\cA\big(\cM_{3}\left( X[k+1,n], \cM_{2}(\cM_{1}(X)) \right)\big)
$$
Therefore, the event that $|q(\cP)-q(X)|>\eps$ is a function of $(D,X[k+1,n],r)$, and the random coins of $\cA$. Hence, by Claim~\ref{clm:smallSD}, this event happens with roughly the same probability even when we replace $s$ with the uniform string $\tilde{s}$. In that case, by Claim~\ref{claim:PRG2}, we know that this event happens with probability at most $\delta/2$.
\end{proof}

Theorem~\ref{thm:noCompositionComp} now follows from the above analysis by composing $\texttt{PRG-Encrypermute}_{k}$ with the algorithm that outputs the first $k$ elements of its input $X \in (\zo^{d})^n$.  The analysis is similar to that of Theorem~\ref{thm:noComposition}.

\addcontentsline{toc}{section}{References}
\bibliographystyle{plainnat}
\bibliography{refs}

\end{document}